\theoremstyle{plain}
\newtheorem{theorem}{Theorem}[section]
\newtheorem{proposition}[theorem]{Proposition}
\newtheorem{lemma}[theorem]{Lemma}
\theoremstyle{definition}
\newtheorem{definition}[theorem]{Definition}
\newtheorem{assumption}[theorem]{Assumption}
\theoremstyle{remark}
\newtheorem{remark}[theorem]{Remark}
\newtheorem{example}{Example}
\newcommand{\rbr}[1]{\left(#1\right)}
\newcommand{\sbr}[1]{\left[#1\right]}
\newcommand{\cbr}[1]{\left\{#1\right\}}
\newcommand{\R}{\mathbb{R}}
\newcommand{\N}{\mathbb{N}}
\newcommand{\mA}{\mathcal{A}}
\newcommand{\mH}{\mathcal{H}}
\newcommand{\mN}{\mathcal{N}}
\newcommand{\mX}{\mathcal{X}}
\newcommand{\Ep}{\mathbb{E}}
\renewcommand{\Pr}{\mathbb{P}}
\renewcommand{\hat}{\widehat}
\renewcommand{\tilde}{\widetilde}
\newcommand{\argmin}{\operatornamewithlimits{argmin}}
\newcommand{\argmax}{\operatornamewithlimits{argmax}}
\newcommand{\mone}{\textbf{1}}
\def\bX{\mathbf{X}}
\def\bY{\mathbf{Y}}
\def\bI{\mathbf{I}}
\newcommand{\trace}{\mathrm{tr}}
\DeclareMathOperator\E{\mathbb{E}}
\DeclareMathOperator{\Var}{Var}
\newcommand{\secref}[1]{Section~\ref{#1}}
\newcommand{\algoref}[1]{Algorithm~\ref{#1}}
\newcommand{\asmpref}[1]{Assumption~\ref{#1}}
\newcommand{\thmref}[1]{Theorem~\ref{#1}}
\newcommand{\lemref}[1]{Lemma~\ref{#1}}
\newcommand{\exref}[1]{Example~\ref{#1}}
\newcommand{\Ich}{I(t)}
\newcommand{\Xit}{X^{(i)}(t)}
\newcommand{\Xarmt}[1]{X^{(#1)}(t)}
\newcommand{\Xt}{X^{(I(t))}(t)}
\newcommand{\Yt}{Y^{(I(t))}(t)}
\newcommand{\Xin}[1]{X^{(i)}_{#1}}
\newcommand{\Yin}[1]{Y^{(i)}_{#1}}
\newcommand{\bfXi}{\mathbf{X}^{(i)}}
\newcommand{\bA}{\mathbf{A}}
\newcommand{\bfYi}{\mathbf{Y}^{(i)}}
\newcommand{\Reg}{R}
\DeclareRobustCommand{\erase}{\bgroup\markoverwith{\textcolor{red}{\rule[.5ex]{2pt}{0.4pt}}}\ULon}
\title{High-dimensional Nonparametric Contextual Bandit Problem}
\author{Shogo Iwazaki$^1$ \and Junpei Komiyama$^{2,4}$ \and Masaaki Imaizumi$^{3,4}$}
\address{$^1$MI-6 Ltd., $^2$New York University, $^3$The University of Tokyo, $^4$RIKEN AIP}
\begin{document}

\maketitle

\begin{abstract}
We consider the kernelized contextual bandit problem with a large feature space. This problem involves $K$ arms, and the goal of the forecaster is to maximize the cumulative rewards through learning the relationship between the contexts and the rewards. It serves as a general framework for various decision-making scenarios, such as personalized online advertising and recommendation systems. Kernelized contextual bandits generalize the linear contextual bandit problem and offers a greater modeling flexibility. Existing methods, when applied to Gaussian kernels, yield a trivial bound of $O(T)$ when we consider $\Omega(\log T)$ feature dimensions. To address this, we introduce stochastic assumptions on the context distribution and show that no-regret learning is achievable even when the number of dimensions grows up to the number of samples. Furthermore, we analyze lenient regret, which allows a per-round regret of at most $\Delta > 0$. We derive the rate of lenient regret in terms of $\Delta$.
\end{abstract}

\section{Introduction}

We consider the multi-armed bandit problem \citep{robbins1952,Lairobbins1985}, which involves $K$ arms (actions), each yielding a reward when selected. In this setting, a learner (or decision-maker) must choose one arm at each round with the goal of maximizing the sum of rewards. To do so, the learner must balance two competing objectives: exploring all arms to refine reward estimates and exploiting the knowledge gained to maximize immediate rewards. This trade-off introduces the fundamental exploration versus exploitation dilemma that the learner must resolve.
In particular, the contextual version of this problem involves pre-action context, and the learner attempts to learn the relationship between the contexts and rewards.
The introduction of such contexts enables us to apply this model to many realistic settings, such as recommendation systems \citep{li2010}, online advertising \citep{tang13}, and personalized treatments \citep{Chakraborty2014}. 

The stochastic contextual bandit problem originated from associative reinforcement learning \citep{Strehl2010} with linear probabilistic concept \citep{abe1999}, where rewards are modeled as noisy linear functions of contexts. The term ``contextual bandit'' was later coined by \citep{LangfordZ07}. 
Subsequent research in linear contextual bandits \citep{li2010,chu11a,abbasi2011improved,DBLP:conf/icml/AgrawalG13,DBLP:conf/aaai/DimakopoulouZAI19,DBLP:conf/aistats/HaoLS20} pushed the frontier of contextual bandits. 

The largest limitation of linear contextual bandit algorithms is that it cannot deal with interactive terms unless handcrafting them into the features. Kernelized contextual bandit algorithms can overcome the limitation of linear models by defining a kernel function $K(X, X')$ that measures the similarity between two features $X$ and $X'$. It learns highly nonlinear patterns without considering all the combinations of interactive terms. 
Existing works of kernelized contextual bandit problems \citep{DBLP:conf/icml/SrinivasKKS10,krause2011contextual,srinivas2012information,valko2013finite} built the problem complexity in terms of the maximum information gain \cite{DBLP:conf/icml/SrinivasKKS10}.
These works introduced algorithms of regret $\tilde{O}(\sqrt{\gamma_T T})$, where $\gamma_T$ is the maximum information gain that depends on the kernel. \citet{DBLP:conf/icml/SrinivasKKS10} further provided the value of $\gamma_T$ for linear, RBF, and Matérn kernels. 
While such an analysis can provide a principled way to build algorithms based on optimism under uncertainty, their regret guarantee is unsatisfying when it comes to popular kernels, such as the Gaussian kernel. For the Gaussian kernel, the maximum information gain is $\gamma_T = O((\log T)^{d+1})$,
which grows much faster than $T$. For example, when $d = \Omega(\log T)$, then $\gamma_T = \Omega(T)$, which results in a trivial linear regret bound. 
This poses a severe limitation on the practical applicability of kernelized contextual bandit algorithms, even for moderately-sized feature spaces. For instance, in a personalized recommendation system with just a hundred features, traditional kernelized contextual bandits would require more than $10^{40}$ samples to have a non-trivial regret bound.

In summary, existing work on contextual bandits has been restricted to linear models or kernel-based approaches that operate in at most $O(\log T)$ dimensions. 
In this paper, we introduce an alternative tool for bounding the regret through the theory of high-dimensional kernel regression. 
When the number of samples used for the estimation is proportional to the number of dimensions, an interpolation estimator, which fits the data perfectly, has a good generalization property through an implicit regularization \citep{el2010spectrum,liang2020just}. 
Building upon this promising property of the interpolation estimator, this paper adapts it to the contextual bandit setting. We propose a conceptually simple explore-then-commit algorithm leveraging this estimator and provide a novel regret analysis framework for high-dimensional kernelized contextual bandits.

The following summarizes our main contributions:
\begin{itemize}

\item The first contribution is on the kernel interpolation estimator. We derive the accuracy of the interpolating estimator in terms of the spectral property of the contexts, and we show three classes of covariates where consistent learning is achieved. Two of them are the sparsity on the context, and the other class assumes the decay rate of the context. These are extracted from \citet{liang2020just} and are stronger than that in the following two senses. (a) While \citet{liang2020just} only analyzes the nonlinear activation of the inner-product class of kernels, we also analyze the RBF class of kernels by using the results in \citet{el2010spectrum} in a more explicit form in terms of convergence rate. These two classes of kernels cover most of the popular kernels as described in Section \ref{subsec_assumption}.
Moreover, (b) unlike \citet{liang2020just}, we allow the confidence level to vary on $d$, which will be discussed in We have a discussion on this in Appendix \ref{sec:detail_theorem}. These results have its own theoretical interest that is independent of the bandit problem.

\item The second contribution is on the introduction of a bandit algorithm with a sublinear regret. The amount of exploration period is carefully chosen so that regret is minimized. For one class of sparse covariates, we show the feasibility of no-regret learning. Moreover, for all three classes of covariates, we show that the regret-per-round can be arbitrarily small and derive a version of lenient regret, which tolerates the forecaster to have an expected regret-per-round $\Delta > 0$.

\item The final contribution is the empirical verification of these theoretical results. We conducted numerical situations to verify the advantage of the proposed method over the existing kernelized bandit algorithms \citep{krause2011contextual,srinivas2012information}.
We also tested linear interpolation algorithms \cite{bartlett2020benign,DBLP:conf/nips/KomiyamaI23} and suggested the value of non-linear modeling in a high-dimensional setting.

\end{itemize}

This paper is organized as follows.
We start with the problem setting (Section \ref{sec_prelim}) and then introduce the explore-then-commit algorithm (Section \ref{sec_etc}). These are followed by the analysis section (Sections \ref{sec_analysis}). The analysis section first discusses the assumptions on the covariates. We then introduce the estimation error of the interpolation estimator. Similarly to low-dimensional models, errors in high-dimensional models are decomposed into the bias term and the variance terms. The sum of these terms characterizes the error of the interpolation estimator. After the evaluation of such an error as a function of the exploration period, we discuss the desired balance between exploration and exploitation in the explore-then-commit algorithm. 
A reader interested in practical performance may skip this section to reach the following numerical simulations (Section \ref{sec_simulation}), which verify the strong performance of EtC with properly tuned exploration period despite of its simplicity.
Finally, our discussion (Section \ref{sec_discuss}) concludes the paper.

\subsection{Related work}\label{subsec_related}

\textbf{Low-dimensional models:}
Research on generalized linear contextual bandits spans several studies \citep{DBLP:conf/icml/LiLZ17,DBLP:journals/corr/abs-2003-10113,DBLP:journals/csysl/RenZK22,DBLP:journals/mor/BlanchetXZ24}. 
 Building on neural-tangent-kernel theory, \citet{DBLP:conf/icml/Zhou0G20,zhang2021neural} introduced NeuralUCB and Neural Thompson sampling. 
For these algorithms to learn consistently, their parameter must be $O(\sqrt{T})$.

\textbf{High-dimensional models in linear bandits:}
Achieving consistency in high-dimensional problems typically demands extra structural assumptions. One widely studied approach is sparsity, where most coefficients are zero; see, e.g., \citet{wang2018minimax,kim2019doubly,bastani2020online,hao2020high,oh2021sparsity,li2022simple,JangZJ22}. Sparse models are consistent when the majority of the coefficients are zero.
Outside the sparse regime, “benign overfitting’’ theory \citep{bartlett2020benign} helps explain why dense, high-dimensional models can still generalize well. When the feature space has a moderate effective rank, unregularized linear regression \citep{bartlett2020benign} and ridge regression \citep{tsigler2020benign} are consistent, regardless of the ambient dimension $d$.
Building on this theory, \citet{DBLP:conf/nips/KomiyamaI23} showed that, in high-dimensional linear contextual bandits, an explore-then-commit strategy is provably consistent when regret is measured with respect to these effective dimensions.

\textbf{High-dimensional models in kernel bandits:} 
By imposing additional structural assumptions for the kernel-based model, several existing works tackle a high-dimensional setting in kernel bandit. \citet{chen2012joint,djolonga2013high} consider the setting where the effective dimension of the underlying reward function is low. \citet{kandasamy2015high} assume that the reward function has an additive form of several independent low-dimensional functions. These existing studies only work under specific assumptions on the reward function. 
These works are somewhat orthogonal to this work. This work imposes assumptions on the feature distribution but is quite flexible in the form of the reward function. Finally, we would like to remark that there are significant body of work on high-dimensional Bayesian optimization (global optimization with kernel modeling) algorithms many of which do not come with theoretical bounds~\citep[e.g.,][]{wang2016bayesian,eriksson2019scalable,letham2020re,eriksson2021high}.

\section{Preliminary}\label{sec_prelim}

\subsection{Notation}
For $z \in \N$, $[z] := \{1,2,\dots,z\}$.
For $x \in \R$, the notation $\lfloor x \rfloor$ here denotes the largest integer that is less than or equal to a scalar $x$. 
For vectors $X,X' \in \R^p$, $\langle X,X'\rangle := X^\top X'$ is an inner-product, $\|X\|_2^2 := \langle X,X \rangle$ is an $\ell 2$-norm.
For a positive-definite matrix $A \in \R^p \times \R^p$, $\|X\|_\bA^2 := \langle X, \bA X \rangle$ is a weighted $\ell 2$-norm.
$\|\bA\|_{\mathrm{op}}$ denotes an operator norm of $\bA$. 
For a square matrix $\bA$, $\lambda_j(\bA)$ is the $j$-th largest eigenvalue of $\bA$.
$1_d := (1,1,...,1)^\top \in \R^d$ is a vector whose elements are $1$.
$O(\cdot), o(\cdot), \Omega(\cdot), \omega(\cdot)$ and $\Theta(\cdot)$ denotes Landau's Big-O, little-o, Big-Omega, little-omega, and Big-Theta notations, respectively.
$\tilde{O}(\cdot),\tilde{\Omega}(\cdot)$, and $\tilde{\Theta}(\cdot)$ ignore polylogarithmic factors.

\subsection{Problem setup}

This paper considers a nonparametric contextual bandit problem with $K$ arms.
We consider the fully stochastic setting, where the contexts, as well as the rewards, are drawn from fixed distributions.
Let $\Omega \subset \R^d$ be a compact domain. 
For each round $t \in [T]$ and arm $i \in [K]$, we define $\Xit$ as a $\Omega$-valued random vector.
We assume $\Xit$ is independent and identically distributed among rounds (i.e., vectors in two different rounds $t,t'$ are independent and identically distributed) but allow vectors $\Xarmt{1},\Xarmt{2},\dots,\Xarmt{K}$ to be correlated with each other.
The forecaster chooses an arm $\Ich \in [K]$ based on the $\Xit$ values of all the arms, and then observes a reward that follows a linear model as shown in\begin{align}
\Yt = f_\ast^{(I(t))}( \Xt) + \xi(t), \label{eq:linear_model}
\end{align}
where $f_\ast^{(i)}: \R^d \to \R$ be an unknown function for  $i \in [K]$.
The independent noise term $\xi(t)$ has zero mean and bounded variance $\Var[\xi(t)] \leq \sigma^2$ with some $\sigma \geq 0$. For the sake of simplicity, we assume that it does not depend on the choice of the arm. However, our results can be extended to the case where $\xi(t)$ varies between arms.

Given the contexts, we define $i^{\ast}(t) \in \argmax_{i \in [K]} f_\ast^{(i)}(\Xit)$ as the (ex ante) optimal arm at round $t$.
Our goal is to design an algorithm that maximizes the total reward, which is equivalent to minimizing the expected regret: $\Reg(T) := \sum_{t=1}^T r(t)$, where:
\begin{equation*}
r(t) = \Ep \left[ f_*^{(i^\ast(t))}( \Xarmt{i^\ast(t)}) \right] - \Ep \left[f_\ast^{(\Ich)}( \Xt) \right]
\label{def:regret}
\end{equation*}
Here, the expectation is taken with respect to the randomness of the history of previous rounds, current contexts, and with (possibly randomized) choice of arm $\Ich$.

In this paper, we focus on the high-dimensional setting where the dimension $d$ is allowed to increase with the total step size $T$ under the moderate size of the number $K = O(1)$ of arms.

\section{Explore-then-Commit with Kernel Interpolation}
\label{sec_etc}

The \textit{Explore-then-Commit} (EtC) algorithm is a well-known approach for solving high-dimensional linear bandit problems, and it has been shown to be effective in previous studies such as \citet{hao2020high,li2022simple,DBLP:conf/nips/KomiyamaI23}. The EtC algorithm operates by first conducting $T_0 = NK < T$ rounds of exploration, during which it uniformly explores all available arms to construct an estimator $\hat{f}^{(i)}$ for each arm $i \in [K]$. After the exploration phase, the algorithm proceeds with exploitation.
Let $N$ be the number of the draws of arm $i$, and $\bfXi = (\Xin{1},...,\Xin{N})^\top \in \R^{N \times d}$ and $\bfYi = (\Yin{1},...,\Yin{N})^\top \in \R^{N}$ be the observed contexts and rewards of arm $i$, where $(\Xin{n},\Yin{n})$ is the corresponding values on the $n$-th draw of arm $i$.
Since we choose $\Ich$ uniformly during the exploration phase, these are independent and identically drawn samples.

\begin{algorithm}[htbp]
\caption{Explore-then-Commit (EtC)}\label{alg:etc} 
\begin{algorithmic}
\REQUIRE Exploration duration $T_0$.
\FOR{$t=1,.., T_0$}
    \STATE Observe $\Xit$ for all $i \in [K]$.
    \STATE Choose $\Ich = t - K\lfloor t/K \rfloor$. 
    \STATE Receive a reward $\Yt$.
    
\ENDFOR
\FOR{$i \in [K]$}
    \STATE Calculate $\hat{f}^{(i)}(\cdot)$ as \eqref{def:interp}
\ENDFOR
\FOR{$t = T_0 + 1,...,T$}
    \STATE Observe $\Xit$ for all $i \in [K]$.
    \STATE Choose $\Ich \in \argmax_{i \in [K]}  \hat{f}^{(i)}(\Xit)$.
    \STATE Receive a reward $\Yt$.
\ENDFOR
\end{algorithmic}
\end{algorithm}

For estimating the parameter $f_*^{(i)}$, we consider a kernel regression approach by \citet{liang2020just}.
Let $K:\Omega \times \Omega \to \R$ be a positive definite kernel function, and $\mH$ be a corresponding reproducing kernel Hilbert space (RKHS).
We also define a kernel matrix $K(\bfXi, \bfXi) \in \R^{N \times N}$ with its $(j,j')$-th value as $K(X_j^{(i)}, X_{j'}^{(i)})$.
Then, we consider a minimum-norm interpolator, which perfectly fits the observed data, as
\begin{align}
    &\hat{f}^{(i)} = \argmin_{f \in \mH} \| f\|_\mH, \label{def:interp}\\
    &\mathrm{~s.t.~} \forall j \in [N],~f(X_j^{(i)}) = Y_j^{(i)}. \notag 
\end{align}
Using the interpolator \eqref{def:interp} as an estimator, we present our EtC algorithm based on this estimator in \algoref{alg:etc}.

The interpolator has an explicit form as $\hat{f}^{(i)}(\cdot) =  K(\cdot, \bfXi) K(\bfXi, \bfXi)^{-1} \bfYi$.
The invertibility of $K(\bfXi, \bfXi)$ is always satisfied as long as we employ the class of positive definite kernels $K(\cdot, \cdot)$ (see Section 4 in \cite{christmann2008support} for introduction).
The computational cost of the inverse matrix when using this method can be reduced using methods such as the Nystr\"om approximation \citep{drineas2005nystrom}.
Also, the optimization in \eqref{def:interp} is solved by a convex minimization by coefficients (e.g., see Section 9 of \cite{scholkopf2002learning}), so it is easily computed using algorithms such as the gradient descent algorithm, which takes $O(N)$ computational complexity.

\section{Theoretical Analysis}\label{sec_analysis}

This section analyzes an estimator error and a regret in terms of $d, T$ and covariance $\Sigma_d^{(i)}$. Specifically, we provide some assumptions, then study the estimation error and the regret.

\subsection{Assumption and Kernels}\label{subsec_assumption}

\begin{assumption}[Basic condition] \label{asmp:basic}
For each arm $i\in[K]$, the following hold:
\begin{enumerate}
  \setlength{\parskip}{0cm}
  \setlength{\itemsep}{0cm}
    \item[(i)] A reward function $f_\ast^{(i)}$ is an element of RKHS $\mH$ and satisfies norm $\|f_\ast^{(i)}\|_\mH \leq B < \infty$ with some $B$. 
    \item[(ii)] The covariance matrix $\Sigma_d^{(i)} \coloneqq \E[X_1^{(i)}X_1^{(i)\top}]$ of $X_1^{(i)}$ satisfies $\|\Sigma_d^{(i)}\|_{\mathrm{op}} \leq 1$.
    \item[(iii)] The $\R^{d}$-valued random vector $Z^{(i)}$ is independent among their coordinates, where $Z^{(i)} = (\Sigma_d^{(i)})^{-1/2}X_1^{(i)}$. 
        Furthermore, assume that each element of $Z^{(i)}$ has zero mean, unit variance, and there exists absolute constant $C > 0$ such that $\max_{j \in [d]}|Z_j^{(i)}| \leq C $ holds almost surely.
\end{enumerate}
\end{assumption}
This assumption ensures (i) the reward functions are sufficiently smooth as commonly assumed in the kernel bandit studies \citep{srinivas2012information,chowdhury2017kernelized}, (ii) the raw contexts have controlled spectral magnitude, and (iii) after whitening the contexts, one obtains independent, bounded coordinates with unit variance—properties that jointly facilitate concentration and spectral analyses, which is widely used in the dense high-dimensional settings \citep{bartlett2020benign,tsigler2020benign,liang2020just}.

We consider two families of kernels that we can use in our analysis following \citet{liang2020just,el2010spectrum}.
These families include most of the commonly used kernel functions.
\begin{definition}[Class of kernels] \label{def:kernel_class}
We define the following families of a kernel function $K$ with normalization:  for any $x \in \Omega$, $K(x, x) \leq 1$.
    \begin{enumerate}
  \setlength{\parskip}{0cm}
  \setlength{\itemsep}{0cm}
      \setlength{\leftskip}{-0.7cm}
        \item \textbf{Inner-product class}: With $h \in C^3$ satisfying $h(0)>0$, $h'(0)>0$, and $\min_{a\in[0,1]}h''(a)\ge \underline{c}$ with some $\underline{c}$, the class of kernels is defined as 
   $$
     K(x,x') = h\Bigl(\frac{\langle x,x'\rangle}{d}\Bigr).
   $$
        This class includes, for example,  (i) a linear kernel $h(t)=t$, (ii) all polynomial kernels $h(t)=(t+c)^p, c\in \R, p \in \N$, and (ii) more general Schoenberg kernels $h(t) = \sum_{\ell=0}^\infty a_\ell C_\ell^{((d-2) / {2})}(t)$ on the sphere with the Gegenbauer polynomial $C_\ell^{(\alpha)}(\cdot)$ and coefficients $a_\ell\ge0$.
        \item \textbf{Radial-basis function (RBF) Class}: With $h \in C^3$ satisfies $h(x) > 0, h'(x) < 0, \forall x \in [0, 2]$, and $h'(\cdot)$ is monotonically increasing, the class of kernels is defined as
   $$
     K(x,x') = h\Bigl(\frac{\|x-x'\|^2}{d}\Bigr).
   $$
    This class encompasses (i) the Gaussian kernel $h(t)=\exp(-\gamma t)$,  $\gamma > 0$, (ii) the Laplace kernel $h(t)=\exp(-\gamma\sqrt{t})$, (iii) the rational quadratic kernel $h(t) = (1 + {t} / {2\alpha \gamma^2})^{-\alpha}$, $\alpha > 0$, (iv) the Mat\'ern kernel $h(t) = {2^{1-\nu}}{\Gamma(\nu)^{-1}} ({\sqrt{2\nu}\sqrt{t}} / {\ell} )^\nu
    K_\nu ({\sqrt{2\nu}\sqrt{t}}/  {\ell}), \ell, \nu \geq 2$, and many other kernels.
    \end{enumerate}
\end{definition}

\begin{remark}{\rm (Necessity of Scaling in  kernels)}
Here, scaling $1/d$ in kernel $h(\cdot/d)$ is important for both kernels because the inner-product or norm of a \(d\)-dimensional vector \(d\) grows linearly to $d$. By dividing by \(d\), the input into \(h\) is kept normalized.
This scaling method is widely used in high-dimensional research, as seen in \citet{el2010spectrum} and differs from existing studies on kernelized contextual bandit algorithms, which assume a fixed scaling factor \citep{srinivas2012information,vakili2021information}.
In Appendix~\ref{sec:summary_kb}, we confirm that existing information gain-based analysis with a fixed scaling factor results in a trivial $O(T)$ regret in our setup.
\end{remark}

For a dense high-dimensional setting, the properties of the covariance matrices $\Sigma_d^{(i)}$ are crucial, and they matter the rate of convergence. Existing literature, as well as this paper, assumes that the covariance matrices are functions of the size of the instance. Namely, we consider the sequence of problem instances $(d, (\Sigma_d^{(i)})_{i \in [K]}, T)$ where $d,T$ are nondecreasing. Our primary focus is on whether the problem achieves no regret. Namely, the regret is $o(T)$ with respect to the sequence.

\subsection{Estimation error analysis}\label{subsec_esterror}

We study the estimation error of the function $\hat{f}$ by the kernel method (Eq.~\eqref{def:interp}), utilizing the analysis of minimum-norm interpolating estimator \citep{liang2020just}. 
Specifically, we study the error bounds of $\hat{f}^{(i)}$ in the following norm
\begin{align}
    \|\hat{f}^{(i)} - f_\ast^{(i)}\|_{L_i^2}^2 = \int_{\Omega} ({\hat{f}^{(i)}(x) - f_\ast^{(i)}(x)})^2 P_{X_1^{(i)}}(\mathrm{d}x).
\end{align}
For brevity, we write the kernel gram matrix as $K_X^{(i)} = K(\bfXi, \bfXi)$.
Define $\hat{\Sigma}_d^{(i)} := {\bX^{(i)}\bX^{(i)\top}} / {d}$.

\begin{definition}[kernel parameter sequence] \label{def:kernel_parameters}
We define positive sequences $(\alpha_d^{(i)})_{d}$, $(\beta_d^{(i)})_{d}$, and $(\gamma_d^{(i)})_{d}$ with $\tau_d^{(i)} \coloneqq 2\mathrm{tr}(\Sigma_d^{(i)}) / d$ as follows:
\begin{enumerate}
  \setlength{\parskip}{0cm} 
  \setlength{\itemsep}{0cm}
    \item[(i)] the inner-product class case: $\alpha_d^{(i)} = h(0) + h''(0) \mathrm{Tr}({(\Sigma_d^{(i)})^2})/d^2$, $\beta_d^{(i)} = h'(0) \coloneqq \beta$, and $\gamma_d^{(i)} = h(\tau_d^{(i)}/2) - h(0) - h'(0)\tau_d^{(i)}/2$.
    \item[(ii)] the RBF class case: $\alpha_d^{(i)} = h(\tau_d^{(i)})  + 2h''(\tau_d^{(i)})\mathrm{Tr}({(\Sigma_d^{(i)})^2})/d^2, 
            \beta_d^{(i)} = -2h'(\tau_d^{(i)})$, and $
                \gamma_d^{(i)} = h(0) + \tau_d^{(i)} h'(\tau_d^{(i)}) - h(\tau_d^{(i)}).$
\end{enumerate}
\end{definition}
Intuitively, this sequence corresponds to the coefficients obtained when performing a Taylor expansion of the Gram matrix, composed of each kernel, around the empirical covariance matrix. A more detailed derivation is provided in Appendix~\ref{sec:detail_theorem}.

In the following, we introduce two quantities that characterize the error. Here, we assume the error of the interpolation estimator with $N$ data points. In EtC (Section \ref{sec_etc}), we uniformly explore over the $K$ arms, and thus the number of exploration rounds corresponds to $NK$.
\begin{definition}[Effective bias/variance for kernel gram matrix]
Let $(\alpha_d^{(i)})$, $(\beta_d^{(i)})$, and $(\gamma_d^{(i)})$ be the sequences in Definition \ref{def:kernel_parameters}.
Then, we define an effective variance $\mathcal{V}_{d,N}^{(i)}$ and bias $\mathcal{B}_{d,N}^{(i)}$ of the estimator $\hat{f}^{(i)}$  follows:
    \begin{align}
        &\mathcal{V}_{d,N}^{(i)} \coloneqq  d^{-1} \sum_{j=1}^N \frac{\lambda_j\rbr{\hat{\Sigma}_d^{(i)}}}{\sbr{{\gamma_d^{(i)}}/{\beta_d^{(i)}} + \lambda_j\rbr{\hat{\Sigma}_d^{(i)}} }^2}, \mbox{~~~and~~~}
        \mathcal{B}_{d,N}^{(i)} \coloneqq \inf_{0 \leq k \leq N}\sbr{\frac{1}{N} \sum_{j > k} \lambda_j (K_X^{(i)}) + 2 \sqrt{\frac{k}{N}}}.
    \end{align}
\end{definition}
Here, $\mathcal{V}_{d,N}^{(i)}$ corresponds to the variance of the estimation error, and $\mathcal{B}_{d,N}^{(i)}$ corresponds to the bias. $\mathcal{V}_{d,N}^{(i)}$ is expressed as the sum of the ratios of the eigenvalues $\lambda_j(\hat{\Sigma}_d^{(i)})$ of the Gram matrix $\hat{\Sigma}_d^{(i)}$, which corresponds to the variance component in the ordinary ridge regression~\citep[e.g.,][]{tsigler2020benign}. $\mathcal{B}_{d,N}^{(i)}$ is represented as the sum of eigenvalues $\lambda_j (K_X^{(i)}), j > k$ above a certain threshold $k$, which explains the magnitude of the components that the model ignores during estimation.

The following theorem bounds the error in terms of the bias and the variance above. 
\begin{theorem}[Estimation error] \label{thm:inner-product}
Consider a kernel $K$ from Definition \ref{def:kernel_class}.
Fix any $i \in [K]$.
Suppose $c_L \leq d/N \leq c_U$ holds with some universal constants $c_L, c_U \in (0, \infty)$.
Suppose that, in the case of the RBF class, $\|X_1^{(i)}\| = o(d^{-c})$ and $\gamma_d^{(i)} \geq \underline{c} \mathrm{Tr}(\Sigma_d^{(i)})/d$ hold for sufficiently large $d$ and  some $c > 0$.
Then, under Assumption \ref{asmp:basic}, with probability at least $1 - O(d^{-\nu})$ with some $\nu \in (0,1)$, the following inequality holds with some constant $C>0$ and sufficiently large $d$:
    \begin{equation}\label{ineq_errorbound}
    \begin{split}
                &\Ep [\|\hat{f}^{(i)} - f_\ast^{(i)}\|_{L_i^2}^2 \mid \bX^{(i)}]  \leq C \left( \mathcal{V}_{d,N}^{(i)} + \mathcal{B}_{d,N}^{(i)} \right) + \tilde{O}(d^{-1} + N^{-1/2}).
    \end{split}
    \end{equation}
\end{theorem}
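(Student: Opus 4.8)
The plan is to reduce the kernelized estimation problem to a linear ridge-type problem in an appropriate feature space, via a Taylor expansion of the kernel Gram matrix around a rank-one-plus-scaled-identity reference matrix, following the strategy of \citet{liang2020just,el2010spectrum}. Concretely, I would first show that under Assumption~\ref{asmp:basic}, with the $1/d$ scaling, the Gram matrix $K_X^{(i)}$ concentrates around $\beta_d^{(i)} \hat{\Sigma}_d^{(i)} + \gamma_d^{(i)} \mathbf{1}_N \mathbf{1}_N^\top / \text{something} + (\text{const})\cdot I_N$, i.e.\ the kernel linearizes. For the inner-product class this is a Taylor expansion of $h(\langle X_j, X_{j'}\rangle/d)$ around $0$ for off-diagonal entries (using $\langle X_j, X_{j'}\rangle/d = O(d^{-1/2})$ by the boundedness and independence of whitened coordinates) and around $\tau_d^{(i)}/2$-type quantities on the diagonal; for the RBF class it is the analogous expansion of $h(\|X_j - X_{j'}\|^2/d)$, using the extra hypotheses $\|X_1^{(i)}\| = o(d^{-c})$ and $\gamma_d^{(i)} \geq \underline{c}\,\mathrm{Tr}(\Sigma_d^{(i)})/d$ to control the residual. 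The coefficients $\alpha_d^{(i)}, \beta_d^{(i)}, \gamma_d^{(i)}$ from Definition~\ref{def:kernel_parameters} are exactly the Taylor coefficients, so the error of this linearization is $\tilde{O}(d^{-1} + N^{-1/2})$ in operator norm after suitable normalization — this is where the $\tilde{O}$ term in \eqref{ineq_errorbound} comes from.

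Second, having replaced $K_X^{(i)}$ by its linearization, I would write the interpolator $\hat{f}^{(i)}(x) = K(x,\bfXi) (K_X^{(i)})^{-1}\bfYi$ and analyze $\|\hat{f}^{(i)} - f_\ast^{(i)}\|_{L_i^2}^2$ by the standard bias–variance decomposition: plugging $\bfYi = f_\ast^{(i)}(\bfXi) + \bm{\xi}$ splits the error into a deterministic (bias) term driven by $f_\ast^{(i)}(\bfXi)$ and a stochastic (variance) term driven by the noise $\bm{\xi}$ with $\Var[\xi]\leq\sigma^2$. For the variance term, conditioning on $\bX^{(i)}$ and taking expectation over $\bm{\xi}$ yields a trace expression in the eigenvalues of the (linearized) Gram matrix and the population kernel operator; the $\gamma_d^{(i)}/\beta_d^{(i)}$ ratio plays the role of an effective ridge parameter, and the resulting bound is exactly $\mathcal{V}_{d,N}^{(i)} = d^{-1}\sum_j \lambda_j(\hat{\Sigma}_d^{(i)}) / [\gamma_d^{(i)}/\beta_d^{(i)} + \lambda_j(\hat{\Sigma}_d^{(i)})]^2$ up to constants, mirroring the ridge-regression variance formula of \citet{tsigler2020benign}. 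For the bias term, I would use the fact that $f_\ast^{(i)} \in \mathcal{H}$ with $\|f_\ast^{(i)}\|_{\mathcal{H}}\leq B$ together with a tail-eigenvalue truncation of $K_X^{(i)}$ at level $k$, optimized over $0\leq k\leq N$, to get $\mathcal{B}_{d,N}^{(i)} = \inf_k [\frac{1}{N}\sum_{j>k}\lambda_j(K_X^{(i)}) + 2\sqrt{k/N}]$; this is the RKHS-norm-controlled interpolation bias, where the "head" $k$ directions are fit exactly and the "tail" directions contribute their eigenvalue mass.

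Third, I would collect the pieces: the linearization error contributes the $\tilde{O}(d^{-1} + N^{-1/2})$ slack, the noise contributes $C\,\mathcal{V}_{d,N}^{(i)}$, and the approximation/interpolation of $f_\ast^{(i)}$ contributes $C\,\mathcal{B}_{d,N}^{(i)}$, and since $c_L \leq d/N \leq c_U$ all the normalizations are comparable and constants can be absorbed. The high-probability statement "with probability at least $1 - O(d^{-\nu})$" tracks the events on which (a) the Gram-matrix concentration/linearization holds and (b) the relevant empirical eigenvalues are close to their population counterparts, each of which is a matrix-concentration argument (e.g.\ via the Bai–Yin / Marchenko–Pastur-type bounds underlying \citet{el2010spectrum}, or matrix Bernstein) using the bounded whitened coordinates from Assumption~\ref{asmp:basic}(iii).

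I expect the main obstacle to be controlling the kernel linearization error \emph{uniformly} and in the right norm — in particular, showing that the off-diagonal entries $h(\langle X_j,X_{j'}\rangle/d) - \alpha_d^{(i)} - \beta_d^{(i)}\langle X_j,X_{j'}\rangle/d$ (resp.\ the RBF analogue) sum, as an $N\times N$ matrix, to something of operator norm $o(1)$ after the $\beta_d^{(i)}$-normalization, given only third-order smoothness of $h$ and the bounded-coordinate assumption. This requires carefully separating diagonal from off-diagonal contributions (the diagonal is a scaled identity shift that becomes the effective ridge term $\gamma_d^{(i)}$), handling the rank-one $\mathbf{1}_N\mathbf{1}_N^\top$ direction separately since it has $\Theta(N)$ operator norm and must not be naively bounded, and ensuring the residual from the second-order Taylor term is genuinely lower-order — this is precisely the delicate step where the extra RBF hypotheses ($\|X_1^{(i)}\| = o(d^{-c})$ and $\gamma_d^{(i)} \geq \underline{c}\,\mathrm{Tr}(\Sigma_d^{(i)})/d$) are needed to rule out degeneracies, and where the bulk of the technical work (deferred to Appendix~\ref{sec:detail_theorem}) resides.
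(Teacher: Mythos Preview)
Your proposal is correct and follows essentially the same route as the paper: linearize the Gram matrix via the Taylor/El~Karoui expansion to $K^{\mathrm{lin}} = \gamma_d^{(i)} I + \alpha_d^{(i)} 11^\top + \beta_d^{(i)}\hat{\Sigma}_d^{(i)}$ (plus extra low-rank terms in the RBF case), run the bias--variance decomposition of \citet{liang2020just} with $\gamma_d^{(i)}/\beta_d^{(i)}$ as the implicit ridge, and peel off the rank-one $11^\top$ direction via Weyl's inequality to reach $\mathcal{V}_{d,N}^{(i)}$. One small slip: in your linearization you attached $\gamma_d^{(i)}$ to $11^\top$ and an unnamed constant to $I_N$, but it is the reverse---$\gamma_d^{(i)}$ multiplies $I_N$ (which is precisely why $\gamma_d^{(i)}/\beta_d^{(i)}$ acts as the ridge) and $\alpha_d^{(i)}$ multiplies $11^\top$.
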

Here, $\nu \in (0,1)$ is an existing constant depending on $\gamma_d^{(i)}, \tau_d^{(i)}$, and $c$ in Definition \ref{def:kernel_class}.
A detailed description will be provided in Section \ref{sec:detail_theorem}.

The magnitude of these two terms depends on the spectral property of the covariance matrices. In the following, we present several examples of the benign covariance and kernel gram matrix in which the bias and variances are explicitly controlled. 
\begin{example}[Setups: Section 4 in \citet{liang2020just}]
\label{ex:liang_sec4}

Here, with a constant $\delta > 0$, we consider a limit $d,N \to \infty$ while $c_L \delta \leq  d/N \leq c_U \delta$ holds. 
Then, we have the following situations with some constant $C>0$ and with probability $1-o(1)$ as $d,N \to \infty$:
\begin{enumerate}
  \setlength{\parskip}{0cm} 
  \setlength{\itemsep}{0cm} 
    \item[Case I.] ($c_U \delta < 1$ and low-rank case): $\Sigma_d^{(i)}$ has a form $\mathrm{diag}(1,...,1,0,...,0)$ with its rank $\varepsilon d$ with $\varepsilon \in (0, 1]$. Then, we have $\mathcal{V}_{d,N}^{(i)} \leq C\delta \varepsilon$ and $\mathcal{B}_{d,N}^{(i)} \leq C \varepsilon$.
    \item[Case II.] ($c_U \delta < 1$ and approximately low-rank case): $\Sigma_d^{(i)}$ has a form $\mathrm{diag}(1,\varepsilon,...,\varepsilon)$ with $\varepsilon \asymp \delta^{1/2}$. Then, we have $\max\{\mathcal{V}_{d,N}^{(i)}, \mathcal{B}_{d,N}^{(i)} \} \leq C \delta^{1/2}$.
    \item[Case III.] ($c_L \delta > 1$ and spectral decay): $\trace(\Sigma_d^{(i)})$ satisfies $\varepsilon \coloneqq \trace(\Sigma_d^{(i)})/d \leq C\delta^{-1/3}$. Then, we have $\max\{\mathcal{V}_{d,N}^{(i)}, \mathcal{B}_{d,N}^{(i)} \} \leq  C \delta^{-1/3}$.
\end{enumerate}
\end{example}

\begin{remark}
Theorem \ref{thm:inner-product} and Example \ref{ex:liang_sec4} establish bounds under the assumption that $d/N \sim \delta$--that is, the ratio between the dimension $d$ and the number of data points $N$ remains fixed but is tuned to balance the bias and the variance. The analysis for these results leverages techniques from random matrix theory, which states that the limiting behavior of the eigenvalues follows the Marchenko-Pastur distribution~\cite{el2010spectrum} under the assumption that $d/N$ remains fixed. 
Note that Eq.~\eqref{ineq_errorbound}, the residual term of $\tilde{O}(d^{-1} + N^{-1/2})$ vanishes as we have $d, N \rightarrow \infty$, and the residual term is minor.

\end{remark}

\subsection{No-regret guarantee under vanishing generalization error regime}
Our first result is the following \thmref{thm:etc_reg_simple}, which 
shows that, for Case I in \exref{ex:liang_sec4}, EtC is the no-regret algorithm with the proper choice of the exploration duration $T_0$ even in the regime of $d, T \rightarrow \infty$.

\begin{theorem}[No-regret EtC for inner-product class]
\label{thm:etc_reg_simple}
Consider a kernel from the inner-product class as Definition \ref{def:kernel_class}.
    Suppose that $d = \Theta(T^{\tau_1})$ and Assumption~\ref{asmp:basic} holds, 
    where $\tau_1 \in (0, 1)$.
    Furthermore, assume the contexts are generated through low-rank covariance structure (Case I in \exref{ex:liang_sec4}) with $\varepsilon = \Theta(T^{-\tau_2})$, 
    where $\tau_2 \in (0, \tau_1 / 4)$. 
    Then, with $\overline{\tau} = \min\cbr{{\tau_1}/  {2}, \tau_2, \tau_1 - 4\tau_2} > 0$, EtC with the exploration duration $T_0 = \Theta(Kd)$ 
    is a no-regret algorithm, and the regret satisfies
    \begin{equation}
        R(T) = \tilde{O}\rbr{T^{\max\{\tau_1, 1 - {\overline{\tau}}/  {2}\}}}.
    \end{equation}
\end{theorem}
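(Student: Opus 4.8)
The plan is to decompose the regret into the exploration cost and the exploitation cost, and then bound the exploitation cost using the estimation error of Theorem~\ref{thm:inner-product} specialized to Case~I of Example~\ref{ex:liang_sec4}. First I would observe that under the assumptions $d = \Theta(T^{\tau_1})$ and $T_0 = \Theta(Kd)$, we have $N = T_0/K = \Theta(d)$, so the regime $c_L \le d/N \le c_U$ of Theorem~\ref{thm:inner-product} holds; moreover with $\varepsilon = \Theta(T^{-\tau_2})$ the effective ``ratio'' $\delta$ of Example~\ref{ex:liang_sec4} is a constant (it is $d/N$ up to constants), so I should be careful: Case~I of the Example is stated for fixed $\delta < 1/c_U$. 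The right move is to re-run the bias/variance estimates of Case~I directly with the $\Sigma_d^{(i)} = \mathrm{diag}(1,\dots,1,0,\dots,0)$ of rank $\varepsilon d$, keeping $\varepsilon \to 0$: the variance term $\mathcal{V}_{d,N}^{(i)} = d^{-1}\sum_j \lambda_j(\hat\Sigma)/[\gamma/\beta + \lambda_j(\hat\Sigma)]^2$ has only $\approx \varepsilon d$ nonzero eigenvalues each of order $1/\varepsilon$ (from Marchenko--Pastur with aspect ratio $\varepsilon d/N \asymp \varepsilon$), while $\gamma_d^{(i)}/\beta_d^{(i)} \asymp \tau_d^{(i)} \asymp \varepsilon$ since $\mathrm{Tr}(\Sigma_d^{(i)})/d = \varepsilon$; this gives $\mathcal{V}_{d,N}^{(i)} = \tilde O(\varepsilon)$. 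For the bias, $K_X^{(i)}$ has $\approx \varepsilon d$ large eigenvalues and a bulk of order $\gamma_d^{(i)} \asymp \varepsilon$, so choosing $k \asymp \varepsilon N$ in the infimum defining $\mathcal{B}_{d,N}^{(i)}$ yields $\mathcal{B}_{d,N}^{(i)} = \tilde O(\varepsilon + \sqrt{\varepsilon}) = \tilde O(\sqrt{\varepsilon})$. Actually, to avoid re-deriving the random-matrix estimates, I would instead invoke Example~\ref{ex:liang_sec4} Case~I with $\delta$ a small fixed constant and absorb the dependence on $\varepsilon$ through its explicit bounds $\mathcal{V} \le C\delta\varepsilon$, $\mathcal{B} \le C\varepsilon$, giving $\mathcal{V}_{d,N}^{(i)} + \mathcal{B}_{d,N}^{(i)} = O(\varepsilon) = O(T^{-\tau_2})$.

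Next I would convert the $L^2$ estimation error into a per-round regret bound for the exploitation phase. Combining Theorem~\ref{thm:inner-product} with the Case~I bound and the residual term $\tilde O(d^{-1} + N^{-1/2}) = \tilde O(T^{-\tau_1} + T^{-\tau_1/2}) = \tilde O(T^{-\tau_1/2})$ (using $N \asymp d \asymp T^{\tau_1}$), we get, with probability $1 - O(d^{-\nu}) = 1 - O(T^{-\nu\tau_1})$,
\[
  \Ep\big[\|\hat f^{(i)} - f_\ast^{(i)}\|_{L_i^2}^2 \mid \bX^{(i)}\big] = \tilde O\big(T^{-\tau_2} + T^{-\tau_1/2}\big) = \tilde O\big(T^{-\min\{\tau_2,\tau_1/2\}}\big).
\]
At an exploitation round $t$, if arm $\Ich = \argmax_i \hat f^{(i)}(\Xit)$ is chosen instead of $i^\ast(t)$, then $f_\ast^{(i^\ast(t))}(\Xarmt{i^\ast(t)}) - f_\ast^{(\Ich)}(\Xt) \le (\hat f^{(i^\ast(t))}(\Xarmt{i^\ast(t)}) - f_\ast^{(i^\ast(t))}(\Xarmt{i^\ast(t)})) + (f_\ast^{(\Ich)}(\Xt) - \hat f^{(\Ich)}(\Xt)) \le \sum_{i\in[K]} |\hat f^{(i)}(\Xit) - f_\ast^{(i)}(\Xit)|$ by optimality of $\Ich$. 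Taking expectation over the fresh context $\Xit$ (independent of the estimators, since they are built from exploration data only) and using Jensen to pass from $L^2$ to $L^1$, the per-round regret is $r(t) = \tilde O\big(\sqrt{K}\cdot T^{-\min\{\tau_2,\tau_1/2\}/2}\big)$. On the low-probability failure event I would bound the per-round regret by a constant (rewards are bounded since $\|f_\ast^{(i)}\|_\mH \le B$ and $K(x,x)\le 1$), contributing $O(d^{-\nu} T) = O(T^{1-\nu\tau_1})$ which is negligible for the stated exponent provided $\nu\tau_1$ is large enough — this needs a small check, and if it is not automatically dominated one repeats the estimation-error argument over a union of $\lceil \log T\rceil$ blocks or simply notes $\nu$ can be taken so that $1-\nu\tau_1 < 1 - \overline\tau/2$; I would phrase this as choosing the constants appropriately or, more cleanly, running the high-probability bound and then arguing the failure contribution is lower-order.

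Finally I would sum: the exploration phase contributes at most $T_0 \cdot O(1) = O(Kd) = \tilde O(T^{\tau_1})$ to the regret (bounded rewards), and the exploitation phase contributes at most $T \cdot r(t) = \tilde O\big(T^{1 - \min\{\tau_2,\tau_1/2\}/2}\big)$. Hence
\[
  R(T) = \tilde O\big(T^{\tau_1}\big) + \tilde O\big(T^{1 - \tfrac{1}{2}\min\{\tau_2,\tau_1/2\}}\big) = \tilde O\big(T^{\max\{\tau_1,\ 1 - \overline\tau/2\}}\big),
\]
where one sets $\overline\tau = \min\{\tau_1/2,\tau_2,\tau_1 - 4\tau_2\}$; the condition $\tau_2 < \tau_1/4$ guarantees $\tau_1 - 4\tau_2 > 0$ hence $\overline\tau > 0$, and $\tau_1 < 1$ guarantees the second exponent is below $1$, so $R(T) = o(T)$, i.e.\ EtC is no-regret. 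The appearance of $\tau_1 - 4\tau_2$ inside $\overline\tau$ rather than just $\min\{\tau_1/2,\tau_2\}$ is the one subtle point: it must come from reconciling the constraint that the Example's $\delta$ stays below $1/c_U$ \emph{and} that the residual/random-matrix approximation in Theorem~\ref{thm:inner-product} is accurate, which forces $\varepsilon^{?}$ to dominate the $\tilde O(d^{-1}+N^{-1/2})$ slack and the $O(d^{-\nu})$ failure probability; I expect the main obstacle to be tracking exactly how the low-rank parameter $\varepsilon = T^{-\tau_2}$ interacts with the error floor $T^{-\tau_1/2}$ and with the validity window of Case~I of Example~\ref{ex:liang_sec4}, which is precisely what pins down the $\tau_1 - 4\tau_2$ term and the range $\tau_2 \in (0,\tau_1/4)$.
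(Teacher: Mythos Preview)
Your overall architecture (decompose into exploration plus exploitation, bound the exploitation per-round regret by $\sum_i |\hat f^{(i)}-f_\ast^{(i)}|$, use Jensen to pass to the $L^2$ error, handle the failure event by bounded rewards) is exactly the paper's Lemma~\ref{lem:reg_upper}, and the final summation is right in spirit. But there is a genuine gap in the step that produces the exponent $\tau_1-4\tau_2$, and it is not merely a bookkeeping subtlety you could fix by ``choosing the constants appropriately.''

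The issue is that you read Theorem~\ref{thm:inner-product} at face value, taking the residual as $\tilde O(d^{-1}+N^{-1/2})$ and the failure probability as $O(d^{-\nu})$ with $\nu$ freely tunable. In the detailed statement (Theorem~\ref{thm:liang_rakhlin}) the residual is actually
\[
\Phi_{d,N}^{(i)} = O\!\left(\frac{\ln^{4.1} d}{d\,(\gamma_d^{(i)})^2}+\sqrt{\frac{\ln(N/\delta_d)}{N}}\right),
\]
and the failure probability is $2\delta_d+d^{-2}$ where $\delta_d$ is \emph{lower bounded} by $16\,d^{-1}(\gamma_d^{(i)})^{-2}$; you cannot make $\nu$ large at will. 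For the inner-product class, Taylor gives $\gamma_d^{(i)}=h(\tau_d^{(i)}/2)-h(0)-h'(0)\tau_d^{(i)}/2=\Theta((\tau_d^{(i)})^2)=\Theta(\varepsilon^2)$, not $\Theta(\varepsilon)$ as you wrote. Plugging $\gamma_d^{(i)}=\Theta(T^{-2\tau_2})$ yields $\Phi_{d,N}^{(i)}=\tilde O(T^{4\tau_2-\tau_1}+T^{-\tau_1/2})$ and forces $\delta_d=\Theta(T^{4\tau_2-\tau_1})$. This is precisely what injects $\tau_1-4\tau_2$ into $\overline{\tau}$ and explains the restriction $\tau_2<\tau_1/4$ (needed so that $\delta_d=o(1)$ and $T\delta_d=o(T)$). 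Without tracking this $\gamma_d^{(i)}$ dependence you arrive only at $\min\{\tau_1/2,\tau_2\}$, which is the wrong exponent and also leaves the range of admissible $\tau_2$ unjustified.
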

\begin{theorem}[No-regret EtC for RBF class]
\label{thm:etc_reg_simple_rbf}
    Consider a kernel from the RBF class as Definition \ref{def:kernel_class}.
    Suppose that $d = \Theta(T^{\tau_1})$ and Assumption~\ref{asmp:basic} holds, 
    where $\tau_1 \in (0, 1)$.
    Also, suppose that $\|X_1^{(i)}\| = o(d^{-c})$ and $\gamma_d^{(i)} \geq \underline{c} \mathrm{Tr}(\Sigma_d^{(i)})/d$ hold for sufficiently large $d$ and  some $c > 0$.
    Furthermore, assume the contexts are generated through low-rank covariance structure (Case I in \exref{ex:liang_sec4}) with $\varepsilon = \Theta(T^{-\tau_2})$, 
    where $\tau_2 \in (0, \tau_1 / 2)$. 
    Then, with $\overline{\tau} = \min\cbr{{\tau_1}/  {2}, \tau_2, \tau_1 - 2\tau_2} > 0$, EtC with the exploration duration $T_0 = \Theta(Kd)$ 
    is a no-regret algorithm, and the regret satisfies
    \begin{equation}
        R(T) = \tilde{O}\rbr{T^{\max\cbr{\tau_1, 1-2\tau_1, 1- c\tau_1, 1 - {\overline{\tau}} / {2}}}}.
    \end{equation}
\end{theorem}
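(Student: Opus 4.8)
The plan is to mirror the proof of the inner-product case (Theorem \ref{thm:etc_reg_simple}) with the bookkeeping changes forced by the RBF residual term $\tilde O(d^{-1}+N^{-1/2})$ in Theorem \ref{thm:inner-product} and by the extra hypotheses $\|X_1^{(i)}\|=o(d^{-c})$ and $\gamma_d^{(i)}\ge\underline c\,\mathrm{Tr}(\Sigma_d^{(i)})/d$. First I would fix the exploration schedule $T_0=\Theta(Kd)$, so that $N=\Theta(d)=\Theta(T^{\tau_1})$ and the ratio condition $c_L\le d/N\le c_U$ of Theorem \ref{thm:inner-product} holds with room to spare. I then split $R(T)=R_{\mathrm{expl}}(T)+R_{\mathrm{expt}}(T)$. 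The exploration regret is trivially bounded: each of the $T_0$ exploration rounds costs at most a constant (by Assumption \ref{asmp:basic}(i), $\|f_\ast^{(i)}\|_\mH\le B$ and the normalization $K(x,x)\le1$ give a uniform sup-norm bound on $f_\ast^{(i)}$), so $R_{\mathrm{expl}}(T)=O(Kd)=\tilde O(T^{\tau_1})$, which accounts for the $T^{\tau_1}$ term in the stated rate.

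Next I would control the exploitation regret. On each exploitation round $t>T_0$, the instantaneous regret $r(t)$ is bounded, up to a constant factor, by $\sum_{i\in[K]}\|\hat f^{(i)}-f_\ast^{(i)}\|_{L_i^2}$; this is the standard argument that if every per-arm estimate is accurate in $L^2$, then the greedy choice $\argmax_i\hat f^{(i)}$ is near-optimal in expectation over the fresh contexts (using that contexts at round $t$ are i.i.d.\ with law $P_{X_1^{(i)}}$ and independent of $\hat f^{(i)}$, and that $K=O(1)$). I would then invoke Theorem \ref{thm:inner-product} together with Case I of Example \ref{ex:liang_sec4}: with $\delta\asymp d/N=\Theta(1)$ — here the role of the small parameter is played instead by $\varepsilon=\Theta(T^{-\tau_2})$, the rank fraction — the bias and variance satisfy $\mathcal V_{d,N}^{(i)}+\mathcal B_{d,N}^{(i)}\le C\varepsilon=O(T^{-\tau_2})$, while the residual is $\tilde O(d^{-1}+N^{-1/2})=\tilde O(T^{-\tau_1}+T^{-\tau_1/2})=\tilde O(T^{-\tau_1/2})$. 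Taking square roots, each exploitation round costs $\tilde O\big(T^{-\tau_2/2}+T^{-\tau_1/4}\big)$ in expected regret; summing over at most $T$ rounds and recalling the high-probability event has failure probability $O(d^{-\nu})=O(T^{-\nu\tau_1})$ (on which we bound regret by $O(T)$), I obtain $R_{\mathrm{expt}}(T)=\tilde O\big(T^{1-\tau_2/2}+T^{1-\tau_1/4}+T^{1-\nu\tau_1}\big)$. Re-expressing via $\overline\tau=\min\{\tau_1/2,\tau_2,\tau_1-2\tau_2\}$, one checks $\tau_2/2\ge\overline\tau/2$ and $\tau_1/4\ge\overline\tau/2$ (the latter from $\overline\tau\le\tau_1/2$), so the leading exploitation term is $\tilde O(T^{1-\overline\tau/2})$; the constraint $\tau_2<\tau_1/2$ guarantees $\overline\tau>0$ and hence sublinearity.

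Finally I would account for the two extra RBF-specific exponents $1-2\tau_1$ and $1-c\tau_1$ appearing in the statement. These do not come from the generalization bound per se but from verifying the hypotheses of Theorem \ref{thm:inner-product} in the RBF regime: the condition $\|X_1^{(i)}\|=o(d^{-c})$ enters the probability/residual bounds of that theorem (through the constant $\nu$, which the excerpt says depends on $c$), contributing a term of order $T^{-c\tau_1}$ to the failure probability or residual, i.e.\ a regret term $\tilde O(T^{1-c\tau_1})$; and the spectral-concentration step that underlies the RBF case of Theorem \ref{thm:inner-product} (the Taylor expansion of the Gram matrix around $\hat\Sigma_d^{(i)}$, controlled via $\mathrm{Tr}((\Sigma_d^{(i)})^2)/d^2$ and El Karoui-type estimates) loses an extra $d^{-1}$-type factor, which under $d=\Theta(T^{\tau_1})$ becomes the $T^{1-2\tau_1}$ term. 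Collecting $T^{\tau_1}$ (exploration), $T^{1-2\tau_1}$, $T^{1-c\tau_1}$, and $T^{1-\overline\tau/2}$ (exploitation) gives the claimed $R(T)=\tilde O\big(T^{\max\{\tau_1,\,1-2\tau_1,\,1-c\tau_1,\,1-\overline\tau/2\}}\big)$. The main obstacle I anticipate is exactly this last bookkeeping: tracing precisely how $c$ and the RBF Gram-matrix expansion propagate through the constant $\nu$ and the residual $\tilde O(d^{-1}+N^{-1/2})$ of Theorem \ref{thm:inner-product}, so that no exponent is lost and the $\min$ defining $\overline\tau$ comes out with the right three arguments; the context-to-regret reduction and the exploration bound are routine by comparison.
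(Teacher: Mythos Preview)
Your architecture is exactly the paper's: exploration regret $O(T_0)=\tilde O(T^{\tau_1})$, then Lemma~\ref{lem:reg_upper} to bound exploitation by $T$ times $\sqrt{\text{error}}$ on a good event $\mA$ plus $T\cdot\Pr(\mA^c)$ on its complement, with the error supplied by the RBF estimation bound (the detailed Theorem~\ref{lem:gen_err} together with Lemma~\ref{lem:low_rank_rbf}).

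The genuine gap is that you take the residual in Theorem~\ref{thm:inner-product} at face value as $\tilde O(d^{-1}+N^{-1/2})$, whereas the detailed version shows it is $\Phi_{d,N}=\tilde O\bigl(d^{-1}(\gamma_d^{(i)})^{-2}+N^{-1/2}\bigr)$, and the admissible confidence level must satisfy $\delta_d=\Omega\bigl(d^{-1}(\gamma_d^{(i)})^{-2}\bigr)$. The RBF-specific fact you miss is that here $\gamma_d^{(i)}=h(0)+\tau_d^{(i)} h'(\tau_d^{(i)})-h(\tau_d^{(i)})=\Theta(\tau_d^{(i)})=\Theta(\varepsilon)$, i.e.\ \emph{linear} in $\varepsilon$, in contrast to $\Theta(\varepsilon^2)$ for the inner-product class; this difference is exactly why the hypothesis is $\tau_2<\tau_1/2$ here rather than $\tau_1/4$. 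With $\varepsilon=\Theta(T^{-\tau_2})$ both $\Phi_{d,N}$ and $\delta_d$ therefore acquire a term of order $T^{2\tau_2-\tau_1}$, and this---not the $d^{-1}$ or $N^{-1/2}$ pieces---is the source of the third argument $\tau_1-2\tau_2$ in $\overline\tau$. For $\tau_2\in(\tau_1/3,\tau_1/2)$ that argument is binding, so your intermediate on-event error estimate $\tilde O(T^{-\min\{\tau_2,\tau_1/2\}})$ is genuinely too optimistic there; your subsequent ``re-expression via $\overline\tau$'' is then an unmotivated weakening that happens to hit the stated exponent rather than a consequence of the preceding derivation.

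Two smaller corrections on your final paragraph: the $T^{1-2\tau_1}$ term arises from the $O(d^{-2})$ piece of the failure probability in Theorem~\ref{lem:gen_err} (so $T\cdot d^{-2}$), not from a single $d^{-1}$ loss in a concentration step; and $T^{1-c\tau_1}$ is the contribution $T\cdot\eta_d$ with $\eta_d=o(d^{-c})$, which is consistent with your attribution to the hypothesis $\|X_1^{(i)}\|=o(d^{-c})$.
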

\begin{proof}[Proof sketch of Theorem~\ref{thm:etc_reg_simple}]
We set the exploration period $T_0 = \Theta(K d)$ so that $d/N = d/(T_0/K)$ remains constant. The regret is bounded by the sum of 
the regret during the exploration period and that during exploitation period. By Lemma \ref{lem:reg_upper}, we decompose the regret-per-round during the latter period into as follows:
Firstly, When Eq.~\eqref{ineq_errorbound} holds, we can bound the regret-per-round in terms of bias and variance. 
Secondly, with a small probability, Eq.~\eqref{ineq_errorbound} does not hold. In this case, we bound the regret-per-round by a constant.
\end{proof}

\begin{remark}{\rm (Limitation in the regret bound)}
The No-regret guarantee provided in \thmref{thm:etc_reg_simple} is only valid under Case I in \exref{ex:liang_sec4}. The difficulty of achieving the no-regret guarantees in the high-dimensional kernel regression~\cite{el2010spectrum,liang2020just} stems from the limitation that it must keep $d/N$, the ratio between the dimension and the number of training sample, to be a constant while $d,N \rightarrow \infty$. For the other two cases in Example \ref{ex:liang_sec4}, no-regret requires us to adapt this ratio as a function of $d$, which does not fit into the current framework. 

\end{remark}

In the next subsection, we discuss a slightly weaker version of regret that is achievable for all three cases in Example \ref{ex:liang_sec4} under some assumptions.

\subsection{Lenient regret under non-vanishing generalization error regime}
\label{sec:lr_regret}
In this subsection, we consider the setting where implicit regularization $\varepsilon$ is fixed over time steps but is sufficiently small. 
We focus on the following weaker version of regret $R_{\Delta}(T)$ in this section:
\begin{equation}
    R_{\Delta}(T) = \sum_{t=1}^T \Phi_{\Delta}\rbr{r(t)} ,
\end{equation}
where $\Phi_{\Delta}(a) = \max\{a - \Delta, 0\}$, and $\Delta > 0$ is the hyperparameter specified before the algorithm run. 
This definition of regret is motivated by \emph{lenient regret}, 
which is originally introduced in multi-armed bandits~\citep{merlis2021lenient} and extended to kernel bandits~\citep{cai2021lenient}.
The lenient regret diverges slowly if the implicit regularization $\varepsilon$ is sufficiently small and error terms become negligible 
compared to the gap $\Delta$.
The following \thmref{thm:etc_lr_Td} state this intuition formally by showing that the lenient regret increases sub-linearly with the proper choice of $T_0$ in the inner-product class of kernels.
The result for the RBF class of kernels is shown in Appendix~\ref{subsec:lr_rbf} due to the space limitation.

\begin{theorem}[Lenient regret bound for inner-product class]
    \label{thm:etc_lr_Td}
    Fix any $\Delta > 0$ as a constant. Consider the setup of Theorem \ref{thm:inner-product} with the cases $\mathrm{I}$, $\mathrm{II}$, and $\mathrm{III}$ as Example \ref{ex:liang_sec4}.
    Define $h_{max}'' = \max_{a \in [0, 1]} h''(a) \in (0, \infty)$ and $h_{min}'' = \min_{a \in [0, 1]} h''(a) \in (0, \infty)$, and 
    assume that $d = \Theta(T^{\tau})$ holds for some $\tau \in (0, 1)$. 
    Furthermore, each of the cases $\mathrm{I}$, $\mathrm{II}$, and $\mathrm{III}$, if $\varepsilon$ is sufficiently small as a function of $\Delta$, then we can choose $T_0 = O(d)$ and
    the lenient regret for each of the three examples in Example \ref{ex:liang_sec4} is $R_{\Delta}(T) = O(d) = o(T)$.
 \end{theorem}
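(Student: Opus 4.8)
<br>

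The plan is to follow the structure of the proof sketch of Theorem~\ref{thm:etc_reg_simple}: decompose $R_{\Delta}(T)$ into the exploration contribution (at most $T_0$ rounds, each contributing at most a constant gap, so $O(T_0) = O(d)$) and the exploitation contribution. For the exploitation part, I would invoke Lemma~\ref{lem:reg_upper} to write the per-round lenient cost $\Phi_{\Delta}(r(t))$ in terms of the estimation error: on the high-probability event where \eqref{ineq_errorbound} holds, the instantaneous regret $r(t)$ is controlled (up to constants and the $\tilde{O}(d^{-1}+N^{-1/2})$ residual) by $\sqrt{\mathcal{V}_{d,N}^{(i)} + \mathcal{B}_{d,N}^{(i)}}$ summed/maximized over arms; on the low-probability complement, $r(t)$ is bounded by a constant, and since the failure probability is $O(d^{-\nu}) = O(T^{-\nu\tau})$, that contribution to $R_{\Delta}(T)$ is $o(T)$ (in fact $O(d)$ after multiplying by $T$, provided $\nu\tau$ is chosen so that $T\cdot T^{-\nu\tau} = O(d) = O(T^\tau)$; if not directly, it is at least $o(T)$, but one must check it is $O(d)$).

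The crux is then to show that, in each of Cases I, II, III of Example~\ref{ex:liang_sec4}, when $\varepsilon$ is chosen sufficiently small \emph{as a function of $\Delta$}, the effective bias and variance satisfy $\sqrt{C(\mathcal{V}_{d,N}^{(i)} + \mathcal{B}_{d,N}^{(i)})} + \tilde{O}(d^{-1}+N^{-1/2}) \le \Delta$ for all sufficiently large $d$. From Example~\ref{ex:liang_sec4}, in Case I we have $\mathcal{V}_{d,N}^{(i)} \le C\delta\varepsilon$ and $\mathcal{B}_{d,N}^{(i)} \le C\varepsilon$; in Cases II and III, $\max\{\mathcal{V}_{d,N}^{(i)},\mathcal{B}_{d,N}^{(i)}\} \le C\delta^{1/2}$ resp. $C\delta^{-1/3}$, where the relevant $\delta$ is pinned down by the relation between $\varepsilon$ and $\delta$ in each case (e.g. $\varepsilon \asymp \delta^{1/2}$ in Case II, $\varepsilon \le C\delta^{-1/3}$ in Case III). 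So one picks $\varepsilon$ (hence $\delta$, hence the ratio $d/N$, hence $T_0 = NK = \Theta(d)$) small enough that $C(\mathcal{V}+\mathcal{B}) \le \Delta^2/4$, say; combined with the residual term vanishing as $d\to\infty$, this forces $r(t) \le \Delta$ on the good event, so $\Phi_{\Delta}(r(t)) = 0$ there. Hence the only surviving exploitation contribution comes from the failure event, which as noted is $O(d)$, and adding the $O(T_0)=O(d)$ exploration cost gives $R_{\Delta}(T) = O(d) = o(T)$ since $\tau < 1$.

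The main obstacle I anticipate is bookkeeping the relationship between $\varepsilon$, $\delta$, and the residual term $\tilde{O}(d^{-1}+N^{-1/2})$ so that the inequality $r(t)\le\Delta$ genuinely holds for \emph{all} large $d$ rather than merely in the limit. In Case III the constraint $c_L\delta > 1$ means $\delta$ cannot be taken arbitrarily small, so smallness of the bound $C\delta^{-1/3}$ requires $\delta$ \emph{large}, i.e. $N$ small relative to $d$ — one must verify this is compatible with $T_0 = O(d)$ and with the $N^{-1/2}$ residual still being dominated by $\Delta$ (here $N = \Theta(d/\delta)$ must still grow, which it does as $d\to\infty$ with $\delta$ fixed). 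I would also need to confirm that Lemma~\ref{lem:reg_upper} indeed yields a bound of the claimed form with only $K = O(1)$ multiplicative loss from the union over arms, and that the constant $C$ in Theorem~\ref{thm:inner-product} is uniform over the three cases. Once these dependencies are made explicit, the conclusion $R_{\Delta}(T) = O(d)$ follows directly.
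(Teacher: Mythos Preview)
Your overall strategy matches the paper's: invoke Lemma~\ref{lem:reg_upper} to split exploration from exploitation, and use the bias/variance bounds of Example~\ref{ex:liang_sec4} (in their detailed form, Lemmas~\ref{lem:low_rank}--\ref{lem:N_ll_d}) to show the exploitation error is below~$\Delta$ once~$\varepsilon$ is small enough and $T_0=\Theta(d)$ is chosen appropriately. The Case~III concern you raise is handled exactly as you suspect: $\delta=d/N$ is a fixed constant depending on $\varepsilon,\Delta$, and the paper records the precise choices in Table~\ref{tab:cond_eps_T0}.

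There is, however, a genuine gap in your handling of the failure event, and it is precisely the one you flag without resolving. Recall that $r(t)$ is already an \emph{expectation}, so the lenient cost $\Phi_\Delta(r(t))$ is applied to a deterministic number; you cannot say ``$\Phi_\Delta(r(t))=0$ on the good event'' and then account for the bad event as a separate $T\cdot\Pr(\mA^c)$ contribution to $R_\Delta(T)$. What you can say is $r(t)\le (\text{good part})+2B\Pr(\mA^c)$, hence $\Phi_\Delta(r(t))\le \max\{(\text{good part})+2B\Pr(\mA^c)-\Delta,0\}$. If you only arrange $(\text{good part})\le\Delta$, the residual $T\cdot 2B\Pr(\mA^c)$ survives, and with $\Pr(\mA^c)=O(d^{-\nu})$ for some $\nu\in(0,1)$ this is $O(T^{1-\nu\tau})$, which is $o(T)$ but \emph{not} $O(d)$ unless $\tau\ge 1/(1+\nu)$. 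The paper's fix is simple but essential: choose $\varepsilon$ so that the good part is bounded by a \emph{strict} fraction of $\Delta$ (concretely $\tfrac{3}{8K}\Delta$ in each case), and then, since $\Pr(\mA^c)=o(1)$ as $d\to\infty$, for all sufficiently large $T$ the full condition~\eqref{eq:lr_scond} holds and $\Phi_\Delta(r(t))=0$ for every $t>T_0$. This yields $R_\Delta(T)\le 2BNK=O(d)$ exactly, with no residual term from the bad event.
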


\begin{proof}[Proof sketch of Theorem \ref{thm:etc_lr_Td}]
To achieve no-lenient-regret, it suffices to choose the amount of exploration $T_0 = NK$ such that $\mathcal{V}_{d,N}^{(i)} +\mathcal{B}_{d,N}^{(i)} < \Delta$
with a high probability. For the three scenarios in Example \ref{ex:liang_sec4}, we achieve this by fixing the value of $\delta = d/N$ appropriately so that the bias and variance terms are balanced. For sufficiently large $d,N$, the residual term in Eq.~\eqref{ineq_errorbound} is small enough compared with $\Delta$, and it suffices to bound the lenient regret.
\end{proof}

\begin{remark}{\rm (Comparison between no-regret and no-lenient-regret)}\label{rem_comparison_le}
For the latter two of the three examples in Example \ref{ex:liang_sec4}, we can achieve a zero lenient regret bound for any $\Delta > 0$. However, we found that a no-regret guarantee is much more challenging. This might seem contradictory at first because, if we can achieve any small error we want, it seems that we are able to achieve perfect learning. The technical reason is that, no-regret corresponds to gradually decrease the value of $\Delta$ of lenient regret to zero as $d$ grows. This would require us to change the ratio $d/N$ as a function of $T$. However, our framework (Theorem \ref{thm:inner-product}) specifically requires this ratio to be a constant. In contrast, for zero lenient regret, choosing a fixed $\delta$ as a function of $\Delta$ suffices. In this case, we can keep the ratio $d/N$ constant.
\end{remark}

\begin{figure*}
    \centering
    \includegraphics[width=0.8\linewidth]{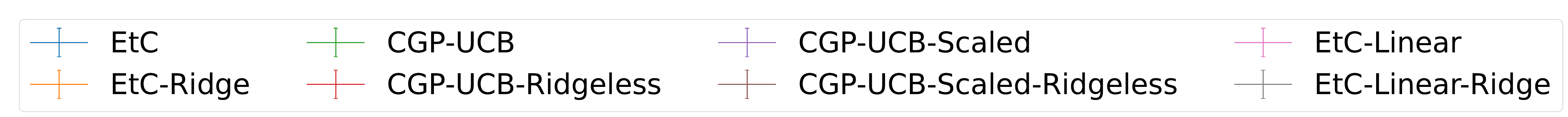} \\
    \includegraphics[width=0.3\linewidth]{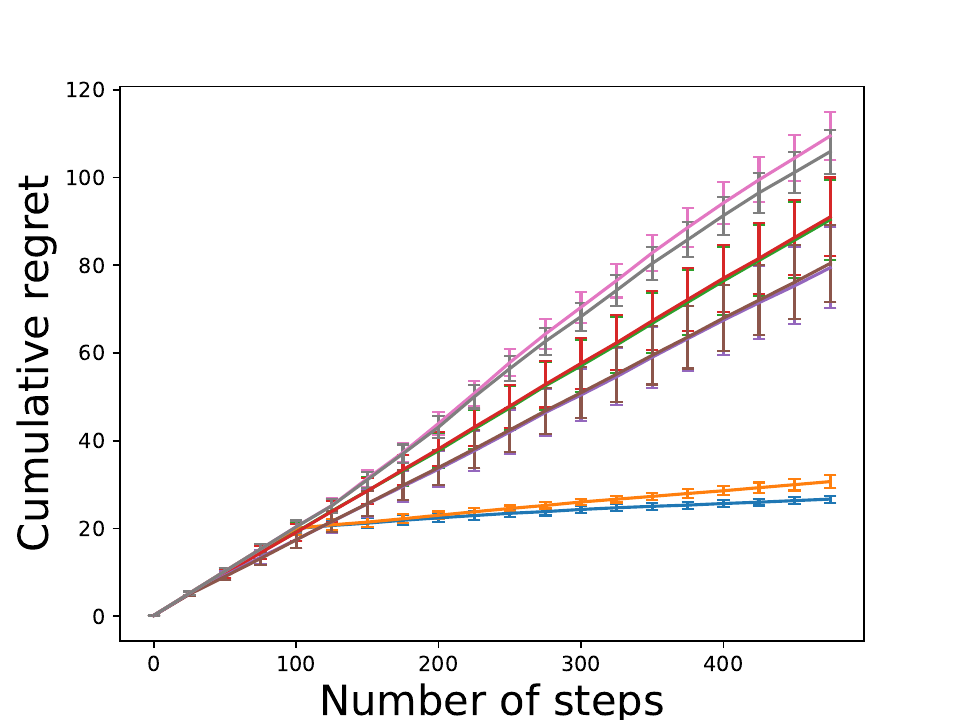}
    \includegraphics[width=0.3\linewidth]{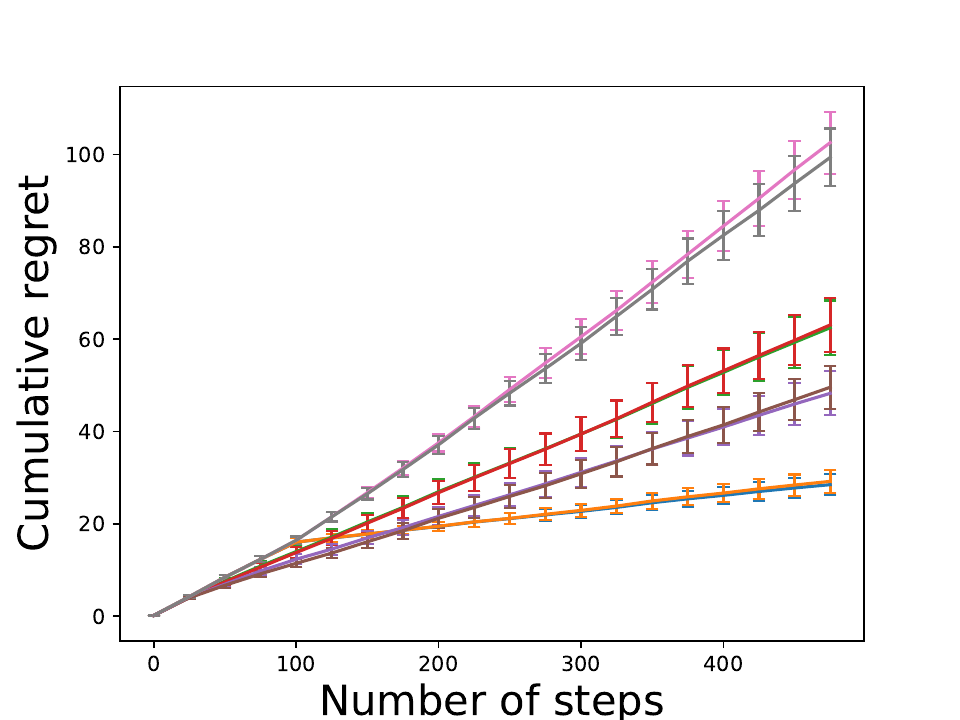}
    \includegraphics[width=0.3\linewidth]{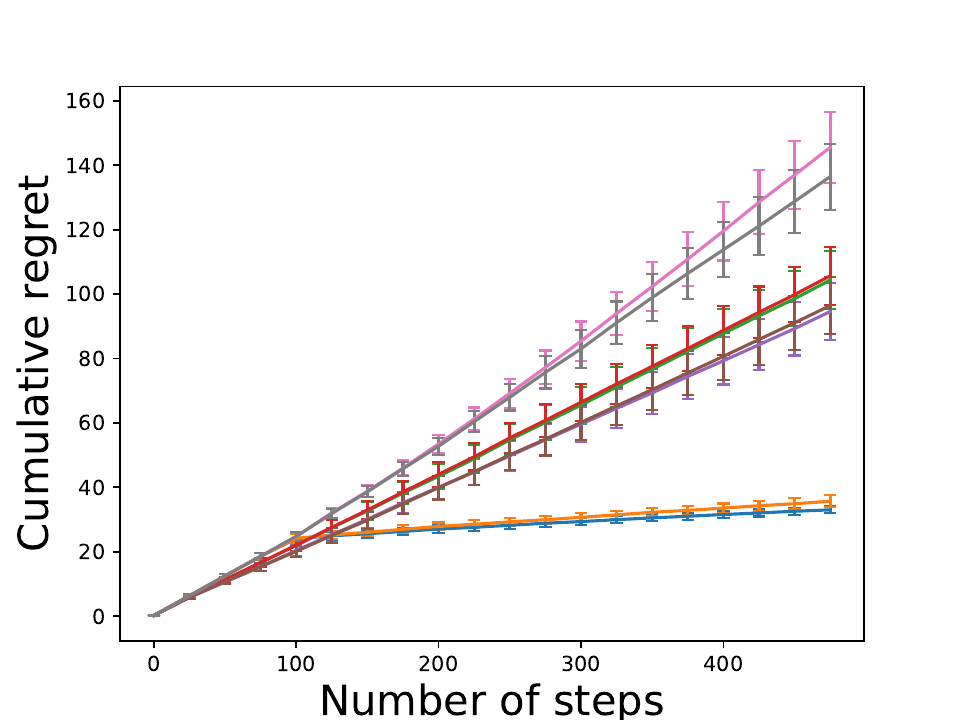}
    \\
    \includegraphics[width=0.3\linewidth]{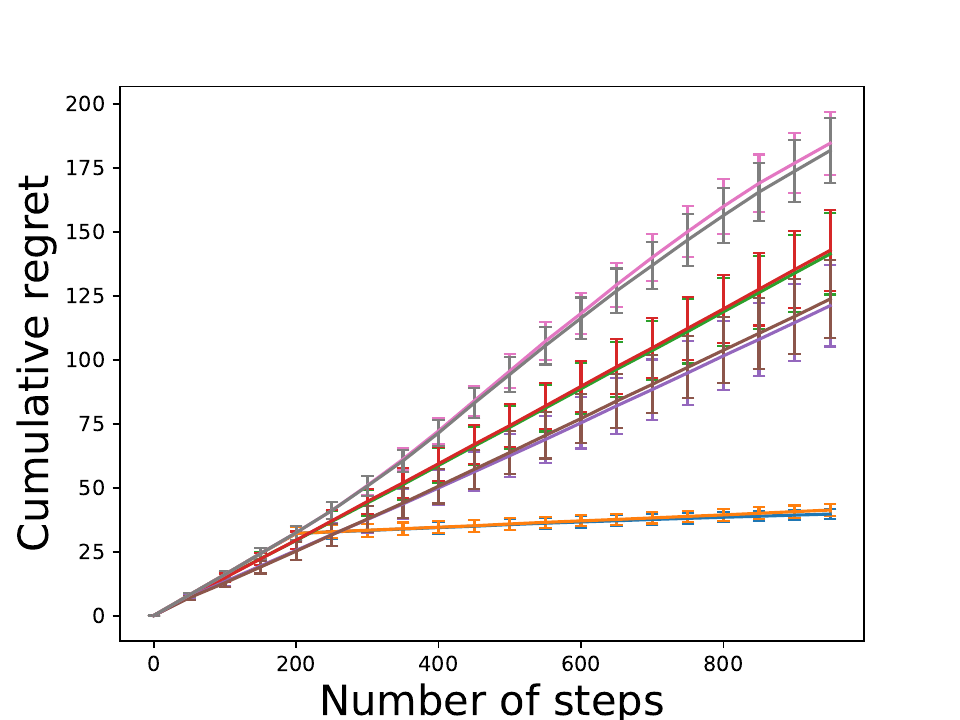}
    \includegraphics[width=0.3\linewidth]{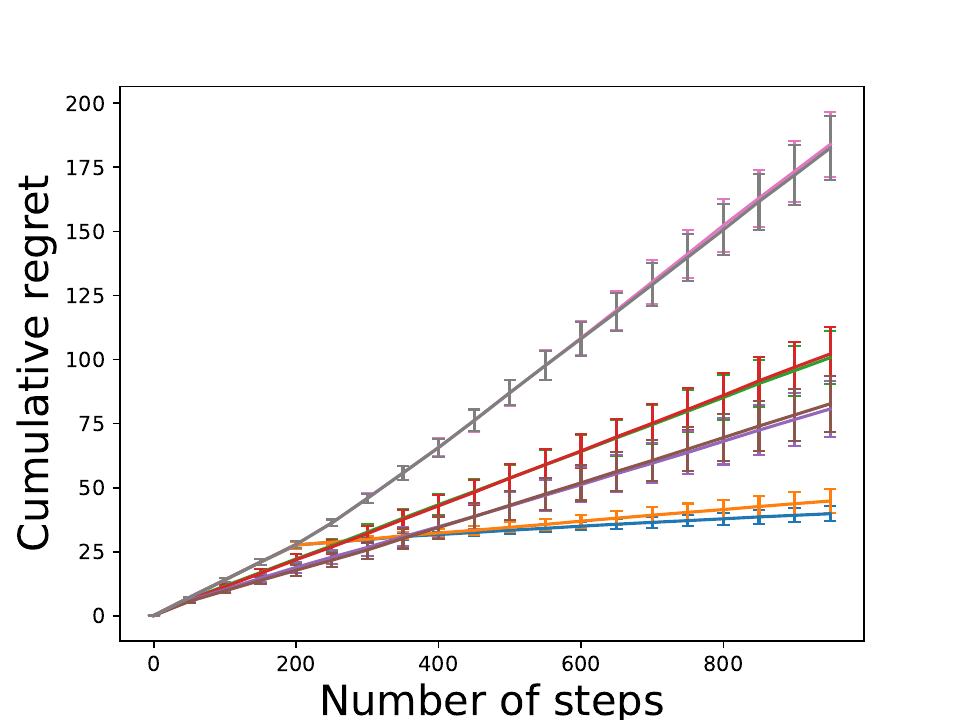}
    \includegraphics[width=0.3\linewidth]{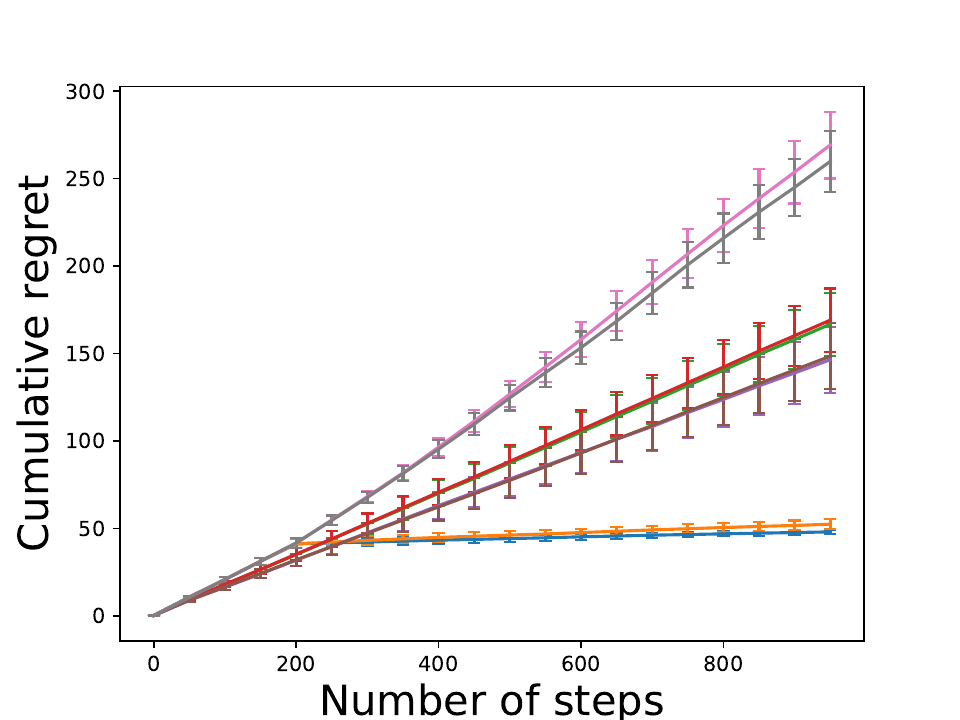}
    \caption{The average cumulative regret with different $10$ seeds. The error bars represent one standard error. 
    The left, middle, and right figures show the results in low-rank, approximate low-rank, and spectral decay covariance matrix settings, respectively. The top and bottom figures show the results in $(d, T_0) = (100, 100)$ and $(d, T_0) = (200, 200)$, respectively.}
    \label{fig:emperical_result}
\end{figure*}

\section{Simulation Study}\label{sec_simulation}

\paragraph{Baselines:} We consider the following baselines:
(i) \textbf{CGP-UCB/CGP-UCB-Ridgeless} as  contextual variants of GP-UCB~\citep{krause2011contextual}, (ii) \textbf{CGP-UCB-Scaled/CGP-UCB-Scaled-Ridgeless} as variants of the previously introduced CGP-UCB algorithm, (iii) \textbf{EtC-Linear/EtC-Linear-Ridge} as variants of the
    EtC algorithm that adopts a linear kernel~\citep{DBLP:conf/nips/KomiyamaI23}.
Also, (iv) \textbf{EtC/EtC-Ridge} are our proposed EtC algorithms.
Details of the implementation of each algorithm are described in Section \ref{sec:details_experiments}.

\paragraph{Data Generation:}
We generate the reward in our experiments from the reward function $f_{\ast}^{(i)}(\cdot) = \sum_{m=1}^{500} c_m^{(i)} F_{\mathrm{RBF}}(\cdot, x_m^{(i)})$, with $F_{\mathrm{RBF}}(x, \tilde{x}) = \exp(-{\|x - \tilde{x}\|_2^2} / {0.25d})$ and $c_m^{(i)} \sim \mathrm{Uniform}([-1, 1])$.
We set $K = 20$ and $\xi(t) \sim \mN(0, \sigma^2)$ with $\sigma^2 = 10^{-4}$. 
Also, the context of the $i$-th arm $X_1^{(i)}$ is generated in each of three different settings of the covariance matrix $ \Sigma_j^{(i)}$ as follows: (i) a \textbf{low-rank} case in which  the covariance matrix has an exact low-rank structure, (ii) an \textbf{approximate low-rank} case in which  the covariance matrix is approximately low-rank with some randomness, and (iii) a \textbf{spectral decay} case in which the eigenvalues of the covariance matrix have a specific decay rate. 
More details are described in Section \ref{sec:details_experiments}.

\paragraph{Results:}  Figure~\ref{fig:emperical_result} shows the result. The followings discuss our major findings. 
Firstly, our kernel-based EtC algorithms consistently outperform the existing of CGP-UCBs and EtC-Linear methods. 
Meanwhile, we also observe that a ridge-based (regularized) EtC method,
which is not covered by our theoretical analysis, has competitive empirical performance. 
Secondly, linear-based EtC algorithms have the worst performance. 
This is because these algorithms tend to misspecify the optimal arms after the exploration period 
due to limited model flexibility. 
At a high level, these findings suggest that methods designed for low-dimensional scenarios do not always yield strong empirical results in high-dimensional settings, even under additional assumptions such as those described in Example~1.
Consequently, it is essential to design algorithms suited for high-dimensional problems that exploit the underlying problem settings.

\section{Real-World Application}

We provide the experimental results with the real-world situation based on the Avazu CTR prediction data \citep{zhu2021open}.
This real-world application covers a recommendation system of the online advertisement. Specifically, in online advertising, a platformer desires to optimize the ad selection based on the non-linear interaction between the user and the visited website features. 
Since they often include an enormous number of categorical features (such as demographic properties of the user), it easily leads the problem into high-dimensional settings. We provide a detailed setup in \secref{sec:avazu_detail}.

Table \ref{tab:real_world} presents the results. Here, both ETC (the proposed method) and ETC-Linear, designed to handle high-dimensions, achieve strong performance regardless of whether ridge regularization is applied. In particular, ETC and ETC-Ridge leverages nonlinearity to attain lower regret.

\begin{table}[htbp]
\centering
\caption{Final cumulative regret with 5 seeds.}
\begin{tabular}{lcccccccc}
\hline
\textbf{Method} & \multicolumn{2}{c}{ETC (proposed)} & \multicolumn{2}{c}{CGP-UCB} & \multicolumn{2}{c}{CGP-UCB-Scaled} & \multicolumn{2}{c}{ETC-Linear} \\
 & vanilla & Ridge & vanilla & Ridgeless & vanilla & Ridgeless & vanilla & Ridge \\
\hline
\textbf{Regret} & \underline{524.8} & \textbf{520.8} & 665.1 & 665.0 & 669.6 & 650.6 & 562.1 & 554.3 \\
\textbf{(±SE)} & (±3.4) & (±1.7) & (±0.3) & (±1.0) & (±2.4) & (±0.8) & (±1.2) & (±2.9) \\
\hline
\end{tabular}
\label{tab:real_world}
\end{table}

\section{Conclusion}\label{sec_discuss}

We studied the high-dimensional kernelized contextual bandit problem by extending the recent generalization 
error bound of kernel ridgeless regression to the decision-making problem. 
Specifically, we showed that if the spectral property of the context is favorable such that the generalization error converges to zero, the simple EtC algorithm achieves a sub-linear increase of regret without additional structural assumption of the reward. Furthermore, even in harder problem settings with a non-vanishing generalization error, we provide the vanishing lenient regret guarantees with careful tuning of the exploration duration of EtC.
We also conducted numerical simulations. Their results are promising. In many scenarios, the proposed EtC algorithm outperforms existing kernelized UCB algorithms that often overestimate the amount of exploration in practical scenarios. These simulations also verified the value of non-linearity by demonstrating the limited performance of linear interpolators.

\appendix

\section{Details of Experiments} \label{sec:details_experiments}

\subsection{Algorithms}

\begin{itemize}
  \setlength{\parskip}{0cm} 
  \setlength{\itemsep}{0cm} 
    \item \textbf{CGP-UCB/CGP-UCB-Ridgeless} are contextual variants of GP-UCB~\citep{krause2011contextual}. We set the regularization parameter $\lambda^2$ of kernel-ridge regressor (or noise variance parameter of GP) as $\lambda^2 = 1.0$ and $\lambda^2 = 10^{-8}$ in CGP-UCB and CGP-UCB-Ridgeless, respectively. Furthermore, we set the confidence width parameter $\beta_t^{(i)}$ of $f_{\ast}^{(i)}$ at step $t$ as $\beta_t^{(i)} = \|f_{\ast}^{(i)}\|_{\mH} + \sigma^2 \lambda^{-1}\sqrt{2 \ln \det(\bI + \lambda^{-2} K(\bX_t^{(i)}, \bX_{t}^{(i)})) + 2 \ln (K/\delta)}$, 
    where $\bX_{t}^{(i)}$ is the set of chosen input vector to observe $f_{\ast}^{(i)}$ until step $t$.
    This value of $\beta_t^{(i)}$ is theoretically suggested in the subsequent works on CGP-UCB~\citep[e.g.,][]{abbasi2013online,chowdhury2017kernelized}.
    \item \textbf{CGP-UCB-Scaled/CGP-UCB-Scaled-Ridgeless} are variants of the previously introduced CGP-UCB algorithm. In these versions, the value of $\beta_t^{(i)}$ is $10$ times smaller.
    \item \textbf{EtC-Linear/EtC-Linear-Ridge} are variants of the EtC algorithm that adopts a linear kernel~\citep{DBLP:conf/nips/KomiyamaI23}. We also confirm the performance of EtC-Linear-Ridge, whose $\hat{f}^{(i)}$ is defined as a linear ridge estimator with regularization parameter $\lambda^2 = 1.0$.
    \item \textbf{EtC/EtC-Ridge} are our proposed EtC algorithm. To confirm the effect of our adopted ridge-less estimator $\hat{f}^{(i)}$, we also confirm the performance of EtC-Ridge, whose $\hat{f}^{(i)}$ is defined as the kernel-ridge estimator with regularization parameter $\lambda^2 = 1.0$.
\end{itemize}

\subsection{Reward Setup}

We use the reward function 
$f_{\ast}^{(i)}(\cdot) = \sum_{m=1}^{500} c_m^{(i)} F_{\mathrm{RBF}}(\cdot, x_m^{(i)})$, where $c_m^{(i)} \sim \mathrm{Uniform}([-1, 1])$, 
and $x_m^{(i)} \sim \mN(0, I_d)$ are randomly generated coefficients
and base inputs, respectively. Here, we adopt the RBF function: $F_{\mathrm{RBF}}(x, \tilde{x}) = \exp\rbr{-\frac{\|x - \tilde{x}\|_2^2}{0.25d}}$. Then, note that the RKHS norm 
$\|f_{\ast}^{(i)}\|_{\mH}$ can be computed exactly as 
$\|f_{\ast}^{(i)}\|_{\mH} = \sqrt{\sum_{j,l} c_j^{(i)} c_l^{(i)} F_{\mathrm{RBF}}(x_j^{(i)}, x_l^{(i)})}$, and we use this exact RKHS norm for the calculation of the confidence width parameter of CGP-UCB.
We set $K = 20$ and $\xi(t) \sim \mN(0, \sigma^2)$ with $\sigma^2 = 10^{-4}$. 
Furthermore, we define the context of the $i$-th arm $X_1^{(i)}$ 
as the clipped multivariate normal random vector: $X_j^{(i)} \coloneqq \min\{10, \max\{\tilde{X}_j^{(i)}, -10\}\}$, where $\tilde{X}_j^{(i)} \sim \mN(0, \Sigma_j^{(i)})$.
We confirm the performance in the following three settings of $\Sigma_j^{(i)}$:

\begin{itemize}
  \setlength{\parskip}{0cm} 
  \setlength{\itemsep}{0cm} 
    \item \textbf{Low-rank}: The covariance matrix $\Sigma_d^{(i)}$ is defined as $\Sigma_d^{(i)} = \mathrm{diag}(\tilde{c}_1^{(i)}, \ldots, \tilde{c}_d^{(i)})$, where $\tilde{c}_j^{(i)} \coloneqq \tilde{c}^{(i)} \sim \mathrm{Uniform}([0.5, 1.0])$ for randomly chosen dimension $j \in [d]$; otherwise, $\tilde{c}_j^{(i)} = 0$. We set the number of total active dimensions as $d/2$.
    \item \textbf{Approximate low-rank}: The covariance matrix $\Sigma_d^{(i)}$ is defined as $\Sigma_d^{(i)} = \tilde{c}^{(i)} \mathrm{diag}(1, \frac{1}{2}, \frac{1}{2}, \ldots, \frac{1}{2})$, where $\tilde{c}^{(i)} \sim \mathrm{Uniform}([0.5, 1.0])$.
    \item \textbf{Spectral decay}: The covariance matrix $\Sigma_d^{(i)}$ is defined as $\Sigma_d^{(i)} = \tilde{c}^{(i)} \mathrm{diag}(\tilde{c}_1^{(i)}, \ldots, \tilde{c}_d^{(i)})$, where $\tilde{c}^{(i)} \sim \mathrm{Uniform}([0.5, 1.0])$. Here, we set $\tilde{c}_j^{(i)} = 10 j^{-1}$ for $j \in [\tilde{j}]$, where $\tilde{j} = \max\{j \in [d] \mid \sum_{l=1}^j 10 l^{-1} \leq d/4\}$; otherwise, 
    $\tilde{c}_d^{(i)} = (0.5d - \sum_{l=1}^{\tilde{j}} 10 l^{-1})/(d - \tilde{j})$.
    Note that, under this setup of $\Sigma_d^{(i)}$, the average eigenvalue $\mathrm{Tr}(\Sigma_d^{(i)})/d$ is $\tilde{c}^{(i)}/2$, which is similar to the aforementioned two covariance settings.
\end{itemize}

\subsection{Details of experiment in Avazu CTR dataset}
\label{sec:avazu_detail}

We evaluate our algorithm using the Avazu click-through rate (CTR) prediction dataset~\citep{zhu2021open}. Based on this data, we create a simulated setup for the online ad-selection problem for maximizing clicks per mile (CPM). 
First, we group advertisements by the last digit of their ad identifier to define 10 distinct ad groups, which we treat as the arms of the bandit problem. To define the context features, we apply one-hot encoding to all features in the original dataset and then perform principal component analysis (PCA) using $20000$ samples to obtain a 100-dimensional embedding feature. For each ad group, we train a kernel ridge regression model using 2,000 samples with a regularization parameter $0.1$. Then, we define these pre-trained models as a surrogate oracle for the true CTR function of the simulated experiment. During the experiment runs, rewards are generated by querying these oracle models, whereas the context vectors are taken from held-out test data, embedded into the pre-trained 100-dimensional space. We report average regret over $5$ different seeds with $T=3000$ and $N=1500$. 

\subsection{Additional experiment}
To observe the algorithm's robustness, we conduct an ablation study by changing the number of active dimensions in the low-rank reward setup. Figure~\ref{fig:low_rank_ablation} shows the results. To simplify the experiment, we focus on the ridgeless-regressor-based algorithms. The results demonstrate that, compared with CGP-UCB, our EtC algorithms are robust against the varying number of active dimensions. Specifically, while CGP-UCB exhibits strong performance in the simplest one-dimensional setting, its performance deteriorates significantly as the number of active dimensions increases. In high-dimensional settings ($10$ or more active dimensions), we can confirm that the regret of CGP-UCB becomes nearly linear.
\begin{figure}
    \centering
    \includegraphics[width=0.4\linewidth]{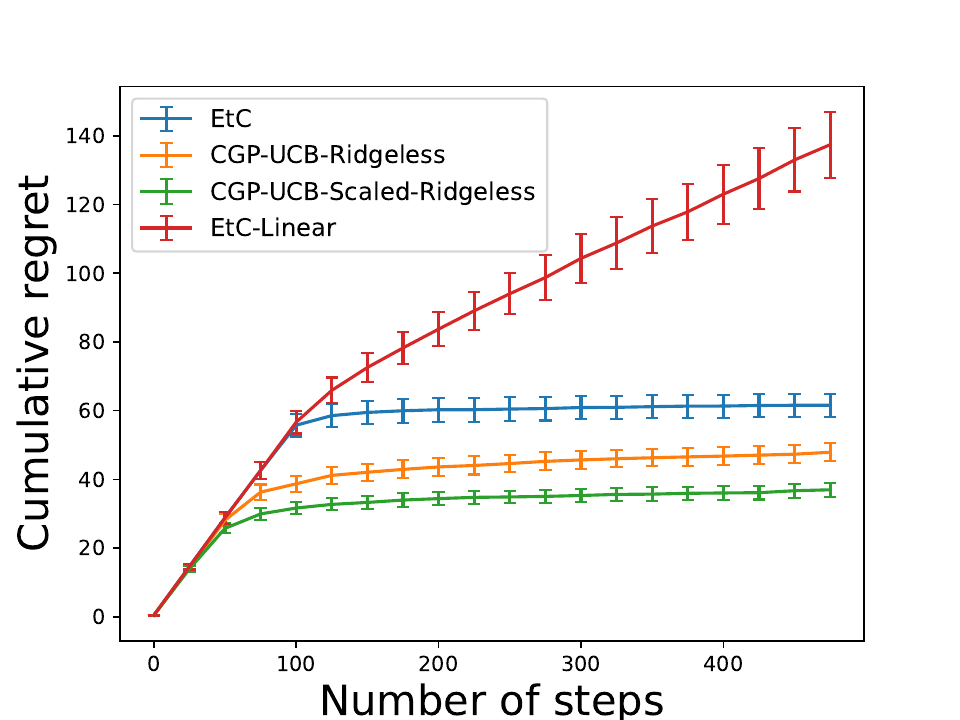}
    \includegraphics[width=0.4\linewidth]{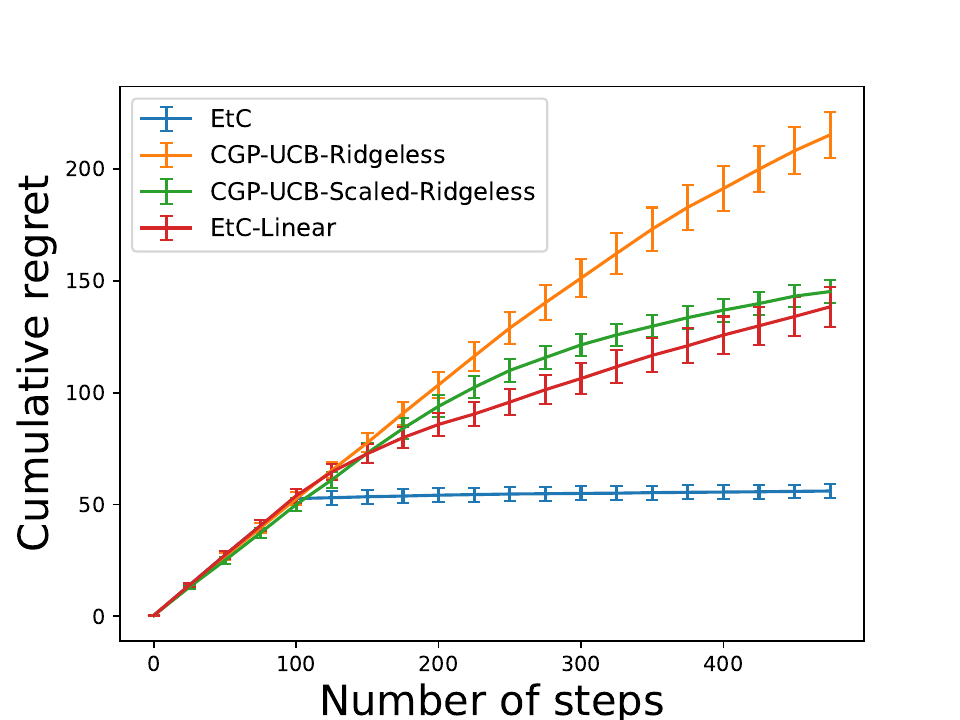}
    \includegraphics[width=0.4\linewidth]{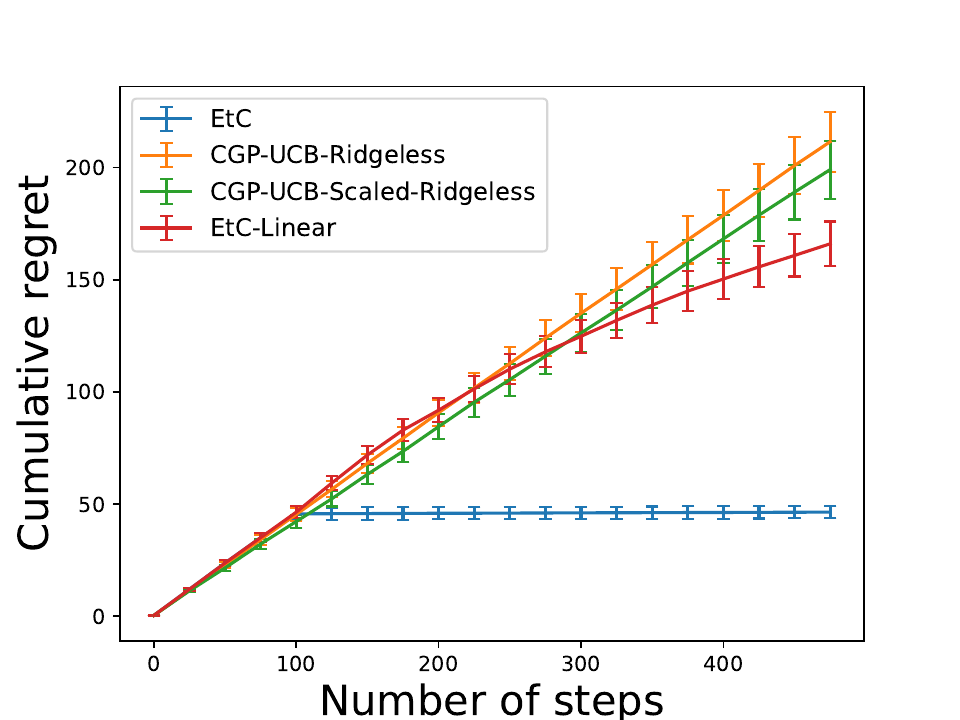}
    \includegraphics[width=0.4\linewidth]{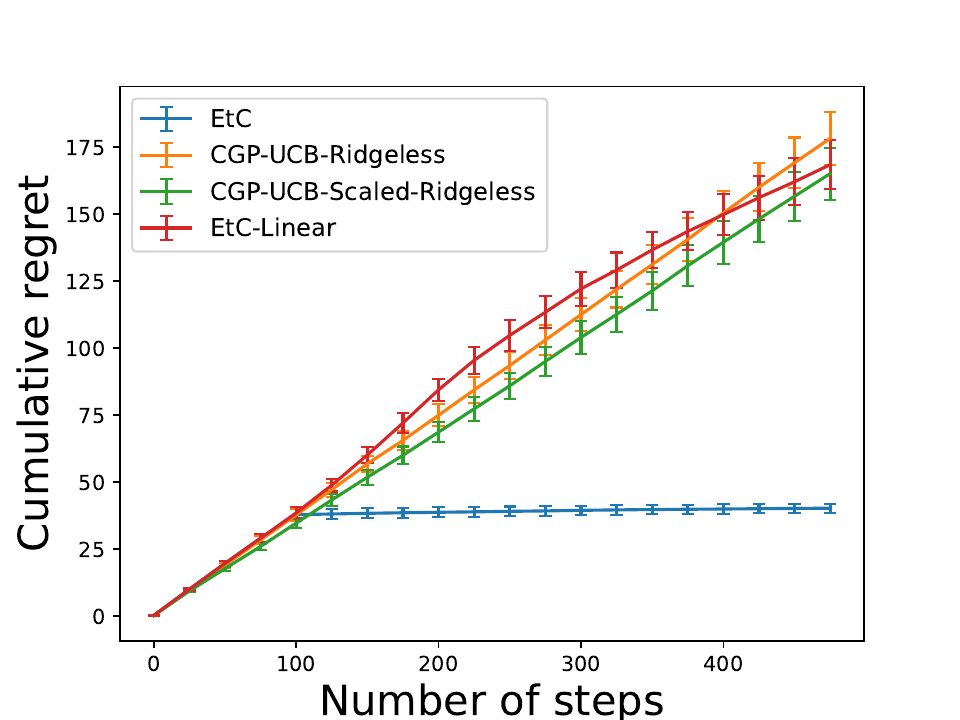}
    \caption{Experiment in the low-rank reward setup with the varying number of active dimensions. These plots report the average cumulative regret over $10$ random seeds. The top-left, top-right, bottom-left, and bottom-right plots correspond to settings with $1, 3, 10$, and $20$ active dimensions, respectively.
    }
    \label{fig:low_rank_ablation}
\end{figure}

\section{Details of Theorem \ref{thm:inner-product}} \label{sec:detail_theorem}

\subsection{Detailed statements}
We provide the detailed version of Theorem \ref{thm:inner-product} proofs for the case of the inner-product class and the case of the RBF class.

For the inner-product class case, our result simply follows the result of \cite{liang2020just}.
We give the statement as follows:
\begin{theorem}[Inner-product case of Theorem \ref{thm:inner-product}] \label{thm:liang_rakhlin}
    Fix any $i \in [K]$.
    Suppose $c_L \leq d/N \leq c_U$ holds with some universal constants $c_L, c_U \in (0, \infty)$. 
    Let $(\delta_d)_{d \in \N_+}$ be the sequence such that $\delta_d \geq 16 d^{-1} (\gamma_d^{(i)})^{-2}$ and $\delta_d = o(1)$. Then, under Assumption~\ref{asmp:basic} with the inner-product class of kernels, with probability at least $1 - 2\delta_d - d^{-2}$, the following inequalities hold for sufficiently large $d$:
    \begin{equation}
        \label{eq:gen_ub_kapprox}
        \Ep [\|\hat{f}^{(i)} - f_\ast^{(i)}\|_{L_i^2}^2 \mid \bX^{(i)}] 
         \leq  \bar{V}_{d,N}^{(i)} + \bar{B}_{d,N}^{(i)} + \bar{\Phi}_{d,N}^{(i)} ~~\mathrm{and}~~\|K(\bX^{(i)}, \bX^{(i)}) - K^{\mathrm{lin}}(\bX^{(i)}, \bX^{(i)})\|_{\mathrm{op}} \leq \frac{\gamma_d^{(i)}}{2},
    \end{equation}
    where the effective variance $\bar{V}_{d,N}^{(i)}$, bias$\bar{B}_{d,N}^{(i)}$, and the residual $\bar{\Phi}_{d,N}^{(i)}$ are defined as follows:
    \begin{align}
        \bar{V}_{d,N}^{(i)} &= \frac{8 \sigma^2}{d} \sum_{j=1}^N \frac{\lambda_j\rbr{\hat{\Sigma}_d^{(i)} + \frac{\alpha_d^{(i)}}{\beta}1 1^\top} }{\sbr{\frac{\gamma_d^{(i)}}{\beta} + \lambda_j\rbr{\hat{\Sigma}_d^{(i)} + \frac{\alpha_d^{(i)}}{\beta} 1 1^\top } }^2}, \\
        \bar{B}_{d,N}^{(i)} &= B^2 \inf_{0 \leq k \leq N}\sbr{\frac{1}{N} \sum_{j > k} \lambda_j (K_X^{(i)}) + 2\sqrt{\frac{k}{N}}}, \\
        \bar{\Phi}_{d,N}^{(i)} &= O\rbr{\frac{\ln^{4.1} d}{d (\gamma_d^{(i)})^2} + \sqrt{\frac{\ln({N/\delta_d})}{N}}}.
    \end{align} 
    Furthermore, the linear approximation $K^{\mathrm{lin}}$ of gram matrix is defined as
    \begin{equation}
        K^{\mathrm{lin}}(\bX^{(i)}, \bX^{(i)}) = \gamma_d^{(i)} I + \alpha_d^{(i)} 11^{\top} + \beta\hat{\Sigma}_d^{(i)}.
    \end{equation}
\end{theorem}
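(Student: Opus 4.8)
The plan is to reduce the statement to a direct invocation of the generalization bound of \citet{liang2020just} for the minimum-norm interpolator, after establishing that, with high probability, the kernel Gram matrix $K(\bX^{(i)},\bX^{(i)})$ is uniformly close in operator norm to its linear surrogate $K^{\mathrm{lin}}(\bX^{(i)},\bX^{(i)}) = \gamma_d^{(i)} I + \alpha_d^{(i)} 11^\top + \beta \hat\Sigma_d^{(i)}$. First I would record the Taylor expansion of the scalar link $h$ around $0$: for the inner-product class, $K(X_j^{(i)},X_{j'}^{(i)}) = h(\langle X_j^{(i)},X_{j'}^{(i)}\rangle/d)$, and expanding $h$ to second order gives a leading linear piece $h(0) + h'(0)\langle X_j^{(i)},X_{j'}^{(i)}\rangle/d$ plus a diagonal correction of order $h''(0)\|X_j^{(i)}\|^2/d^2$ on the diagonal (which is what produces $\alpha_d^{(i)} = h(0) + h''(0)\mathrm{Tr}((\Sigma_d^{(i)})^2)/d^2$ after concentration) and a quadratic remainder. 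Matching terms identifies $\gamma_d^{(i)} = h(\tau_d^{(i)}/2) - h(0) - h'(0)\tau_d^{(i)}/2$ as the effective ridge level and $\beta = h'(0)$ as the scaling on $\hat\Sigma_d^{(i)}$, consistent with Definition~\ref{def:kernel_parameters}.

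Second, I would prove the operator-norm bound $\|K(\bX^{(i)},\bX^{(i)}) - K^{\mathrm{lin}}(\bX^{(i)},\bX^{(i)})\|_{\mathrm{op}} \le \gamma_d^{(i)}/2$ with probability at least $1 - \delta_d - d^{-2}$ (contributing the additive $\delta_d$ to the failure probability). This is the crux and is where the assumption $\delta_d \ge 16 d^{-1}(\gamma_d^{(i)})^{-2}$ enters: the off-diagonal quadratic remainder terms $\langle X_j^{(i)},X_{j'}^{(i)}\rangle/d$ concentrate around zero at rate $d^{-1/2}$ (using Assumption~\ref{asmp:basic}(iii), i.e.\ whitened, independent, bounded coordinates, together with $\|\Sigma_d^{(i)}\|_{\mathrm{op}}\le 1$), and summing the squared entrywise errors and applying a matrix concentration / Gershgorin-type argument yields an operator-norm error of order $\mathrm{polylog}(d)/(\sqrt{d})$ times constants depending on $\|h''\|_\infty$; demanding this be at most $\gamma_d^{(i)}/2$ is precisely the stated lower bound on $\delta_d$. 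This step is essentially Theorem~2.1 (or its proof) in \citet{el2010spectrum} / Lemma in \citet{liang2020just}, and I would cite it rather than reprove it, importing the $\ln^{4.1} d$ factor that shows up in $\bar\Phi_{d,N}^{(i)}$.

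Third, conditionally on the high-probability event from step two, I would feed the linearized Gram matrix into the bias--variance decomposition of \citet{liang2020just}. The variance term is controlled by the eigenvalues of $K^{\mathrm{lin}}$; since $K^{\mathrm{lin}} = \gamma_d^{(i)} I + \beta(\hat\Sigma_d^{(i)} + (\alpha_d^{(i)}/\beta)11^\top)$, its eigenvalues are $\gamma_d^{(i)} + \beta \lambda_j(\hat\Sigma_d^{(i)} + (\alpha_d^{(i)}/\beta)11^\top)$, and substituting this into the interpolator's variance formula produces exactly $\bar V_{d,N}^{(i)} = \frac{8\sigma^2}{d}\sum_j \lambda_j(\hat\Sigma_d^{(i)} + \frac{\alpha_d^{(i)}}{\beta}11^\top)/[\frac{\gamma_d^{(i)}}{\beta} + \lambda_j(\hat\Sigma_d^{(i)} + \frac{\alpha_d^{(i)}}{\beta}11^\top)]^2$ (the $8\sigma^2$ constant is inherited from \citet{liang2020just}). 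The bias term $\bar B_{d,N}^{(i)} = B^2 \inf_{0\le k\le N}[\frac1N\sum_{j>k}\lambda_j(K_X^{(i)}) + 2\sqrt{k/N}]$ follows from $\|f_\ast^{(i)}\|_{\mH}\le B$ (Assumption~\ref{asmp:basic}(i)) and the RKHS approximation argument there. Finally, the perturbation from step two and the concentration of $\alpha_d^{(i)}$, $\tau_d^{(i)}$ around their population values contribute the residual $\bar\Phi_{d,N}^{(i)} = O(\ln^{4.1}d/(d(\gamma_d^{(i)})^2) + \sqrt{\ln(N/\delta_d)/N})$; the $\sqrt{\ln(N/\delta_d)/N}$ piece is the standard $N$-sample concentration error in the trace/eigenvalue estimates and accounts for the final $d^{-2}$ in the failure probability together with a union bound. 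I expect the main obstacle to be carefully tracking that the operator-norm error of the Gram-matrix linearization propagates through the (non-Lipschitz-looking) variance functional without blowing up — this requires the event $\gamma_d^{(i)}/2$ to dominate the perturbation so that the shifted eigenvalues $\lambda_j(K(\bX^{(i)},\bX^{(i)}))$ and $\lambda_j(K^{\mathrm{lin}})$ stay comparably bounded below, which is exactly why the second inequality in \eqref{eq:gen_ub_kapprox} is stated as part of the conclusion rather than used only internally.
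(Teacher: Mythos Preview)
Your proposal is correct and follows essentially the same approach as the paper: both reduce the statement to Theorem~1 (and its proof via Theorem~2) of \citet{liang2020just}, with the only modification being that the confidence level $\delta_d$ is allowed to depend on $d$, and the condition $\delta_d \ge 16 d^{-1}(\gamma_d^{(i)})^{-2}$ is precisely what guarantees $d^{-1/2}(\delta_d^{-1/2} + \ln^{0.51} d) \le \gamma_d^{(i)}/2$ for large $d$, so that the operator-norm linearization error stays below the effective ridge $\gamma_d^{(i)}/2$. The paper's own treatment is in fact a one-line citation of \citet{liang2020just} with exactly this remark, so your more detailed unpacking of the Taylor expansion, the El~Karoui--type concentration for the off-diagonal remainder, and the propagation through the bias--variance decomposition is entirely consistent with (and more explicit than) what the paper records.
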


The only notable difference between the original Theorem~1 in \citet{liang2020just} and ours is that we allow the confidence level $\delta_d$ to vary depending on $d$. We require the condition $\delta_d = \Omega(d^{-1} (\gamma_d^{(i)})^{-2})$ so as to $d^{-1/2} (\delta_d^{-1/2} + \ln^{0.51} d) \leq \gamma_d^{(i)}/2$ holds for sufficiently large $d$ to follow the lines in the proof of Theorem 2 in \citet{liang2020just}.

We next discuss the case with the RBF class.
We first define a detailed version of the parameters in Definition \ref{def:kernel_parameters}.
The details are only used for the case with the RBF class.
\begin{definition}[Parameter for RBF class] \label{def:parameters}
    Define $\tau_d^{(i)} := 2\mathrm{tr}(\Sigma_d^{(i)}) / d$, $\psi \in \R^n$ with $\psi_j := \|X_j^{(i)}\|^2/d - \tau_d^{(i)} / 2$, and $\eta_d \searrow 0$ as $d \to \infty$ such that $\eta_d \geq \|\psi\|$ holds.
    For any $i \in [K]$, define $\alpha_d^{(i)}$, $\beta_d^{(i)}$, $\gamma_d^{(i)}$, and $\zeta_d^{(i)}$ as follows:
    \begin{align}
        \alpha_d^{(i)} &= h(\tau_d^{(i)})  + 2h''(\tau_d^{(i)})\mathrm{Tr}\rbr{(\Sigma_d^{(i)})^2)}/d^2,  ~
        \beta_d^{(i)} =-2 h'(\tau_d^{(i)}), \\
        r_d^{(i)} &= \frac{\gamma_d^{(i)}}{\beta_d^{(i)}}, ~\gamma_d^{(i)} = h(0) + \tau_d^{(i)} h'(\tau_d^{(i)}) - h(\tau_d^{(i)}), \\
        \rho_d^{(i)} &= h'(\tau_d^{(i)}) \psi + h''(\tau_d^{(i)}) \psi \circ \psi/ 2, ~ \zeta_d^{(i)} = h'(\tau_d^{(i)}) \psi (\psi)^\top.
    \end{align}
    $\circ$ denotes a element-wise product.
\end{definition}

\begin{remark}
\label{rem:coefficients}
Suppose $\Sigma_d^{(i)}$ satisfies $\trace(\Sigma_d^{(i)}) = O(d^{1/2 + a})$ with some $a \in (0,1/2)$, $\|X_i\| = O_P(d^b)$ with some $b \in \mathbb{R}$, and $h(\cdot)$ is Lipschitz continuous.
Then, we have $\tau_d^{(i)} = O(d^{-1/2 + a})$, $\alpha_d^{(i)} = h(0) + O(d^{-1/2 + a})$, $\gamma_d^{(i)} = O(d^{-1/2 + a})$, $\|\rho_d^{(i)}\| = O(d^{b})$, and $\|\zeta_d^{(i)}\|_\mathrm{op} = O(d^{2b})$.    
\end{remark} 

The following \thmref{lem:gen_err} describes the generalization error bounds 
adapted from Theorem~1 in \citet{liang2020just}. 
This proposition comes from the following lemma.

\begin{theorem}[RBF class case of Theorem \ref{thm:inner-product}]
    \label{lem:gen_err}
    Fix any $i \in [K]$.
    Suppose $c_L \leq d/N \leq c_U$ holds with some universal constants $c_L, c_U \in (0, \infty)$. 
    Furthermore, define $\delta_d \in (0, 1)$ such that
    $\delta_d = \omega(d^{-1} (\gamma_d^{(i)})^{-2})$ and $\delta_d = o(1)$ hold.
    Then, under Assumption~\ref{asmp:basic} with the RBF class of kernels, with probability at least $1 - 2\delta_d - O(d^{-2}) -\eta_d$, the following inequality holds for sufficiently large $d$:
    \begin{equation}
        \label{eq:gen_ub_kapprox_rbf}
        \Ep [\|\hat{f}^{(i)} - f_\ast^{(i)}\|_{L_i^2}^2 \mid \bX^{(i)}] 
         \leq  \bar{V}_{d,N}^{(i)} + \bar{B}_{d,N}^{(i)} + \Phi_{d,N}^{(i)}, ~~\mathrm{and}~~\|K(\bX^{(i)}, \bX^{(i)}) - K^{\mathrm{lin}}(\bX^{(i)}, \bX^{(i)})\|_{\mathrm{op}} \leq \frac{\gamma_d^{(i)}}{2}, 
    \end{equation}
    where the variance term $\bar{V}_{d,N}^{(i)}$ and the bias term $\bar{B}_{d,N}^{(i)}$, and the residual ${\Phi}_{d,N}^{(i)}$ are defined as follows:
    \begin{align}
        \bar{V}_{d,N}^{(i)} &= \frac{8 \sigma^2}{d} \sum_{j=1}^N \frac{\lambda_j\rbr{\hat{\Sigma}_d^{(i)} + \frac{\alpha_d^{(i)}}{\beta_d^{(i)}} 1 1^\top} }{\sbr{r_d^{(i)} + \lambda_j\rbr{\hat{\Sigma}_d^{(i)} + \frac{\alpha_d^{(i)}}{\beta_d^{(i)}}1 1^\top + \frac{1}{\beta_d^{(i)}}\rho_d^{(i)} 1^\top  + \frac{1}{\beta_d^{(i)}} 1 (\rho_d^{(i)})^\top + \frac{1}{\beta_d^{(i)}} \zeta_d^{(i)}} }^2}, \\
        \bar{B}_{d,N}^{(i)} &= B^2 \inf_{0 \leq k \leq N}\sbr{\frac{1}{N} \sum_{j > k} \lambda_j (K_X^{(i)}) + 2 \sqrt{\frac{k}{N}}}, \\
        \Phi_{d,N}^{(i)} &= O\rbr{\frac{\ln^{4.1} d}{d (\gamma_d^{(i)})^2} + \sqrt{\frac{\ln({N/\delta_d})}{N}}}.
    \end{align}
    Furthermore, the linear approximation $K^{\mathrm{lin}}$ of gram matrix is defined as
    \begin{equation}
        K^{\mathrm{lin}}(\bfXi,\bfXi) := \gamma_d^{(i)} I + \alpha_d^{(i)} 1_d 1_d^\top + \beta_d^{(i)} \bfXi (\bfXi)^\top / d + \rho_d^{(i)} 1_d^\top  + 1_d (\rho_d^{(i)})^\top + \zeta_d^{(i)}.\label{eq:kernel_lin_rbf}
    \end{equation}
\end{theorem}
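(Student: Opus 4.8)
\textbf{Proof proposal for Theorem~\ref{lem:gen_err}.}

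The plan is to reduce the RBF case to the inner-product analysis of \citet{liang2020just} by exhibiting an explicit linear-quadratic approximation of the RBF Gram matrix and then controlling the residual. First I would perform a second-order Taylor expansion of $h$ around $\tau_d^{(i)}/2$: writing the $(j,j')$ entry of the Gram matrix as $h\bigl(\|X_j^{(i)}-X_{j'}^{(i)}\|^2/d\bigr)$ and using the polarization identity $\|X_j^{(i)}-X_{j'}^{(i)}\|^2/d = \|X_j^{(i)}\|^2/d + \|X_{j'}^{(i)}\|^2/d - 2\langle X_j^{(i)},X_{j'}^{(i)}\rangle/d$, I would substitute $\|X_j^{(i)}\|^2/d = \tau_d^{(i)}/2 + \psi_j$. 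Expanding $h$ to second order in the deviation from $\tau_d^{(i)}$ produces exactly the terms appearing in \eqref{eq:kernel_lin_rbf}: the constant $h(\tau_d^{(i)})$ on and off the diagonal (giving $\alpha_d^{(i)} 1_d 1_d^\top$ after absorbing the diagonal correction via $h''$ and $\mathrm{Tr}((\Sigma_d^{(i)})^2)/d^2$), the linear-in-inner-product term $-2h'(\tau_d^{(i)})\hat{\Sigma}_d^{(i)} = \beta_d^{(i)}\bfXi(\bfXi)^\top/d$, the $\psi$-linear corrections collected into $\rho_d^{(i)} 1_d^\top + 1_d(\rho_d^{(i)})^\top$, the rank-structured $\zeta_d^{(i)} = h'(\tau_d^{(i)})\psi\psi^\top$, and the diagonal shift $\gamma_d^{(i)} I$ coming from the difference between $h(0)$ on the diagonal and $h(\tau_d^{(i)}) - \tau_d^{(i)} h'(\tau_d^{(i)})$. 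This matches Definition~\ref{def:parameters}.

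Next I would bound the operator norm of the residual $K(\bfXi,\bfXi) - K^{\mathrm{lin}}(\bfXi,\bfXi)$. The residual is entrywise $O\bigl((\,\text{deviation}\,)^3\bigr)$ by Taylor's theorem with $h \in C^3$, and the deviations are $\langle X_j^{(i)},X_{j'}^{(i)}\rangle/d$ and $\psi_j$. Using Assumption~\ref{asmp:basic}(iii) (bounded whitened coordinates) together with the spectral bound $\|\Sigma_d^{(i)}\|_{\mathrm{op}}\le 1$, standard concentration gives $|\langle X_j^{(i)},X_{j'}^{(i)}\rangle|/d = \tilde{O}(d^{-1/2})$ off-diagonal with high probability, while the hypotheses $\|X_1^{(i)}\| = o(d^{-c})$ — more precisely, the event $\|\psi\|\le\eta_d$ which fails with probability at most $\eta_d$ — controls the $\psi$-contributions. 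A Gershgorin / row-sum bound on the $O(N)\times O(N)$ residual matrix, combined with $d/N \asymp 1$, yields $\|K - K^{\mathrm{lin}}\|_{\mathrm{op}} \le \tilde{O}(d^{-1/2})$, which is $\le \gamma_d^{(i)}/2$ for large $d$ provided $\gamma_d^{(i)} \ge \underline{c}\,\mathrm{Tr}(\Sigma_d^{(i)})/d$ (so $\gamma_d^{(i)}$ does not decay faster than the residual). This is the second displayed inequality in \eqref{eq:gen_ub_kapprox_rbf}.

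With the linear approximation and its residual bound in hand, the final step mirrors the inner-product argument of Theorem~\ref{thm:liang_rakhlin}: the interpolator's error decomposes into a variance term governed by the eigenvalues of $(K^{\mathrm{lin}})^{-1}$ — which, since $K^{\mathrm{lin}} = \gamma_d^{(i)} I + \beta_d^{(i)}(\hat{\Sigma}_d^{(i)} + (\alpha_d^{(i)}/\beta_d^{(i)})1_d1_d^\top + \cdots)$, becomes the stated $\bar{V}_{d,N}^{(i)}$ — and a bias term controlled by the tail eigenvalues of the true Gram matrix $K_X^{(i)}$, giving $\bar{B}_{d,N}^{(i)}$; the perturbation from replacing $K$ by $K^{\mathrm{lin}}$ and the concentration slack feed into $\Phi_{d,N}^{(i)} = O\bigl(\ln^{4.1}d/(d(\gamma_d^{(i)})^2) + \sqrt{\ln(N/\delta_d)/N}\bigr)$, exactly as in \citet{liang2020just} once the confidence level is allowed to depend on $d$ through the condition $\delta_d = \omega(d^{-1}(\gamma_d^{(i)})^{-2})$. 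I expect the main obstacle to be the operator-norm control of the cubic residual in the regime where $\gamma_d^{(i)}$ itself is vanishing: one must verify that the decay of $\gamma_d^{(i)}$ (tied to $\mathrm{Tr}(\Sigma_d^{(i)})/d$ via the assumed lower bound) is genuinely slower than the $\tilde{O}(d^{-1/2})$ residual scale, and that the extra rank-structured pieces $\rho_d^{(i)}1_d^\top$, $\zeta_d^{(i)}$ — which have no analogue in the inner-product case — do not inflate the variance term; handling these requires the sharper bookkeeping recorded in Remark~\ref{rem:coefficients}.
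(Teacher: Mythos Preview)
Your overall architecture---Taylor expand the RBF kernel about $\tau_d^{(i)}$, identify the linear surrogate $K^{\mathrm{lin}}$, control the operator-norm residual, then import the bias--variance decomposition from \citet{liang2020just}---matches the paper's route. The paper likewise defers the expansion and residual control to \citet{el2010spectrum} (their Proposition~\ref{prop:kernel_linear_approx}), and then argues that the variance term reduces to $\Ep\bigl[\|K^{\mathrm{lin}}(\bfXi,\bfXi)^{-1}K^{\mathrm{lin}}(\bfXi,x)\|^2\bigr]$ once one checks $\lambda_{\min}(K(\bfXi,\bfXi))\ge\gamma_d^{(i)}-o_P(1)$, while the bias term is verbatim from the inner-product case.

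There is, however, a genuine gap in your residual step. You assert that $K-K^{\mathrm{lin}}$ is entrywise $O((\text{deviation})^3)$ and then apply a Gershgorin/row-sum bound. But $K^{\mathrm{lin}}$ does \emph{not} absorb the full quadratic part of the Taylor expansion: the constant $\alpha_d^{(i)}$ only captures the \emph{mean} $2h''(\tau_d^{(i)})\mathrm{Tr}((\Sigma_d^{(i)})^2)/d^2$ of the squared-inner-product contribution, so the centred piece $h''(\tau_d^{(i)})\bigl[(\langle X_j,X_{j'}\rangle/d)^2-\mathrm{Tr}((\Sigma_d^{(i)})^2)/d^2\bigr]$ remains in the residual. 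This is exactly the off-diagonal second-order term the paper (following El~Karoui) calls $E_2$, and its entries are $O(d^{-1})$, not $O(d^{-3/2})$. A Gershgorin bound then gives only $N\cdot O(d^{-1})=O(1)$, which is useless against $\gamma_d^{(i)}/2$. The paper instead controls $E_2$ by a fourth-moment/Chebyshev argument: one shows $\Ep[\mathrm{tr}(E_2^4)]\le Cd^{-2}+\eta_d$ (this is where the $\eta_d$ in the probability statement actually enters, via $\max_j|\psi_j|$), and then Markov on $\{\mathrm{tr}(E_2^4)\ge d^{-2}\delta^{-2}\}$ yields $\|E_2\|_{\mathrm{op}}\le d^{-1/2}\delta^{-1/2}$ with probability $1-\delta-\eta_d$. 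Your row-sum argument is adequate for the genuinely cubic piece $E_3$, but you need this moment method (or an equivalent cancellation-exploiting device) for $E_2$; without it the operator-norm bound, and hence the comparison $\|K-K^{\mathrm{lin}}\|_{\mathrm{op}}\le\gamma_d^{(i)}/2$, does not go through.
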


In the case of Remark \ref{rem:coefficients}, we obtain an explicit rate as follows:
Namely, the coefficient $\delta_d$ in Theorem \ref{lem:gen_err} has an order $\delta_d = O(d^{-2a})$.
Also, if we set $\eta_d = O(d^{b})$, the statement of Theorem \ref{lem:gen_err} holds with probability at least $1 - 2\delta_d - d^{-2} -\eta_d$.

\subsection{Detailed examples}
\label{subsec_detail_expl}

We discuss an explicit convergence rate of Theorem \ref{thm:liang_rakhlin} and Theorem \ref{lem:gen_err}.

\begin{lemma}[Detailed version of Case I in \exref{ex:liang_sec4} with inner-product class of kernels, adapted from Example 4.1 in \citet{liang2020just}]
\label{lem:low_rank} 
    Suppose $N > d$, and $\Sigma_d^{(i)} = \mathrm{diag}(1,\dots,1,0,\dots,0)$ to be a sparse diagonal 
    matrix with $\lfloor \varepsilon d \rfloor$ ones $1$ where $\varepsilon \in [1/d, 1)$.
    Then, under Assumption~\ref{asmp:basic} with the inner-product class of kernels, sufficiently small $\varepsilon d/N$, and event ~\eqref{eq:gen_ub_kapprox},
    \begin{equation}
        \bar{V}_{d,N}^{(i)} \leq 32\sigma^2 \varepsilon \frac{d}{N} + O\rbr{\frac{1}{d\varepsilon^2}} ~~\mathrm{and}~~\bar{B}_{d,N}^{(i)} \leq  
        B^2 \rbr{\frac{3 h_{\mathrm{max}}'' \varepsilon^2}{4} + \beta \varepsilon} + O\rbr{\sqrt{\frac{\ln (1/\bar{\delta})}{N}}}
    \end{equation}
    hold with probability at least $1 - 2\bar{\delta}$ for any $\bar{\delta} \in (0, 1)$. Here, we define
    $h_{\mathrm{max}}''$ as
    $h_{\mathrm{max}}'' = \mathrm{max}_{a \in [0, 1]} h''(a)$.
\end{lemma}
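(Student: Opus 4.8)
The plan is to specialize the general bounds of \thmref{thm:liang_rakhlin}: the variance term $\bar V_{d,N}^{(i)}$ depends only on the spectrum of the spiked empirical covariance $M := \hat\Sigma_d^{(i)} + \tfrac{\alpha_d^{(i)}}{\beta}1_N1_N^\top$, whereas the bias term $\bar B_{d,N}^{(i)}$ is controlled through the trace and top eigenvalue of the Gram matrix $K_X^{(i)}$, and I would treat the two separately. First I would record the kernel parameters for this instance. Since $\Sigma_d^{(i)}$ has exactly $m:=\lfloor\varepsilon d\rfloor$ unit eigenvalues and the rest zero, $\mathrm{Tr}(\Sigma_d^{(i)}) = \mathrm{Tr}((\Sigma_d^{(i)})^2) = m$, so $\tau_d^{(i)} = 2m/d\in[2\varepsilon-2/d,\,2\varepsilon]$, $\alpha_d^{(i)} = h(0) + h''(0)m/d^2 = h(0) + O(\varepsilon/d)$, $\beta_d^{(i)} = h'(0) = \beta$, and a second-order Taylor expansion gives $\gamma_d^{(i)} = h(\tau_d^{(i)}/2) - h(0) - h'(0)\tau_d^{(i)}/2 = \tfrac12 h''(\xi)(m/d)^2$ for some $\xi\in(0,m/d)\subset(0,1)$; in particular $\tfrac{\underline c}{8}\varepsilon^2\le\gamma_d^{(i)}\le\tfrac{h_{\mathrm{max}}''}{2}\varepsilon^2$ for $d$ large, using $m\ge\varepsilon d/2$.

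For the variance, note $\hat\Sigma_d^{(i)} = \bX_{\mathrm{act}}^{(i)}(\bX_{\mathrm{act}}^{(i)})^\top/d$ with $\bX_{\mathrm{act}}^{(i)}\in\R^{N\times m}$ collecting the $m$ active coordinates; hence $M$ has rank at most $m+1$, and since the spike is PSD, Weyl's inequality gives $\lambda_j(M)\ge\lambda_j(\hat\Sigma_d^{(i)})$ for every $j$. A standard smallest-singular-value bound for a random matrix with i.i.d.\ bounded (hence sub-Gaussian) rows of identity covariance shows $\sigma_{\min}(\bX_{\mathrm{act}}^{(i)})\ge\sqrt N/2$, so $\lambda_j(\hat\Sigma_d^{(i)})\ge N/(4d)$ for all $j\le m$, on an event of probability at least $1-\bar\delta$, provided $\varepsilon d/N$ is small enough. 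I then bound $\bar V_{d,N}^{(i)}$ term by term using the two elementary inequalities $\tfrac{\lambda}{(\gamma_d^{(i)}/\beta+\lambda)^2}\le\tfrac1\lambda$ and $\tfrac{\lambda}{(\gamma_d^{(i)}/\beta+\lambda)^2}\le\tfrac{\beta}{4\gamma_d^{(i)}}$: the $m$ eigenvalues $\lambda_1(M),\dots,\lambda_m(M)\ge N/(4d)$ each contribute at most $4d/N$, the single remaining nonzero eigenvalue $\lambda_{m+1}(M)$ contributes at most $\beta/(4\gamma_d^{(i)})$, and the $N-m-1$ zero eigenvalues contribute nothing, so $\bar V_{d,N}^{(i)}\le\tfrac{8\sigma^2}{d}\big(\tfrac{4md}{N}+\tfrac{\beta}{4\gamma_d^{(i)}}\big)\le 32\sigma^2\varepsilon\tfrac dN + O(1/(d\varepsilon^2))$, as claimed; this step uses neither event~\eqref{eq:gen_ub_kapprox} nor $\bar\delta$ beyond the singular-value event.

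For the bias I would take $k=1$ in the infimum defining $\bar B_{d,N}^{(i)}$, so the penalty $2\sqrt{k/N}$ equals $2/\sqrt N$ and it remains to bound $\tfrac1N\sum_{j\ge2}\lambda_j(K_X^{(i)}) = \tfrac1N\big(\mathrm{Tr}(K_X^{(i)}) - \lambda_1(K_X^{(i)})\big)$. For the trace, $\mathrm{Tr}(K_X^{(i)}) = \sum_{n=1}^N h(\|X_n^{(i)}\|^2/d)$ with summands in $(0,1]$, so Hoeffding gives $\tfrac1N\mathrm{Tr}(K_X^{(i)})\le\Ep[h(\|X_1^{(i)}\|^2/d)] + O(\sqrt{\log(1/\bar\delta)/N})$ on an event of probability at least $1-\bar\delta$; writing $R:=\|X_1^{(i)}\|^2/d$ (an average of $m$ bounded unit-mean terms scaled by $m/d$), we have $\Ep R\le\varepsilon$ and $\Ep R^2\le\varepsilon^2+O(\varepsilon/d)$, and the Taylor bound $h(t)\le h(0)+\beta t+\tfrac{h_{\mathrm{max}}''}{2}t^2$ yields $\Ep[h(R)]\le h(0)+\beta\varepsilon+\tfrac{h_{\mathrm{max}}''}{2}\varepsilon^2+O(\varepsilon/d)$. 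For the top eigenvalue I use the operator-norm closeness $\|K_X^{(i)}-K^{\mathrm{lin}}(\bX^{(i)},\bX^{(i)})\|_{\mathrm{op}}\le\gamma_d^{(i)}/2$ granted by event~\eqref{eq:gen_ub_kapprox}, together with $K^{\mathrm{lin}} = \gamma_d^{(i)}I + \beta\hat\Sigma_d^{(i)} + \alpha_d^{(i)}1_N1_N^\top\succeq\alpha_d^{(i)}1_N1_N^\top$, to get $\lambda_1(K^{\mathrm{lin}})\ge\alpha_d^{(i)}N$ and hence $\lambda_1(K_X^{(i)})\ge\alpha_d^{(i)}N-\gamma_d^{(i)}/2$. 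Subtracting, the $h(0)$ terms cancel (up to $O(\varepsilon/d)$, which is absorbed into $O(\sqrt{\log(1/\bar\delta)/N})$ since $N=\Theta(d)$), and after collecting the remaining fluctuation terms into a slightly loose constant one obtains $\tfrac1N\sum_{j\ge2}\lambda_j(K_X^{(i)})\le\beta\varepsilon+\tfrac{3h_{\mathrm{max}}''}{4}\varepsilon^2 + O(\sqrt{\log(1/\bar\delta)/N})$; multiplying by $B^2$ and union-bounding the two auxiliary events (total failure $\le2\bar\delta$) in conjunction with event~\eqref{eq:gen_ub_kapprox} finishes the proof.

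The main obstacle is the spectral control of the spiked empirical covariance $M$ in the variance term: one must guarantee that the $m=\lfloor\varepsilon d\rfloor$ signal eigenvalues remain of order $N/d$, so that their ridge-type shrinkage factor behaves like $1/\lambda$ rather than the worst-case $\beta/(4\gamma_d^{(i)})=\Theta(1/\varepsilon^2)$, and this is exactly what forces the sample-size condition ``$\varepsilon d/N$ sufficiently small'' and requires the smallest-singular-value estimate to hold in the regime $m\to\infty$, $m/N\to0$. A secondary point is that the rank-one spike $\tfrac{\alpha_d^{(i)}}{\beta}1_N1_N^\top$ creates one extra nonzero eigenvalue of $M$ whose size cannot be lower-bounded without a Schur-complement computation; I sidestep this by bounding its contribution with the uniform factor $\beta/(4\gamma_d^{(i)})$, which is precisely the source of the residual $O(1/(d\varepsilon^2))$.
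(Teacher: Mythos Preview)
Your proposal is correct and, for the variance term, essentially coincides with the paper's argument: the paper routes through \lemref{lem:exact_ub_VB} (which uses Weyl to peel off the rank-one spike, yielding the $\beta/\gamma_d^{(i)}$ residual) and then cites Example~4.1 and (B.6) of \citet{liang2020just} for the smallest-singular-value control of $\hat\Sigma_d^{(i)}$; you have simply unpacked that citation into an explicit $\sigma_{\min}(\bX_{\mathrm{act}}^{(i)})\ge\sqrt N/2$ step and applied the identical bound $g(\lambda)\le1/\lambda$.

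For the bias your decomposition is genuinely different. The paper, again via \lemref{lem:exact_ub_VB}, bounds $\sum_{j\ge2}\lambda_j(K_X^{(i)})$ eigenvalue-by-eigenvalue through $K^{\mathrm{lin}}$ (Weyl plus the explicit form $K^{\mathrm{lin}}=\gamma_d^{(i)}I+\alpha_d^{(i)}11^\top+\beta\hat\Sigma_d^{(i)}$), arriving at $\bar B_{d,N}^{(i)}\le B^2[\tfrac32\gamma_d^{(i)}+\tfrac\beta N\mathrm{Tr}(\hat\Sigma_d^{(i)})+\sqrt{4/N}]$, and then concentrates the \emph{empirical-covariance} trace $\mathrm{Tr}(\hat\Sigma_d^{(i)})/N$ around $\mathrm{Tr}(\Sigma_d^{(i)})/d\le\varepsilon$. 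You instead write $\sum_{j\ge2}\lambda_j(K_X^{(i)})=\mathrm{Tr}(K_X^{(i)})-\lambda_1(K_X^{(i)})$, concentrate the \emph{kernel} trace directly via Hoeffding, and lower-bound $\lambda_1(K_X^{(i)})$ using only the $\alpha_d^{(i)}11^\top$ block of $K^{\mathrm{lin}}$. Your route is more self-contained and in fact yields the sharper leading constant $\tfrac{h_{\max}''}{2}\varepsilon^2$ before you relax it to $\tfrac{3h_{\max}''}{4}\varepsilon^2$; the paper's constant $\tfrac34$ arises because it carries the full $\tfrac32\gamma_d^{(i)}$ term. The $O(\varepsilon/d)$ residual you flag is harmless: under the ambient $N=\Theta(d)$ regime it is $o(N^{-1/2})$, and in any case $\varepsilon/d\le\varepsilon^2$ whenever $\varepsilon\ge1/d$, so it can be absorbed into the slack between $\tfrac12$ and $\tfrac34$.
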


\begin{lemma}[Detailed version of Case II in \exref{ex:liang_sec4} with inner-product class of kernels, adapted from Example 4.2 in \citet{liang2020just}]
\label{lem:approx_low_rank}
    Suppose $N > d$ and $\Sigma_d^{(i)} = \text{diag}(1, \varepsilon, \ldots, \varepsilon)$ for $\varepsilon \in (0, 1)$.
    Then, under Assumption~\ref{asmp:basic} with the inner-product class of kernels, sufficiently small $d/N$, and event ~\eqref{eq:gen_ub_kapprox},
    \begin{align}
        \bar{V}_{d,N}^{(i)} \leq 32\sigma^2 \varepsilon^{-1} \frac{d}{N} + O\rbr{\frac{1}{d\varepsilon^2}}
        ~~\mathrm{and}~~\bar{B}_{d,N}^{(i)} \leq B^2 \rbr{ \frac{3 h_{\mathrm{max}}''}{2} \varepsilon^2 + \beta\varepsilon} + O\rbr{\frac{1}{d} + \sqrt{\frac{\ln (1/\bar{\delta})}{N}}} 
    \end{align}
    hold with probability at least $1 - 2\bar{\delta}$ for any $\bar{\delta} \in (0, 1)$.
\end{lemma}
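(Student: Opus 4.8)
\textbf{Proof proposal for Lemma~\ref{lem:approx_low_rank}.}

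The plan is to specialize Theorem~\ref{thm:liang_rakhlin} to the covariance $\Sigma_d^{(i)} = \mathrm{diag}(1,\varepsilon,\dots,\varepsilon)$, and then control the effective variance $\bar{V}_{d,N}^{(i)}$ and bias $\bar{B}_{d,N}^{(i)}$ separately using the explicit spectral structure. First I would compute the kernel-parameter quantities from Definition~\ref{def:kernel_parameters}: since $\mathrm{Tr}(\Sigma_d^{(i)}) = 1 + (d-1)\varepsilon$, we get $\tau_d^{(i)} = 2(1+(d-1)\varepsilon)/d \asymp 2\varepsilon$ for $\varepsilon$ not too small, and $\mathrm{Tr}((\Sigma_d^{(i)})^2)/d^2 = (1+(d-1)\varepsilon^2)/d^2 = O(\varepsilon^2/d + 1/d^2)$, so $\alpha_d^{(i)} = h(0) + O(1/d)$, $\beta = h'(0)$, and by a second-order Taylor expansion $\gamma_d^{(i)} = h(\tau_d^{(i)}/2) - h(0) - h'(0)\tau_d^{(i)}/2 = \tfrac12 h''(\xi)(\tau_d^{(i)}/2)^2$ for some $\xi \in [0,\tau_d^{(i)}/2]$, hence $\gamma_d^{(i)} \asymp \varepsilon^2$ up to constants involving $h''_{\min}, h''_{\max}$. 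This gives $\gamma_d^{(i)}/\beta \asymp \varepsilon^2$, which is the key ridge-level quantity.

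For the variance term, I would bound $\bar{V}_{d,N}^{(i)} = \frac{8\sigma^2}{d}\sum_{j=1}^N \lambda_j(\hat{\Sigma}_d^{(i)} + \tfrac{\alpha_d^{(i)}}{\beta} 1 1^\top)/[\gamma_d^{(i)}/\beta + \lambda_j(\hat{\Sigma}_d^{(i)} + \tfrac{\alpha_d^{(i)}}{\beta} 1 1^\top)]^2$. The rank-one spike $\tfrac{\alpha_d^{(i)}}{\beta} 1 1^\top$ contributes one large eigenvalue of order $d$ (since $\|1\|^2 = d$); for that eigenvalue the summand is $O(1/d)$ after the division by $d$ out front. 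For the remaining eigenvalues, one uses that $\hat{\Sigma}_d^{(i)} = \bX^{(i)}\bX^{(i)\top}/d$ has operator norm controlled by random matrix theory and, more importantly, that the sum of its eigenvalues equals its trace, which concentrates around $\mathrm{Tr}(\Sigma_d^{(i)})/d \asymp \varepsilon$ (using Assumption~\ref{asmp:basic}(iii) bounded coordinates). Using the crude bound $\lambda/(\gamma_d^{(i)}/\beta + \lambda)^2 \le \lambda/(\gamma_d^{(i)}/\beta)^2 = \lambda \beta^2/(\gamma_d^{(i)})^2$ and $\gamma_d^{(i)} \asymp \varepsilon^2$ we get $\bar{V}_{d,N}^{(i)} \lesssim \frac{\sigma^2}{d}\cdot\frac{\sum_j \lambda_j(\hat\Sigma_d^{(i)})}{\varepsilon^4}$; but this is too lossy. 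Instead I would follow Liang--Rakhlin's Example~4.2 argument: split the eigenvalues into those above and below $\gamma_d^{(i)}/\beta$, bound the small ones by $\sum \lambda_j / (\gamma_d^{(i)}/\beta)^2$ over a truncated index set and the large ones by $(\text{number}) \cdot (1/\text{smallest large eigenvalue})$. With $\mathrm{Tr}(\hat\Sigma_d^{(i)}) \approx \varepsilon d$ and $\gamma_d^{(i)}/\beta \asymp \varepsilon^2$, the dominant contribution is $\frac{\sigma^2}{d}\cdot \frac{N \cdot \varepsilon}{\varepsilon^2}$-type terms, which after carefully tracking constants gives the stated $32\sigma^2 \varepsilon^{-1} d/N + O(1/(d\varepsilon^2))$ — the residual $O(1/(d\varepsilon^2))$ absorbing the rank-one spike term and the concentration error in $\mathrm{Tr}(\hat\Sigma_d^{(i)})$.

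For the bias term $\bar{B}_{d,N}^{(i)} = B^2 \inf_{0\le k\le N}[\tfrac1N\sum_{j>k}\lambda_j(K_X^{(i)}) + 2\sqrt{k/N}]$, I would take $k = 1$: then $\sqrt{k/N}$ is negligible, and $\tfrac1N\sum_{j>1}\lambda_j(K_X^{(i)}) \le \tfrac1N(\mathrm{Tr}(K_X^{(i)}) - \lambda_1(K_X^{(i)}))$. Using the linear approximation $\|K(\bX^{(i)},\bX^{(i)}) - K^{\mathrm{lin}}(\bX^{(i)},\bX^{(i)})\|_{\mathrm{op}} \le \gamma_d^{(i)}/2$ from event~\eqref{eq:gen_ub_kapprox} with $K^{\mathrm{lin}} = \gamma_d^{(i)} I + \alpha_d^{(i)} 11^\top + \beta\hat\Sigma_d^{(i)}$, the top eigenvalue of $K_X^{(i)}$ tracks the spike $\alpha_d^{(i)} d \approx h(0)d$, and $\mathrm{Tr}(K_X^{(i)}) = \sum_j K(X_j^{(i)},X_j^{(i)}) \le N$ by the normalization $K(x,x)\le 1$, more precisely $\mathrm{Tr}(K_X^{(i)}) \approx N\gamma_d^{(i)} + \alpha_d^{(i)} d + \beta\,\mathrm{Tr}(\hat\Sigma_d^{(i)})$. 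Subtracting the leading spike eigenvalue leaves $\tfrac1N(N\gamma_d^{(i)} + \beta\,\mathrm{Tr}(\hat\Sigma_d^{(i)}) + O(\gamma_d^{(i)})) \asymp \gamma_d^{(i)} + \beta\varepsilon\cdot\tfrac{d}{N}$; since $\gamma_d^{(i)} \le \tfrac32 h''_{\max}\varepsilon^2$ (here I would be careful to get the constant $\tfrac32$ from $\gamma_d^{(i)} = \tfrac12 h''(\xi)(\tau_d^{(i)}/2)^2$ and $\tau_d^{(i)}/2 = (1+(d-1)\varepsilon)/d \le \sqrt{3}\varepsilon$ for the relevant regime), and the $\varepsilon d/N$ term is $O(\varepsilon)$ for $d/N$ bounded, this yields $\bar{B}_{d,N}^{(i)} \le B^2(\tfrac32 h''_{\max}\varepsilon^2 + \beta\varepsilon) + O(1/d + \sqrt{\ln(1/\bar\delta)/N})$, where the $O(1/d)$ comes from the $\alpha_d^{(i)} = h(0)+O(1/d)$ correction and the $O(\sqrt{\ln(1/\bar\delta)/N})$ from concentrating $\mathrm{Tr}(\hat\Sigma_d^{(i)})$ and $\mathrm{Tr}(K_X^{(i)})$ via Assumption~\ref{asmp:basic}(iii) and a union bound over the two events (giving the failure probability $2\bar\delta$).

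The main obstacle I expect is getting the variance bound with the \emph{correct constants} rather than just the right rate: the naive bound $\lambda/(\gamma_d^{(i)}/\beta + \lambda)^2 \le 1/(4\gamma_d^{(i)}/\beta)$ per eigenvalue is too crude and would lose the $\varepsilon^{-1}$ factor, so one must exploit the specific two-scale structure of the spectrum of $\hat\Sigma_d^{(i)} + \tfrac{\alpha_d^{(i)}}{\beta}11^\top$ (one eigenvalue of order $d$, a bulk of order $\varepsilon$), apply the Marchenko--Pastur-type concentration from \citet{el2010spectrum} to the bulk, and carefully separate the ``$\lambda_j \ge \gamma_d^{(i)}/\beta$'' regime from the ``$\lambda_j < \gamma_d^{(i)}/\beta$'' regime — this is exactly the delicate step and is where the proof essentially reduces to reproducing Example~4.2 of \citet{liang2020just} with the parameter identifications above.
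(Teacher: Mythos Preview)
Your proposal is correct and follows essentially the same route as the paper: both reduce the variance sum to the $\hat{\Sigma}_d^{(i)}$-only spectrum by peeling off the rank-one spike $\tfrac{\alpha_d^{(i)}}{\beta}11^\top$, then invoke Example~4.2 of \citet{liang2020just}, and both handle the bias by taking $k=1$ and using the linear approximation $K^{\mathrm{lin}}$ together with trace concentration. The paper packages the spike-removal step more cleanly into a standalone Weyl-inequality lemma (\lemref{lem:exact_ub_VB}), which also explains the $\tfrac{3}{2}$ constant in the bias as $\gamma_d^{(i)} + \tfrac{\gamma_d^{(i)}}{2}$ rather than via your $\tau_d^{(i)}/2 \le \sqrt{3}\varepsilon$ argument; note also that $1 \in \R^N$ so $\|1\|^2 = N$, not $d$, though this is harmless since $N \asymp d$.
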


\begin{lemma}[Detailed version of Case III in \exref{ex:liang_sec4} with inner-product class of kernels, adapted from Example 4.4 in \citet{liang2020just}]
\label{lem:N_ll_d}
    Suppose $d > N$ and $\mathrm{Tr}(\Sigma_d^{(i)})/d = \varepsilon$ for $\varepsilon \in (0, 1)$.
    Then, under Assumption~\ref{asmp:basic} with the inner-product kernel and event ~\eqref{eq:gen_ub_kapprox},
    \begin{align}
        \bar{V}_{d,N}^{(i)}  \leq \frac{8\sigma^2 \beta}{ \varepsilon^2 h_{\mathrm{min}}''} \rbr{\frac{d}{N}}^{-1} + O\rbr{\frac{1}{d\varepsilon^2}}
        ~~\mathrm{and}~~
        \bar{B}_{d,N}^{(i)} \leq B^2 \rbr{ \frac{3 h_{\mathrm{max}}''}{4} \varepsilon^2 + \beta \varepsilon} + O\rbr{\sqrt{\frac{\ln (1/\bar{\delta})}{N}}}
    \end{align}
    hold with probability at least $1 - 2\bar{\delta}$ for any $\bar{\delta} \in (0, 1)$.
    Here, $h_{\mathrm{min}}'' = \min_{a \in [0, 1]} h''(a) > 0$.
\end{lemma}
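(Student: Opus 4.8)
The plan is to specialize the variance and bias expressions of \thmref{thm:liang_rakhlin} to the present hypotheses $\mathrm{Tr}(\Sigma_d^{(i)})/d = \varepsilon$ and $d>N$, exactly as \lemref{lem:low_rank} and \lemref{lem:approx_low_rank} do for Cases~I and~II; the content is \citet{liang2020just}'s Example~4.4 transcribed into our notation. The first step is to pin down the kernel parameters of Definition~\ref{def:kernel_parameters}. Here $\tau_d^{(i)} = 2\mathrm{Tr}(\Sigma_d^{(i)})/d = 2\varepsilon$ and $\beta = h'(0)$; moreover $\mathrm{Tr}((\Sigma_d^{(i)})^2) \le \|\Sigma_d^{(i)}\|_{\mathrm{op}}\,\mathrm{Tr}(\Sigma_d^{(i)}) \le \varepsilon d$ by Assumption~\ref{asmp:basic}(ii), so $\alpha_d^{(i)} = h(0) + h''(0)\mathrm{Tr}((\Sigma_d^{(i)})^2)/d^2 = h(0) + O(\varepsilon/d)$. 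The decisive quantity is $\gamma_d^{(i)} = h(\varepsilon) - h(0) - h'(0)\varepsilon$, which by Taylor's theorem equals $\tfrac12 h''(\xi)\varepsilon^2$ for some $\xi \in (0,\varepsilon) \subseteq (0,1)$; since $0 < \underline c \le h_{\mathrm{min}}'' \le h''(\xi) \le h_{\mathrm{max}}'' < \infty$ on $[0,1]$ by Definition~\ref{def:kernel_class}, this yields $\tfrac12 h_{\mathrm{min}}''\varepsilon^2 \le \gamma_d^{(i)} \le \tfrac12 h_{\mathrm{max}}''\varepsilon^2$.

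For the variance, since $d>N$ the matrix $\hat\Sigma_d^{(i)} = \bX^{(i)}\bX^{(i)\top}/d$ is $N\times N$, so the sum in $\bar{V}_{d,N}^{(i)}$ has $N$ terms. Applying the scalar bound $x/(c+x)^2 \le 1/(4c)$ to each summand with $c = \gamma_d^{(i)}/\beta$, together with the lower bound on $\gamma_d^{(i)}$, gives
\[
\bar{V}_{d,N}^{(i)} \;\le\; \frac{8\sigma^2}{d}\cdot N\cdot\frac{\beta}{4\gamma_d^{(i)}} \;=\; \frac{2\sigma^2 N\beta}{d\,\gamma_d^{(i)}} \;\le\; \frac{4\sigma^2\beta}{h_{\mathrm{min}}''\varepsilon^2}\Big(\frac dN\Big)^{-1} \;\le\; \frac{8\sigma^2\beta}{h_{\mathrm{min}}''\varepsilon^2}\Big(\frac dN\Big)^{-1}.
\]
The additive $O(1/(d\varepsilon^2))$ in the statement is then a conservative remainder: isolating instead the single dominant eigenvalue of $\hat\Sigma_d^{(i)} + (\alpha_d^{(i)}/\beta)1_N 1_N^\top$ — at least $\alpha_d^{(i)}N/\beta$ by testing against $1_N/\sqrt N$ — and bounding its summand by its reciprocal produces an $O(1/(dN))$ piece, and $O(1/(dN)) = O(1/(d\varepsilon^2))$ because $\varepsilon < 1$; the same term absorbs the $O(d^{-1}\varepsilon^{-2})$-order fluctuations one incurs if one prefers a $\mathrm{Tr}(\hat\Sigma_d^{(i)})$-based bound, which is the form carried over from \citet{liang2020just}.

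For the bias I would work on the event~\eqref{eq:gen_ub_kapprox}, which furnishes $\|K_X^{(i)} - K^{\mathrm{lin}}\|_{\mathrm{op}} \le \gamma_d^{(i)}/2$ with $K^{\mathrm{lin}} = \gamma_d^{(i)}I + \alpha_d^{(i)}1_N 1_N^\top + \beta\hat\Sigma_d^{(i)}$, and take $k=1$ in the infimum defining $\bar{B}_{d,N}^{(i)}$. By Weyl's inequality, $\lambda_j(K_X^{(i)}) \le \lambda_j(K^{\mathrm{lin}}) + \gamma_d^{(i)}/2$, hence
\[
\sum_{j>1}\lambda_j(K_X^{(i)}) \;\le\; \mathrm{Tr}(K^{\mathrm{lin}}) - \lambda_1(K^{\mathrm{lin}}) + \tfrac{N-1}{2}\gamma_d^{(i)}.
\]
Using $\mathrm{Tr}(K^{\mathrm{lin}}) = N\gamma_d^{(i)} + \alpha_d^{(i)}N + \beta\,\mathrm{Tr}(\hat\Sigma_d^{(i)})$ and $\lambda_1(K^{\mathrm{lin}}) \ge N^{-1}1_N^\top K^{\mathrm{lin}}1_N \ge \gamma_d^{(i)} + \alpha_d^{(i)}N$ (as $\hat\Sigma_d^{(i)}\succeq 0$), the $\alpha_d^{(i)}N$ terms cancel and $\tfrac1N\sum_{j>1}\lambda_j(K_X^{(i)}) \le \tfrac32\gamma_d^{(i)} + \tfrac{\beta}{N}\mathrm{Tr}(\hat\Sigma_d^{(i)}) \le \tfrac34 h_{\mathrm{max}}''\varepsilon^2 + \tfrac{\beta}{N}\mathrm{Tr}(\hat\Sigma_d^{(i)})$. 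Finally $\tfrac1N\mathrm{Tr}(\hat\Sigma_d^{(i)}) = \tfrac1{dN}\sum_{j=1}^N\|X_j^{(i)}\|^2 = \tfrac1{dN}\sum_j (Z_j^{(i)})^\top\Sigma_d^{(i)}Z_j^{(i)}$: by Assumption~\ref{asmp:basic}(iii) each summand $(Z_j^{(i)})^\top\Sigma_d^{(i)}Z_j^{(i)}/d$ lies in $[0,C^2]$ and the $N$ of them are independent, with common mean $\mathrm{Tr}(\Sigma_d^{(i)})/d = \varepsilon$, so two-sided Hoeffding (union bound over the two tails at level $\bar\delta$) gives $|\tfrac1N\mathrm{Tr}(\hat\Sigma_d^{(i)}) - \varepsilon| = O(\sqrt{\ln(1/\bar\delta)/N})$ on an event of probability at least $1-2\bar\delta$. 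Adding the deterministic $2/\sqrt N$ from $2\sqrt{k/N}$ at $k=1$ and multiplying by $B^2$ (folding the constants $B^2,\beta$ into the $O(\cdot)$) gives the stated bias bound.

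The routine parts are the two scalar inequalities, Weyl's inequality, and the single trace identity; the only place needing care is the concentration step — using the coordinate boundedness in Assumption~\ref{asmp:basic}(iii) and $\|\Sigma_d^{(i)}\|_{\mathrm{op}}\le 1$ to control $\tfrac1{dN}\sum_j\|X_j^{(i)}\|^2$ (equivalently, if one tracks the diagonal entries $K(X_j^{(i)},X_j^{(i)}) = h(\|X_j^{(i)}\|^2/d)$ near $h(\varepsilon)$ directly, the Lipschitzness of $h$ on $[0,1]$), which is exactly the ingredient already invoked inside \thmref{thm:liang_rakhlin}. So the main obstacle is bookkeeping rather than a new idea: checking that the Weyl slack $\gamma_d^{(i)}/2$ is precisely what upgrades the natural $\tfrac12 h_{\mathrm{max}}''\varepsilon^2$ to the advertised $\tfrac34 h_{\mathrm{max}}''\varepsilon^2$, and that the rate $\varepsilon^{-2}(d/N)^{-1}$ in the variance rests essentially on the strict positivity $h_{\mathrm{min}}'' = \min_{a\in[0,1]}h''(a) > 0$ guaranteed by $\underline c > 0$ in Definition~\ref{def:kernel_class}.
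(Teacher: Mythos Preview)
Your proposal is correct and follows essentially the same route as the paper. The paper factors the common eigenvalue manipulations into \lemref{lem:exact_ub_VB} and then specializes: for the variance it applies the scalar bound $x/(c+x)^2\le 1/(4c)$ to the simplified sum there (the extra term $\tfrac{8\sigma^2}{d}\cdot\tfrac{\beta}{\gamma_d^{(i)}}$ in that lemma is precisely the source of the $O(1/(d\varepsilon^2))$ remainder you puzzle over), and for the bias it invokes the same trace concentration \eqref{eq:eigen_sum} as in \lemref{lem:low_rank}. You instead apply the scalar bound directly to the original $\bar V_{d,N}^{(i)}$ from \thmref{thm:liang_rakhlin} (gaining a factor of two and making the $O(1/(d\varepsilon^2))$ term genuinely redundant), and your trace-minus-top-eigenvalue computation for the bias is exactly a re-derivation of the bias half of \lemref{lem:exact_ub_VB}. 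The Taylor bound $\tfrac12 h_{\min}''\varepsilon^2 \le \gamma_d^{(i)} \le \tfrac12 h_{\max}''\varepsilon^2$ and the Hoeffding/Bernstein concentration for $N^{-1}\mathrm{Tr}(\hat\Sigma_d^{(i)})$ are the same ingredients in both.
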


To prove the above lemmas, we first show the following lemma. We also note that the statement of \thmref{thm:inner-product} for the inner-product class directly follows from \lemref{lem:exact_ub_VB} and \thmref{thm:liang_rakhlin} with the proper setting of $\delta_d$.
\begin{lemma}
    \label{lem:exact_ub_VB}
    Under event ~\eqref{eq:gen_ub_kapprox}, the following inequality holds for sufficiently large $d$:
    \begin{align}
        \bar{V}_{d,N}^{(i)} &\leq \frac{8\sigma^2}{d} \sbr{\frac{\beta}{\gamma_d^{(i)}} + 2 \sum_{j=1}^N \frac{\lambda_j\rbr{\hat{\Sigma}_d^{(i)}} }{\sbr{\frac{\gamma_d^{(i)}}{\beta} + \lambda_j\rbr{\hat{\Sigma}_d^{(i)}} }^2} }, \\
        \bar{B}_{d,N}^{(i)} &\leq B^2 \sbr{\frac{3\gamma_d^{(i)}}{2} + \frac{\beta}{N} \sum_{j=1}^N \lambda_j\rbr{\hat{\Sigma}_d^{(i)}} + \sqrt{\frac{4}{N}} },
    \end{align}
    where $\bar{V}_{d,N}^{(i)}$ and $\bar{B}_{d,N}^{(i)}$ are defined in \thmref{thm:liang_rakhlin}.
\end{lemma}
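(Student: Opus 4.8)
The plan is to start from the definitions of $\bar V_{d,N}^{(i)}$ and $\bar B_{d,N}^{(i)}$ in Theorem~\ref{thm:liang_rakhlin} and remove the rank-one perturbation $\tfrac{\alpha_d^{(i)}}{\beta}11^\top$ from the spectral expressions, at the cost of a single extra term coming from the top eigenvalue. The key elementary fact is that adding a positive semidefinite rank-one matrix to $\hat\Sigma_d^{(i)}$ interlaces the eigenvalues: if $\mu_1 \ge \dots \ge \mu_N$ are the eigenvalues of $\hat\Sigma_d^{(i)} + \tfrac{\alpha_d^{(i)}}{\beta}11^\top$ and $\lambda_1 \ge \dots \ge \lambda_N$ those of $\hat\Sigma_d^{(i)}$, then $\mu_1 \ge \lambda_1 \ge \mu_2 \ge \lambda_2 \ge \dots$. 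Since the scalar map $x \mapsto x/(r+x)^2$ with $r = \gamma_d^{(i)}/\beta > 0$ is not monotone, I will not try to compare the sums term by term via monotonicity; instead I will bound the single ``extra'' eigenvalue $\mu_1$ separately and shift the index in the remaining sum.

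First I would handle the variance bound. Write $\bar V_{d,N}^{(i)} = \tfrac{8\sigma^2}{d}\sum_{j=1}^N g(\mu_j)$ with $g(x) = x/(r_d^{(i)} + x)^2$ and $r_d^{(i)} = \gamma_d^{(i)}/\beta$. Using $g(x) \le 1/(4 r_d^{(i)})$ (maximized at $x = r_d^{(i)}$, but here it is enough that $g(x) \le x/x^2 = 1/x \le \dots$; cleaner: $g(x)\le \tfrac{1}{4r}$ always since $(r+x)^2 \ge 4rx$) to control the first term $g(\mu_1) \le \tfrac{1}{4 r_d^{(i)}} = \tfrac{\beta}{4\gamma_d^{(i)}}$, so the leading term contributes at most $\tfrac{8\sigma^2}{d}\cdot\tfrac{\beta}{4\gamma_d^{(i)}} \le \tfrac{8\sigma^2}{d}\cdot\tfrac{\beta}{\gamma_d^{(i)}}$, which matches the stated $\tfrac{8\sigma^2 \beta}{d\gamma_d^{(i)}}$ slot. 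For the remaining indices $j = 2,\dots,N$, interlacing gives $\mu_j \le \lambda_{j-1}$, and since $g$ is increasing on $[0, r_d^{(i)}]$ and decreasing afterwards, I will argue that $g(\mu_j) \le \max\{g(\lambda_{j-1}), g(r_d^{(i)})\}$; the $g(r_d^{(i)})$ possibility is already absorbed into the $\tfrac{\beta}{\gamma_d^{(i)}}$ term (or, more simply, I bound $\sum_{j\ge 2} g(\mu_j) \le \sum_{j\ge 2} g(\lambda_{j-1}) + (\text{correction})$ using that $g$ restricted to the range of interest can be dominated by $2\,g(\lambda_{j-1})$ plus the cap). The factor $2$ in front of $\sum_{j=1}^N \lambda_j(\hat\Sigma_d^{(i)})/[\gamma_d^{(i)}/\beta + \lambda_j(\hat\Sigma_d^{(i)})]^2$ is precisely the slack that lets me replace the interlaced/shifted sum by the clean sum over $\lambda_j(\hat\Sigma_d^{(i)})$; I would make this rigorous by splitting into the regime $\mu_j \le r_d^{(i)}$ (where $g$ increasing, so $g(\mu_j) \le g(\lambda_{j-1})$ and re-indexing gives at most $\sum_j g(\lambda_j)$) and the regime $\mu_j > r_d^{(i)}$ (where $g(\mu_j) \le 1/\mu_j \le 1/r_d^{(i)}$ and also $g(\mu_j)\le g(\lambda_j)\cdot(\text{bounded ratio})$), collecting the cap contributions into the $\beta/\gamma_d^{(i)}$ term.

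Next the bias bound: here $\bar B_{d,N}^{(i)} = B^2 \inf_{0\le k\le N}[\tfrac1N\sum_{j>k}\lambda_j(K_X^{(i)}) + 2\sqrt{k/N}]$, and I simply choose $k=1$ (or $k=0$), so $\bar B_{d,N}^{(i)} \le B^2[\tfrac1N\sum_{j\ge 2}\lambda_j(K_X^{(i)}) + 2\sqrt{1/N}]$. On event \eqref{eq:gen_ub_kapprox} we have $\|K(\bX^{(i)},\bX^{(i)}) - K^{\mathrm{lin}}(\bX^{(i)},\bX^{(i)})\|_{\mathrm{op}} \le \gamma_d^{(i)}/2$, and $K^{\mathrm{lin}} = \gamma_d^{(i)} I + \alpha_d^{(i)} 11^\top + \beta\hat\Sigma_d^{(i)}$, so by Weyl's inequality $\lambda_j(K_X^{(i)}) \le \gamma_d^{(i)}/2 + \gamma_d^{(i)} + \alpha_d^{(i)}\lambda_j(11^\top) + \beta\lambda_j(\hat\Sigma_d^{(i)}) = \tfrac{3\gamma_d^{(i)}}{2} + \beta\lambda_j(\hat\Sigma_d^{(i)})$ for $j\ge 2$ (since $11^\top$ has rank one, $\lambda_j(11^\top)=0$ for $j\ge2$, which is exactly why I dropped the top index). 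Summing over $j\ge 2$ and dividing by $N$ gives $\tfrac1N\sum_{j\ge2}\lambda_j(K_X^{(i)}) \le \tfrac{3\gamma_d^{(i)}}{2} + \tfrac{\beta}{N}\sum_{j\ge2}\lambda_j(\hat\Sigma_d^{(i)}) \le \tfrac{3\gamma_d^{(i)}}{2} + \tfrac{\beta}{N}\sum_{j=1}^N\lambda_j(\hat\Sigma_d^{(i)})$, and with $2\sqrt{1/N} = \sqrt{4/N}$ this is exactly the claimed bound. The main obstacle is the variance step: the non-monotonicity of $x\mapsto x/(r+x)^2$ means the eigenvalue interlacing does not translate directly into a sum comparison, so the careful part is the case split that shows the rank-one perturbation costs only the single explicit $\beta/\gamma_d^{(i)}$ term plus a harmless doubling of the main sum; the bias step is then routine Weyl-inequality bookkeeping.
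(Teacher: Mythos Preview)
Your overall route matches the paper's: interlacing for the rank-one perturbation, the global maximum $g(r_d^{(i)})=\beta/(4\gamma_d^{(i)})$ to handle the top eigenvalue, and Weyl plus the choice $k=1$ for the bias. For the bias, one remark: eigenvalues do not split as $\alpha_d^{(i)}\lambda_j(11^\top)+\beta\lambda_j(\hat\Sigma_d^{(i)})$; what interlacing actually gives for $j\ge 2$ is $\lambda_j(\alpha_d^{(i)}11^\top+\beta\hat\Sigma_d^{(i)})\le \beta\lambda_{j-1}(\hat\Sigma_d^{(i)})$, not $\beta\lambda_j(\hat\Sigma_d^{(i)})$. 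The paper is equally casual at this step, and after summing and re-indexing the bound is the same, so this is cosmetic.

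The variance step, however, has a genuine gap as you have written it. Your split on whether $\mu_j\le r_d^{(i)}$ or $\mu_j> r_d^{(i)}$ does not yield the pointwise claims you make: when $\mu_j\le r_d^{(i)}$ but $\lambda_{j-1}>r_d^{(i)}$, the inequality $g(\mu_j)\le g(\lambda_{j-1})$ can fail because $\lambda_{j-1}$ sits on the decreasing branch of $g$; and when $\mu_j>r_d^{(i)}$ with $\lambda_j$ near $0$, there is no finite ``bounded ratio'' such that $g(\mu_j)\le g(\lambda_j)\cdot(\text{ratio})$. More critically, your plan to ``collect the cap contributions into the $\beta/\gamma_d^{(i)}$ term'' only works if there are $O(1)$ such contributions, and you never explain why that is so. The clean fix is to split on whether $r_d^{(i)}\in[\lambda_j,\lambda_{j-1}]$: since the $\lambda_j$ are ordered this happens for \emph{at most one} index $j\ge 2$, and for that single $j$ you use $g(\mu_j)\le 1/(4r_d^{(i)})$; for every other $j\ge 2$ the map $g$ is monotone on $[\lambda_j,\lambda_{j-1}]$, whence $g(\mu_j)\le \max\{g(\lambda_j),g(\lambda_{j-1})\}\le g(\lambda_j)+g(\lambda_{j-1})$. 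Summing and re-indexing gives $\sum_{j\ge 2} g(\mu_j)\le \tfrac{1}{4r_d^{(i)}}+2\sum_{j=1}^N g(\lambda_j)$, and adding $g(\mu_1)\le \tfrac{1}{4r_d^{(i)}}$ yields the stated bound. This ``at most one exception'' observation is the piece that makes the factor-of-two slack sufficient; it is presumably what the paper's one-line appeal to ``the maximum of the function'' is hiding.
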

\begin{proof}
    The application of Weyl's inequality implies
    \begin{align}
        &\lambda_1\rbr{\hat{\Sigma}_d^{(i)}} \leq \lambda_1\rbr{\hat{\Sigma}_d^{(i)} + \frac{\alpha_d^{(i)}}{\beta} 11^{\top}} \leq \lambda_1\rbr{\hat{\Sigma}_d^{(i)}} + \frac{\alpha_d^{(i)}}{\beta} N, \\
        &\lambda_j\rbr{\hat{\Sigma}_d^{(i)}} \leq \lambda_j\rbr{\hat{\Sigma}_d^{(i)} + \frac{\alpha_d^{(i)}}{\beta} 11^{\top}} \leq \lambda_{j-1}\rbr{\hat{\Sigma}_d^{(i)}} ~~(\mathrm{for~all~}j \in [N]\setminus \{1\}).
    \end{align}
    From the above inequalities and the maximum of the function $\frac{x}{\gamma_d^{(i)}/\beta + x}$, we can easily confirm
    \begin{equation}
        \sum_{j=1}^N \frac{\lambda_j\rbr{\hat{\Sigma}_d^{(i)} + \frac{\alpha_d^{(i)}}{\beta} 1 1^\top}}{\sbr{\frac{\gamma_d^{(i)}}{\beta} + \lambda_j\rbr{\hat{\Sigma}_d^{(i)} + \frac{\alpha_d^{(i)}}{\beta} 1 1^\top}}^2} 
        \leq \frac{\beta}{\gamma_d^{(i)}} + 2 \sum_{j=1}^N \frac{\lambda_j\rbr{\hat{\Sigma}_d^{(i)}} }{\sbr{\frac{\gamma_d^{(i)}}{\beta} + \lambda_j\rbr{\hat{\Sigma}_d^{(i)}} }^2}.
    \end{equation}
    Next, regarding the effective bias $\bar{B}_{d,N}^{(i)}$, we have
    \begin{align}
        \sum_{j=2}^N \lambda_j\rbr{K(\bX^{(i)}, \bX^{(i)})}
        &\leq \sum_{j=2}^N \sbr{\lambda_j\rbr{K^{\mathrm{lin}}(\bX^{(i)}, \bX^{(i)})} + \frac{\gamma_d^{(i)}}{2}} \\
        &\leq \sum_{j=2}^N \sbr{\beta \lambda_j\rbr{\hat{\Sigma}_d^{(i)}} + \frac{3 \gamma_d^{(i)}}{2}} \\
        &\leq \frac{3N\gamma_d^{(i)}}{2} + \beta \sum_{j=1}^N \lambda_j\rbr{\hat{\Sigma}_d^{(i)}},
    \end{align}
    where the first inequality follows from event \eqref{eq:gen_ub_kapprox} and Weyl's inequality, and the second inequality follows from the definition of $K^{\mathrm{lin}}(\bX^{(i)}, \bX^{(i)})$. The above inequalities imply the desired result by setting $k = 1$ in \thmref{thm:inner-product}.
\end{proof}

\begin{proof}[Proof of \lemref{lem:low_rank}]
    Following the same arguments as Example 4.1 and (B.6) in \citet{liang2020just}, we have
    \begin{equation}
        \frac{1}{d} \sum_{j=1}^N \frac{\lambda_j\rbr{\hat{\Sigma}_d^{(i)}} }{\sbr{\frac{\gamma_d^{(i)}}{\beta} + \lambda_j\rbr{\hat{\Sigma}_d^{(i)}} }^2} \leq 2\varepsilon \frac{d}{N}
    \end{equation}
    with probability at least $1 - \bar{\delta}$ for sufficiently small $\varepsilon d/N$. 
    Furthermore, as with the arguments in the proof of  Corollary 4.1 in \citet{liang2020just}\footnote{The original proof of \citet{liang2020just} is based on $\chi^2$ concentration. Our proof considers replacing this part with the Bernstein's inequality with the boundness assumption of the context (\asmpref{asmp:basic}).}, 
    \begin{equation}
        \label{eq:eigen_sum}
        \frac{\beta}{N} \sum_{j=1}^N \lambda_j\rbr{\hat{\Sigma}_d^{(i)}} 
        \leq \beta \frac{\mathrm{Tr}(\Sigma_d^{(i)})}{d} + O\rbr{\sqrt{\frac{\ln (1/\bar{\delta})}{N}}}
    \end{equation}
    with probability at least $1 - \bar{\delta}$. 
    Here, from Taylor's theorem and the definition of $\gamma_d^{(i)}$, we have 
    \begin{equation}
        \label{eq:gamma_bound}
        \frac{1}{2} \rbr{\frac{\mathrm{Tr}(\Sigma_d^{(i)})}{d}}^2 h_{\mathrm{min}}'' \leq \gamma_d^{(i)} \leq \frac{1}{2} \rbr{\frac{\mathrm{Tr}(\Sigma_d^{(i)})}{d}}^2 h_{\mathrm{max}}''.
    \end{equation}
    Furthermore, from the definition of $\Sigma_d^{(i)}$, $\varepsilon/2 \leq \mathrm{Tr}(\Sigma_d^{(i)})/d \leq \varepsilon$.
    By aligning the aforementioned inequalities with \lemref{lem:exact_ub_VB}, we have the desired result for $\bar{V}_{d,N}^{(i)}$ as follows:
    \begin{align}
        \bar{V}_{d,N}^{(i)} &\leq \frac{8\sigma^2}{d} \sbr{\frac{\beta}{\gamma_d^{(i)}} + 2 \sum_{j=1}^N \frac{\lambda_j\rbr{\hat{\Sigma}_d^{(i)}} }{\sbr{\frac{\gamma_d^{(i)}}{\beta} + \lambda_j\rbr{\hat{\Sigma}_d^{(i)}} }^2} } \\
        &\leq O\rbr{\frac{1}{d \varepsilon^2}} + \frac{16 \sigma^2}{d} \sum_{j=1}^N \frac{\lambda_j\rbr{\hat{\Sigma}_d^{(i)}} }{\sbr{\frac{\gamma_d^{(i)}}{\beta} + \lambda_j\rbr{\hat{\Sigma}_d^{(i)}} }^2} \\
        &\leq O\rbr{\frac{1}{d \varepsilon^2}} + 32 \sigma^2 \varepsilon \frac{d}{N}.
    \end{align}
    Furthermore,
    \begin{align}
        \bar{B}_{d,N}^{(i)} &\leq B^2 \sbr{\frac{3\gamma_d^{(i)}}{2} + \frac{\beta}{N} \sum_{j=1}^N \lambda_j\rbr{\hat{\Sigma}_d^{(i)}} + \sqrt{\frac{4}{N}} } \\
        &\leq B^2 \rbr{ \frac{3h_{\mathrm{max}}''}{4} \varepsilon^2 + \beta\varepsilon} + O\rbr{\sqrt{\frac{\ln (1/\bar{\delta})}{N}}} + B^2 \sqrt{\frac{4}{N}} \\
        &\leq B^2 \rbr{ \frac{3h_{\mathrm{max}}''}{4} \varepsilon^2 + \beta\varepsilon} + O\rbr{\sqrt{\frac{\ln (1/\bar{\delta})}{N}}}.
    \end{align}
    
\end{proof}

\begin{proof}[Proof of \lemref{lem:approx_low_rank}]
    Following the same arguments as Example 4.2 in \citet{liang2020just}, we have
    \begin{equation}
        \sum_{j=1}^N \frac{\lambda_j\rbr{\hat{\Sigma}_d^{(i)}} }{\sbr{\frac{\gamma_d^{(i)}}{\beta} + \lambda_j\rbr{\hat{\Sigma}_d^{(i)}} }^2} \leq 2 \varepsilon^{-1} \frac{d}{N}
    \end{equation}
    with probability at least $1 - \bar{\delta}$ for sufficiently small $d/N$. 
    Furthermore, from the definition of $\Sigma_d^{(i)}$, we have
    \begin{equation}
        \frac{1}{4} h_{\mathrm{min}}'' \varepsilon^2 \leq \gamma_d^{(i)} \leq \frac{1}{2} h_{\mathrm{max}}'' (d^{-1} + \varepsilon)^2 \leq h_{\mathrm{max}}'' d^{-2} + h_{\mathrm{max}}'' \varepsilon^{2}.
    \end{equation}
    By combining the above inequalities with \lemref{lem:exact_ub_VB}, we obtain
    \begin{align}
        \bar{V}_{d,N}^{(i)} &\leq \frac{8\sigma^2}{d} \sbr{\frac{\beta}{\gamma_d^{(i)}} + 2 \sum_{j=1}^N \frac{\lambda_j\rbr{\hat{\Sigma}_d^{(i)}} }{\sbr{\frac{\gamma_d^{(i)}}{\beta} + \lambda_j\rbr{\hat{\Sigma}_d^{(i)}} }^2} } \\
        &\leq O\rbr{\frac{1}{d \varepsilon^2}} + \frac{16 \sigma^2}{d} \sum_{j=1}^N \frac{\lambda_j\rbr{\hat{\Sigma}_d^{(i)}} }{\sbr{\frac{\gamma_d^{(i)}}{\beta} + \lambda_j\rbr{\hat{\Sigma}_d^{(i)}} }^2} \\
        &\leq O\rbr{\frac{1}{d \varepsilon^2}} + 32 \sigma^2 \varepsilon^{-1} \frac{d}{N}.
    \end{align}
    Furthermore, from Eq.~\eqref{eq:eigen_sum}, 
    \begin{align}
        \bar{B}_{d,N}^{(i)} &\leq B^2 \sbr{\frac{3 \gamma_d^{(i)}}{2} + \frac{\beta}{N} \sum_{j=1}^N \lambda_j\rbr{\hat{\Sigma}_d^{(i)}} + \sqrt{\frac{4}{N}} } \\
        &\leq B^2 \rbr{ \frac{3 h_{\mathrm{max}}''}{2} (d^{-2} + \varepsilon^2) + \beta(d^{-1} + \varepsilon)} + O\rbr{\sqrt{\frac{\ln (1/\bar{\delta})}{N}}} + B^2 \sqrt{\frac{4}{N}} \\
        &\leq B^2 \rbr{ \frac{3 h_{\mathrm{max}}''}{2} \varepsilon^2 + \beta\varepsilon} + O\rbr{\frac{1}{d} + \sqrt{\frac{\ln (1/\bar{\delta})}{N}}}.
    \end{align}
\end{proof}

\begin{proof}[Proof of \lemref{lem:N_ll_d}]
    Since $t/(r + t)^2 \leq 1/(4r)$, we have
    \begin{align}
        \frac{\lambda_j\rbr{\hat{\Sigma}_d^{(i)}} }{\sbr{\frac{\gamma_d^{(i)}}{\beta} + \lambda_j\rbr{\hat{\Sigma}_d^{(i)}} }^2} \leq \frac{\beta}{4\gamma_d^{(i)}} \leq \frac{\beta}{2 h_{\mathrm{min}}'' \varepsilon^2}.
    \end{align}
    Hence,
    \begin{align}
        \bar{V}_{d,N}^{(i)} &\leq \frac{8\sigma^2}{d} \sbr{\frac{\beta}{\gamma_d^{(i)}} + 2 \sum_{j=1}^N \frac{\lambda_j\rbr{\hat{\Sigma}_d^{(i)}} }{\sbr{\frac{\gamma_d^{(i)}}{\beta} + \lambda_j\rbr{\hat{\Sigma}_d^{(i)}} }^2} } \\
        &\leq O\rbr{\frac{1}{d \varepsilon^2}} + \frac{8\sigma^2 \beta}{h_{\mathrm{min}}'' \varepsilon^2} \rbr{\frac{d}{N}}^{-1}.
    \end{align}
    The upper bound for $\bar{B}_{d,N}^{(i)}$ follows from the same arguments as the proof of \lemref{lem:low_rank}.
\end{proof}

Similarly to the inner-product class of kernels, the following three lemmas show the detailed versions of \exref{ex:liang_sec4}.
\begin{lemma}[Detailed version of Case I in \exref{ex:liang_sec4} with RBF class, Adapted from Example 4.1 in \citet{liang2020just}]
\label{lem:low_rank_rbf} 
    Suppose $N > d$, and $\Sigma_d^{(i)} = \mathrm{diag}(1,\dots,1,0,\dots,0)$ to be a sparse diagonal 
    matrix with $\lfloor \varepsilon d \rfloor$ ones $1$ where $\varepsilon \in [1/d, 1)$.
    Then, the same assumptions as used in \thmref{thm:etc_reg_simple_rbf}, the RBF class of kernels, sufficiently small $\varepsilon d/N$, and event ~\eqref{eq:gen_ub_kapprox_rbf},
    \begin{equation}
        \bar{V}_{d,N}^{(i)} \leq O\rbr{\frac{1}{d \varepsilon}} + 32 \sigma^2 \varepsilon \frac{d}{N}~~\mathrm{and}~~\bar{B}_{d,N}^{(i)} \leq 
         -5 B^2 \varepsilon h_{\mathrm{min}}' + O\rbr{\sqrt{\frac{\ln (1/\bar{\delta})}{N}}}
    \end{equation}
    hold with probability at least $1 - 2\bar{\delta}$ for any $\bar{\delta} \in (0, 1)$. Here, we set $h_{\mathrm{min}}' = \min_{a \in [0, 2]} h'(a) \leq 0$.
\end{lemma}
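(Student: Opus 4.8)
The plan is to repeat the argument behind \lemref{lem:low_rank}---its inner-product analogue---using the RBF objects of Definition~\ref{def:parameters} and \thmref{lem:gen_err} in place of their inner-product counterparts, the new feature being the rank-$\le2$ terms $\rho_d^{(i)}1_d^\top+1_d(\rho_d^{(i)})^\top+\zeta_d^{(i)}$ that now sit inside $K^{\mathrm{lin}}$ and inside the denominator of $\bar V_{d,N}^{(i)}$. First I would record the kernel parameters at $\Sigma_d^{(i)}=\mathrm{diag}(1,\dots,1,0,\dots,0)$ with $\lfloor\varepsilon d\rfloor$ ones: since $\varepsilon\ge1/d$ we have $\mathrm{Tr}(\Sigma_d^{(i)})/d\in[\varepsilon/2,\varepsilon]$, hence $\tau_d^{(i)}=2\mathrm{Tr}(\Sigma_d^{(i)})/d\in[\varepsilon,2\varepsilon]\subseteq[0,2]$ and $\mathrm{Tr}((\Sigma_d^{(i)})^2)/d^2\le\varepsilon/d$. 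Since $h'<0$ is increasing on $[0,2]$, $\beta_d^{(i)}=-2h'(\tau_d^{(i)})$ lies in $[-2h'(2),-2h'(0)]$, so $\beta_d^{(i)}=\Theta(1)$ and $\beta_d^{(i)}\le-2h_{\mathrm{min}}'$; and writing $\gamma_d^{(i)}=h(0)+\tau_d^{(i)}h'(\tau_d^{(i)})-h(\tau_d^{(i)})=\tau_d^{(i)}\bigl(h'(\tau_d^{(i)})-h'(\xi)\bigr)$ for some $\xi\in(0,\tau_d^{(i)})$ (mean value theorem), one gets $0<\gamma_d^{(i)}\le\tau_d^{(i)}\bigl(h'(\tau_d^{(i)})-h'(0)\bigr)\le-\tau_d^{(i)}h_{\mathrm{min}}'\le-2\varepsilon h_{\mathrm{min}}'$, while the lower bound $\gamma_d^{(i)}\ge\underline c\,\mathrm{Tr}(\Sigma_d^{(i)})/d\ge\underline c\,\varepsilon/2$ is the one assumed in \thmref{thm:etc_reg_simple_rbf}.

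Next I would prove an RBF analogue of \lemref{lem:exact_ub_VB}: under the event in \eqref{eq:gen_ub_kapprox_rbf}, with a constant $C$,
\begin{align*}
\bar B_{d,N}^{(i)}&\le B^2\Bigl[\tfrac32\gamma_d^{(i)}+\tfrac{\beta_d^{(i)}}{N}\sum_{j=1}^{N}\lambda_j(\hat{\Sigma}_d^{(i)})+CN^{-1/2}\Bigr],\\
\bar V_{d,N}^{(i)}&\le\tfrac{8\sigma^2}{d}\Bigl[C\,\tfrac{\beta_d^{(i)}}{\gamma_d^{(i)}}+C\sum_{j=1}^{N}\tfrac{\lambda_j(\hat{\Sigma}_d^{(i)})}{\bigl[\gamma_d^{(i)}/\beta_d^{(i)}+\lambda_j(\hat{\Sigma}_d^{(i)})\bigr]^2}\Bigr].
\end{align*}
The bias bound is the easy half: the correction to $\gamma_d^{(i)}I+\beta_d^{(i)}\hat{\Sigma}_d^{(i)}$ inside $K^{\mathrm{lin}}$, namely $\alpha_d^{(i)}1_d1_d^\top+\rho_d^{(i)}1_d^\top+1_d(\rho_d^{(i)})^\top+\zeta_d^{(i)}$, is a symmetric operator supported on $\mathrm{span}\{1_d,\rho_d^{(i)}\}$ plus the negative semidefinite $\zeta_d^{(i)}=h'(\tau_d^{(i)})\psi\psi^\top$ (using $h'(\tau_d^{(i)})<0$), hence its positive part has rank $\le2$; two applications of Weyl's inequality then give $\lambda_j(K^{\mathrm{lin}})\le\gamma_d^{(i)}+\beta_d^{(i)}\lambda_{j-2}(\hat{\Sigma}_d^{(i)})$ for $j\ge3$, and combining with $\lambda_j(K(\bX^{(i)},\bX^{(i)}))\le\lambda_j(K^{\mathrm{lin}})+\gamma_d^{(i)}/2$ from \eqref{eq:gen_ub_kapprox_rbf} and taking $k=2$ in the infimum defining $\bar B_{d,N}^{(i)}$ yields the claimed bound. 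The variance bound is where the real work lies: unlike $\alpha_d^{(i)}1_d1_d^\top$, the perturbation $(\beta_d^{(i)})^{-1}\bigl(\rho_d^{(i)}1_d^\top+1_d(\rho_d^{(i)})^\top+\zeta_d^{(i)}\bigr)$ in the denominator of $\bar V_{d,N}^{(i)}$ is not positive semidefinite, so to keep every denominator at least $\tfrac12\bigl(\gamma_d^{(i)}/\beta_d^{(i)}+\lambda_j(\hat{\Sigma}_d^{(i)})\bigr)$ one must show its operator norm is $o(\gamma_d^{(i)})$; this uses $\|\psi\|\le\eta_d$, the growth control on $\|X_1^{(i)}\|$ assumed in \thmref{thm:etc_reg_simple_rbf}, Remark~\ref{rem:coefficients}, and $\|\rho_d^{(i)}1_d^\top\|_{\mathrm{op}}\le\|\rho_d^{(i)}\|\sqrt N$. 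Once this holds, the $j=1$ term together with the bounded number of boundary indices where $\lambda_j(\hat{\Sigma}_d^{(i)})$ crosses zero are absorbed into $C\beta_d^{(i)}/\gamma_d^{(i)}$ via the maximum $\tfrac{\beta_d^{(i)}}{4\gamma_d^{(i)}}$ of $t\mapsto t/(\gamma_d^{(i)}/\beta_d^{(i)}+t)^2$, and the rest reduces to the inner-product sum exactly as in \lemref{lem:exact_ub_VB}. This operator-norm control of the non-PSD correction is the main obstacle of the proof.

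Finally I would substitute the low-rank values and pass to probability. For the variance, $\tfrac{8\sigma^2}{d}\cdot C\tfrac{\beta_d^{(i)}}{\gamma_d^{(i)}}\le\tfrac{8C\sigma^2}{d}\cdot\tfrac{-2h_{\mathrm{min}}'}{\underline c\,\varepsilon/2}=O\!\bigl(\tfrac1{d\varepsilon}\bigr)$ by the assumed lower bound on $\gamma_d^{(i)}$, while for the eigenvalue sum I would reuse the random-matrix estimate of Example~4.1 and (B.6) in \citet{liang2020just}---the Marchenko--Pastur bulk stays bounded away from zero when $\varepsilon d/N$ is small---to get $\tfrac1d\sum_j\tfrac{\lambda_j(\hat{\Sigma}_d^{(i)})}{[\gamma_d^{(i)}/\beta_d^{(i)}+\lambda_j(\hat{\Sigma}_d^{(i)})]^2}\le2\varepsilon\,d/N$ on an event of probability $\ge1-\bar\delta$, producing the $32\sigma^2\varepsilon\,d/N$ term after tracking $C$. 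For the bias, $\tfrac32\gamma_d^{(i)}\le-3\varepsilon h_{\mathrm{min}}'$ from the first step, and $\tfrac{\beta_d^{(i)}}{N}\sum_j\lambda_j(\hat{\Sigma}_d^{(i)})=\beta_d^{(i)}\cdot\tfrac1{Nd}\sum_j\|X_j^{(i)}\|^2\le\beta_d^{(i)}\bigl(\mathrm{Tr}(\Sigma_d^{(i)})/d+O(\sqrt{\ln(1/\bar\delta)/N})\bigr)\le-2h_{\mathrm{min}}'\varepsilon+O(\sqrt{\ln(1/\bar\delta)/N})$, the concentration step being Bernstein's inequality under the boundedness in \asmpref{asmp:basic} (as in the proof of \lemref{lem:low_rank}); adding the two contributions and absorbing $B^2CN^{-1/2}$ yields $\bar B_{d,N}^{(i)}\le-5B^2\varepsilon h_{\mathrm{min}}'+O(\sqrt{\ln(1/\bar\delta)/N})$. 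A union bound over the two $\bar\delta$-events completes the proof.
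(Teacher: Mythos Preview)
Your overall architecture matches the paper's: record the RBF parameter sizes at the low-rank $\Sigma_d^{(i)}$, establish an RBF analogue of \lemref{lem:exact_ub_VB} (the paper states it as \lemref{lem:exact_ub_VB_rbf}), and then plug in the Marchenko--Pastur eigenvalue estimate and the Bernstein concentration exactly as in \lemref{lem:low_rank}. The parameter bounds, the bias bound via low-rank Weyl interlacing (the paper takes $k=5$ rather than your $k=2$, citing \citet{liu2021kernel} for the rank count; either value yields the same $O(N^{-1/2})$ contribution), and the final substitution steps are all correct.

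The gap is in your variance argument. You propose to keep every denominator large by showing the operator norm of $Q=(\beta_d^{(i)})^{-1}\bigl(\rho_d^{(i)}1_d^\top+1_d(\rho_d^{(i)})^\top+\zeta_d^{(i)}\bigr)$ is $o(\gamma_d^{(i)})$. This fails: in the low-rank setting, concentration under \asmpref{asmp:basic} gives only $\|\psi\|_\infty=\tilde O(\sqrt{\varepsilon/d})$, hence $\|\rho_d^{(i)}\|\asymp\|\psi\|\le\sqrt{N}\,\|\psi\|_\infty=\tilde O(\sqrt{\varepsilon})$, and therefore
\[
\|\rho_d^{(i)}1_d^\top\|_{\mathrm{op}}=\|\rho_d^{(i)}\|\sqrt{N}=\tilde O\bigl(\sqrt{\varepsilon N}\bigr)=\tilde O\bigl(\sqrt{\varepsilon d}\bigr),
\]
which is much \emph{larger} than $\gamma_d^{(i)}=\Theta(\varepsilon)$ as $d\to\infty$. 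No growth condition on $\|X_1^{(i)}\|$ compatible with $\mathrm{Tr}(\Sigma_d^{(i)})\asymp\varepsilon d$ can kill the $\sqrt{N}$ factor. So operator-norm control of the indefinite correction is not available here.

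The paper's route, stated tersely as ``the same arguments as the proof of \lemref{lem:exact_ub_VB}'', instead exploits that the \emph{entire} correction $\alpha_d^{(i)}1_d1_d^\top+\rho_d^{(i)}1_d^\top+1_d(\rho_d^{(i)})^\top+\zeta_d^{(i)}$ has rank at most $5$: Weyl's interlacing then gives $\lambda_j$ of the denominator matrix $\ge\lambda_{j+5}(\hat\Sigma_d^{(i)})\ge0$, so a bounded number of indices (those where the shift pushes past $\mathrm{rank}(\hat\Sigma_d^{(i)})$) are absorbed into the $\beta_d^{(i)}/\gamma_d^{(i)}$ term via the maximum of $t\mapsto t/(r+t)^2$, exactly as the single $j=1$ index is in the inner-product case. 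Replace your operator-norm step by this rank-based interlacing and the rest of your sketch goes through unchanged.
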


\begin{lemma}[Detailed version of Case II in \exref{ex:liang_sec4} with RBF class of kernels, Adapted from Example 4.2 in \citet{liang2020just}]
\label{lem:approx_low_rank_rbf}
    Suppose $N > d$ and $\Sigma_d^{(i)} = \text{diag}(1, \varepsilon, \ldots, \varepsilon)$ for $\varepsilon \in (0, 1)$.
    Then, under the same assumptions as used in \thmref{thm:etc_reg_simple_rbf}, the RBF class of kernels, sufficiently small $d/N$, and event ~\eqref{eq:gen_ub_kapprox_rbf},
    \begin{align}
        \bar{V}_{d,N}^{(i)} \leq O\rbr{\frac{1}{d \varepsilon}} + 32 \sigma^2 \varepsilon^{-1} \frac{d}{N}
        ~~\mathrm{and}~~
        \bar{B}_{d,N}^{(i)} \leq -5 B^2 h_{\mathrm{min}}' \varepsilon + O\rbr{\frac{1}{d} + \sqrt{\frac{\ln (1/\bar{\delta})}{N}}}
    \end{align}
    hold with probability at least $1 - 2\bar{\delta}$ for any $\bar{\delta} \in (0, 1)$.
\end{lemma}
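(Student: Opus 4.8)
The plan is to mirror the proof of \lemref{lem:approx_low_rank} (the inner-product version of Case II), substituting the RBF linearization $K^{\mathrm{lin}}$ of \eqref{eq:kernel_lin_rbf} and the RBF parameters $\alpha_d^{(i)},\beta_d^{(i)},\gamma_d^{(i)},\rho_d^{(i)},\zeta_d^{(i)}$ of Definition~\ref{def:parameters}. Throughout I would condition on the event \eqref{eq:gen_ub_kapprox_rbf}, which gives both $\|K_X^{(i)}-K^{\mathrm{lin}}\|_{\mathrm{op}}\le\gamma_d^{(i)}/2$ and $\Ep[\|\hat f^{(i)}-f_\ast^{(i)}\|_{L_i^2}^2\mid\bX^{(i)}]\le\bar V_{d,N}^{(i)}+\bar B_{d,N}^{(i)}+\Phi_{d,N}^{(i)}$, and then bound $\bar V_{d,N}^{(i)}$ and $\bar B_{d,N}^{(i)}$ in turn. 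The preliminary reduction, an RBF analogue of \lemref{lem:exact_ub_VB}, uses that $K^{\mathrm{lin}}=\gamma_d^{(i)}I+\beta_d^{(i)}\hat\Sigma_d^{(i)}+R$ with $R:=\alpha_d^{(i)}1_d1_d^\top+\rho_d^{(i)}1_d^\top+1_d(\rho_d^{(i)})^\top+\zeta_d^{(i)}$; since the column space of $R$ lies in $\mathrm{span}\{1_d,\psi,\psi\circ\psi\}$, $R$ has rank at most $3$, so Weyl's inequality lets me drop $R$ after removing at most three top indices (choosing $k=3$ in the infimum defining $\bar B_{d,N}^{(i)}$, at the price of a term $2\sqrt{3/N}=O(N^{-1/2})$ absorbed into the residual).

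First I would compute the kernel parameters for $\Sigma_d^{(i)}=\mathrm{diag}(1,\varepsilon,\dots,\varepsilon)$. Here $\mathrm{Tr}(\Sigma_d^{(i)})=1+(d-1)\varepsilon$, so $\tau_d^{(i)}=2\mathrm{Tr}(\Sigma_d^{(i)})/d=2\varepsilon+O(1/d)$ and $\mathrm{Tr}((\Sigma_d^{(i)})^2)/d^2=O(\varepsilon^2/d+1/d^2)$. Because $h'<0$ on $[0,2]$ and $h'$ is increasing there, $\beta_d^{(i)}=-2h'(\tau_d^{(i)})$ lies between two positive constants and $\alpha_d^{(i)}=h(0)+O(\varepsilon+1/d)$. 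Writing $h(0)-h(\tau_d^{(i)})=\int_0^{\tau_d^{(i)}}(-h'(s))\,ds$ and using $0\le-h'(s)\le-h_{\mathrm{min}}'$, I get $0\le\gamma_d^{(i)}\le-h_{\mathrm{min}}'\,\tau_d^{(i)}=-2h_{\mathrm{min}}'\varepsilon+O(1/d)$; combined with the standing hypothesis $\gamma_d^{(i)}\ge\underline c\,\mathrm{Tr}(\Sigma_d^{(i)})/d$ of \thmref{thm:etc_reg_simple_rbf}, this pins down $\gamma_d^{(i)}\asymp\varepsilon$ and hence $r_d^{(i)}=\gamma_d^{(i)}/\beta_d^{(i)}\asymp\varepsilon$. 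Finally, the assumption $\|X_1^{(i)}\|=o(d^{-c})$ and Remark~\ref{rem:coefficients} make $\|\rho_d^{(i)}1_d^\top+1_d(\rho_d^{(i)})^\top+\zeta_d^{(i)}\|_{\mathrm{op}}=o(r_d^{(i)})$.

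For the variance, the estimate from the end of the previous paragraph is exactly what makes the three corrections $\rho_d^{(i)}1_d^\top+1_d(\rho_d^{(i)})^\top+\zeta_d^{(i)}$ negligible inside the denominator of $\bar V_{d,N}^{(i)}$: the perturbed denominator stays $(1+o(1))$ times $r_d^{(i)}+\lambda_j(\hat\Sigma_d^{(i)}+\frac{\alpha_d^{(i)}}{\beta_d^{(i)}}1_d1_d^\top)$, so the bound collapses to the inner-product one of \lemref{lem:exact_ub_VB} up to a $1+o(1)$ factor, leaving only the rank-one $\alpha_d^{(i)}1_d1_d^\top$ to be absorbed by interlacing as in that lemma. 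This yields $\bar V_{d,N}^{(i)}\le(1+o(1))\bigl(\tfrac{8\sigma^2\beta_d^{(i)}}{d\gamma_d^{(i)}}+\tfrac{16\sigma^2}d\sum_{j=1}^N\tfrac{\lambda_j(\hat\Sigma_d^{(i)})}{(r_d^{(i)}+\lambda_j(\hat\Sigma_d^{(i)}))^2}\bigr)$; the first term is $O(1/(d\varepsilon))$, and the resolvent sum is handled exactly as in \lemref{lem:approx_low_rank} (the eigenvalue-concentration computation of Example~4.2 in \citet{liang2020just}, with the $\chi^2$ concentration replaced by Bernstein's inequality via \asmpref{asmp:basic}(iii)): since the bulk eigenvalues of $\hat\Sigma_d^{(i)}$ dominate $r_d^{(i)}\asymp\varepsilon$ in the small-$d/N$ regime, $\tfrac1d\sum_j(\cdots)\le2\varepsilon^{-1}d/N$ with probability $\ge1-\bar\delta$, giving $\bar V_{d,N}^{(i)}\le O(1/(d\varepsilon))+32\sigma^2\varepsilon^{-1}d/N$. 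For the bias, event \eqref{eq:gen_ub_kapprox_rbf} and two applications of Weyl give $\lambda_j(K_X^{(i)})\le\lambda_j(K^{\mathrm{lin}})+\gamma_d^{(i)}/2\le\beta_d^{(i)}\lambda_{j-3}(\hat\Sigma_d^{(i)})+\tfrac32\gamma_d^{(i)}$ for $j\ge4$, so with $k=3$, $\bar B_{d,N}^{(i)}\le B^2\bigl(\tfrac32\gamma_d^{(i)}+\tfrac{\beta_d^{(i)}}N\sum_{j=1}^N\lambda_j(\hat\Sigma_d^{(i)})+2\sqrt{3/N}\bigr)$. Since $\tfrac1N\sum_j\lambda_j(\hat\Sigma_d^{(i)})=\tfrac1{Nd}\sum_{n=1}^N\|X_n^{(i)}\|^2$ is an average of bounded variables with mean $\mathrm{Tr}(\Sigma_d^{(i)})/d=\varepsilon+O(1/d)$, Bernstein's inequality bounds it by $\varepsilon+O(1/d+\sqrt{\ln(1/\bar\delta)/N})$ with probability $\ge1-\bar\delta$, and using $\beta_d^{(i)}\le-2h_{\mathrm{min}}'$ and $\tfrac32\gamma_d^{(i)}\le-3h_{\mathrm{min}}'\varepsilon+O(1/d)$ from the parameter step, the two contributions add to $\bar B_{d,N}^{(i)}\le-5B^2h_{\mathrm{min}}'\varepsilon+O(1/d+\sqrt{\ln(1/\bar\delta)/N})$. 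A union bound over the two $\bar\delta$-events finishes the proof.

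The step I expect to be the main obstacle is controlling $R$ inside the \emph{denominator} of $\bar V_{d,N}^{(i)}$: in contrast to the inner-product case, $\rho_d^{(i)}$ and $\zeta_d^{(i)}$ could in principle push an eigenvalue of the perturbed matrix below $r_d^{(i)}\asymp\varepsilon$ and make the resolvent blow up, so the argument genuinely relies on $\|\rho_d^{(i)}1_d^\top+1_d(\rho_d^{(i)})^\top+\zeta_d^{(i)}\|_{\mathrm{op}}=o(r_d^{(i)})$, which is where the smallness hypothesis $\|X_1^{(i)}\|=o(d^{-c})$ (through Remark~\ref{rem:coefficients}) and the lower bound $\gamma_d^{(i)}\ge\underline c\,\mathrm{Tr}(\Sigma_d^{(i)})/d$ are both used; getting the precise constant $32\sigma^2$ then requires the same careful Weyl bookkeeping for the rank-one $\alpha_d^{(i)}1_d1_d^\top$ part as in \lemref{lem:exact_ub_VB}. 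Once the parameter estimates of the second paragraph are in hand, the bias step is essentially routine, the only care being to keep the constants so that $\tfrac32\gamma_d^{(i)}$ and $\beta_d^{(i)}\varepsilon$ add up to exactly $-5B^2h_{\mathrm{min}}'\varepsilon$.
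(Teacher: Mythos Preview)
Your proposal is correct and follows essentially the same route as the paper, which invokes \lemref{lem:exact_ub_VB_rbf} (the RBF analogue of \lemref{lem:exact_ub_VB}) and then bounds the resolvent sum via Example~4.2 of \citet{liang2020just} and the eigenvalue trace via Bernstein, exactly as you outline. You are in fact slightly more careful than the paper on two minor points: you explicitly justify dropping the denominator perturbation $\rho_d^{(i)}1^\top+1(\rho_d^{(i)})^\top+\zeta_d^{(i)}$ (the paper simply asserts that \lemref{lem:exact_ub_VB_rbf} ``follows from the same arguments'' as the inner-product case), and you use the sharper bound $\mathrm{rank}(R)\le 3$ whereas the paper takes $\mathrm{rank}\le 5$ and $k=5$, yielding $\sqrt{20/N}$ in place of your $2\sqrt{3/N}$---but both differences are absorbed into the $O(\cdot)$ residuals.
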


\begin{lemma}[Detailed version of Case III in \exref{ex:liang_sec4} with RBF class of kernels, adapted from Example 4.4 in \citet{liang2020just}]
\label{lem:N_ll_d_rbf}
    Suppose $d > N$ and $\mathrm{Tr}(\Sigma_d^{(i)})/d = \varepsilon$ for $\varepsilon \in (0, 1)$.
    Then, under the same assumptions as used in \thmref{thm:etc_reg_simple_rbf}, the RBF class,  and event ~\eqref{eq:gen_ub_kapprox_rbf},
    \begin{align}
        \bar{V}_{d,N}^{(i)} \leq O\rbr{\frac{1}{d \varepsilon}} - \frac{4 h_{\mathrm{min}}' \sigma^2}{ \varepsilon \underline{c}} \rbr{\frac{d}{N}}^{-1}
        ~~\mathrm{and}~~
        \bar{B}_{d,N}^{(i)} \leq -5 B^2 \varepsilon h_{\mathrm{min}}' + O\rbr{\sqrt{\frac{\ln (1/\bar{\delta})}{N}}}
    \end{align}
    hold with probability at least $1 - 2\bar{\delta}$ for any $\bar{\delta} \in (0, 1)$.
\end{lemma}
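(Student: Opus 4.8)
The plan is to transcribe the proof of \lemref{lem:N_ll_d} (the inner‑product Case~III) to the RBF setting: substitute the RBF parameters of Definition~\ref{def:parameters}, replace the linear approximation by \eqref{eq:kernel_lin_rbf}, and absorb its extra rank‑bounded blocks through a Weyl–interlacing reduction analogous to \lemref{lem:exact_ub_VB}. I argue on the event \eqref{eq:gen_ub_kapprox_rbf} together with two further events of probability at least $1-\bar\delta$ each, and take a union bound at the end. First, the parameter estimates. From $\mathrm{Tr}(\Sigma_d^{(i)})/d=\varepsilon$ we get $\tau_d^{(i)}=2\varepsilon$; since $h'$ is negative and monotonically increasing on $[0,2]$, $\beta_d^{(i)}=-2h'(\tau_d^{(i)})\le -2h_{\mathrm{min}}'$. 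Writing $\gamma_d^{(i)}=\bigl(h(0)-h(\tau_d^{(i)})\bigr)+\tau_d^{(i)}h'(\tau_d^{(i)})$ and using $h(0)-h(\tau_d^{(i)})=\int_0^{\tau_d^{(i)}}\bigl(-h'(s)\bigr)\,\mathrm{d}s\le -\tau_d^{(i)}h_{\mathrm{min}}'$ together with $h'(\tau_d^{(i)})<0$ yields $\gamma_d^{(i)}\le -2\varepsilon h_{\mathrm{min}}'$, while the hypothesis $\gamma_d^{(i)}\ge \underline{c}\,\mathrm{Tr}(\Sigma_d^{(i)})/d=\underline{c}\varepsilon$ imported from \thmref{thm:etc_reg_simple_rbf} gives the lower side. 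Consequently $1/r_d^{(i)}=\beta_d^{(i)}/\gamma_d^{(i)}\le -2h_{\mathrm{min}}'/(\underline{c}\varepsilon)$.

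For the variance, note that $d>N$ makes the sum in $\bar V_{d,N}^{(i)}$ run over at most $N$ indices, and that $\hat{\Sigma}_d^{(i)}+\tfrac{\alpha_d^{(i)}}{\beta_d^{(i)}}1_N1_N^\top\succeq 0$. By Remark~\ref{rem:coefficients} and the context‑norm hypothesis, the denominator matrix of $\bar V_{d,N}^{(i)}$ differs from this one by a rank‑bounded perturbation of operator norm $o(r_d^{(i)})$, so after reducing the denominator eigenvalues and applying $t/(s+t)^2\le 1/(4s)$ to each summand, $\bar V_{d,N}^{(i)}\le \tfrac{8\sigma^2}{d}\cdot\tfrac{N}{4r_d^{(i)}}+O\!\left(\tfrac1{d\varepsilon}\right)$; inserting the bound on $1/r_d^{(i)}$ and noting $\tfrac{8\sigma^2}{d\,r_d^{(i)}}=O(\tfrac1{d\varepsilon})$ gives
\[
\bar V_{d,N}^{(i)}\le -\frac{4h_{\mathrm{min}}'\sigma^2}{\underline{c}\varepsilon}\cdot\frac{N}{d}+O\!\left(\frac{1}{d\varepsilon}\right).
\]
For the bias, take $k$ in the infimum defining $\bar B_{d,N}^{(i)}$ equal to the (constant) rank of $K^{\mathrm{lin}}(\bfXi,\bfXi)-\gamma_d^{(i)}I-\beta_d^{(i)}\hat{\Sigma}_d^{(i)}$ in \eqref{eq:kernel_lin_rbf}, so $2\sqrt{k/N}=O(N^{-1/2})$; under \eqref{eq:gen_ub_kapprox_rbf}, two applications of Weyl's inequality (first to $\|K(\bfXi,\bfXi)-K^{\mathrm{lin}}(\bfXi,\bfXi)\|_{\mathrm{op}}\le\gamma_d^{(i)}/2$, then to peel off $\gamma_d^{(i)}I$ and the rank‑$k$ block) give $\tfrac1N\sum_{j>k}\lambda_j(K_X^{(i)})\le \tfrac{3\gamma_d^{(i)}}{2}+\tfrac{\beta_d^{(i)}}{N}\sum_{j=1}^N\lambda_j(\hat{\Sigma}_d^{(i)})$, exactly as in the proof of \lemref{lem:exact_ub_VB}. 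The Bernstein estimate \eqref{eq:eigen_sum}, valid by the boundedness in \asmpref{asmp:basic}, bounds $\tfrac1N\sum_j\lambda_j(\hat{\Sigma}_d^{(i)})\le\varepsilon+O(\sqrt{\ln(1/\bar\delta)/N})$ on an event of probability at least $1-\bar\delta$; combining with $\gamma_d^{(i)}\le -2\varepsilon h_{\mathrm{min}}'$ and $\beta_d^{(i)}\le -2h_{\mathrm{min}}'$,
\[
\bar B_{d,N}^{(i)}\le B^2\!\left(\frac{3\gamma_d^{(i)}}{2}+\beta_d^{(i)}\varepsilon\right)+O\!\left(\sqrt{\frac{\ln(1/\bar\delta)}{N}}\right)\le -5B^2\varepsilon h_{\mathrm{min}}'+O\!\left(\sqrt{\frac{\ln(1/\bar\delta)}{N}}\right).
\]

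The probability $1-2\bar\delta$ is obtained by intersecting the Bernstein event with the event on which the variance‑denominator reduction holds, i.e.\ on which $\|\psi\|$ — and hence $\|\rho_d^{(i)}\|$ and $\|\zeta_d^{(i)}\|_{\mathrm{op}}$ by Remark~\ref{rem:coefficients} — is suitably small. The main obstacle is exactly this reduction: whereas the inner‑product linear approximation deviates from $\gamma_d^{(i)}I+\beta_d^{(i)}\hat{\Sigma}_d^{(i)}$ only by a rank‑one term, the RBF approximation \eqref{eq:kernel_lin_rbf} carries the additional blocks $\rho_d^{(i)}1_N^\top+1_N(\rho_d^{(i)})^\top+\zeta_d^{(i)}$, and one must verify that these perturb the eigenvalues appearing in the denominator of $\bar V_{d,N}^{(i)}$ by only $o(r_d^{(i)})=o(\varepsilon)$; this is where the context‑norm hypothesis of \thmref{thm:etc_reg_simple_rbf} and the quantitative estimates of Remark~\ref{rem:coefficients} are indispensable, and is the one place the argument genuinely departs from the inner‑product Case~III. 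Everything else is a direct adaptation of that argument.
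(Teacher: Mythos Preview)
Your overall plan matches the paper's: reduce $\bar V_{d,N}^{(i)}$ and $\bar B_{d,N}^{(i)}$ to expressions in the eigenvalues of $\hat\Sigma_d^{(i)}$ alone (the paper packages this step as \lemref{lem:exact_ub_VB_rbf}), then bound the variance termwise via $t/(r+t)^2\le 1/(4r)$ together with $1/r_d^{(i)}=\beta_d^{(i)}/\gamma_d^{(i)}\le -2h_{\min}'/(\underline c\,\varepsilon)$, and bound the bias via Bernstein concentration plus $\gamma_d^{(i)}\le -2\varepsilon h_{\min}'$, $\beta_d^{(i)}\le -2h_{\min}'$. Your parameter estimates and your bias argument (take $k$ equal to the rank of the extra block, peel off by Weyl, then apply \eqref{eq:eigen_sum}) are correct and coincide with what the paper does; the paper simply says the bias ``follows from the same arguments as the proof of \lemref{lem:low_rank_rbf}''.

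The gap is in your variance reduction. You assert that the RBF-specific block $\tfrac{1}{\beta_d^{(i)}}\bigl(\rho_d^{(i)}1_N^\top+1_N(\rho_d^{(i)})^\top+\zeta_d^{(i)}\bigr)$ has operator norm $o(r_d^{(i)})$, so the denominator eigenvalues shift by $o(\varepsilon)$. This is not justified: $\|\rho_d^{(i)}1_N^\top\|_{\mathrm{op}}=\|\rho_d^{(i)}\|\,\sqrt N$, and since $\|\rho_d^{(i)}\|=O(\|\psi\|)$ with $\|\psi\|^2=\sum_{j\le N}(\|X_j\|^2/d-\varepsilon)^2$, you pick up a factor $\sqrt N$ that neither Remark~\ref{rem:coefficients} nor the context-norm hypothesis cancels; in Case~III with $\varepsilon$ fixed and $N,d$ comparable this is not $o(\varepsilon)$. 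The paper avoids this entirely. In \lemref{lem:exact_ub_VB_rbf} it uses only that the total perturbation away from $\hat\Sigma_d^{(i)}$ has \emph{bounded rank} (at most~$5$, counting $\alpha_d^{(i)}11^\top$), and applies Weyl interlacing: the top few summands are bounded by $\max_{t\ge0}t/(r+t)^2=\beta_d^{(i)}/(4\gamma_d^{(i)})=O(1/\varepsilon)$, which after the prefactor $8\sigma^2/d$ gives the $O(1/(d\varepsilon))$ piece, while the remaining summands interlace with the $\hat\Sigma_d^{(i)}$-eigenvalue sum up to a constant factor. This step is deterministic under event~\eqref{eq:gen_ub_kapprox_rbf}, so no second $\bar\delta$-event is needed for the variance; the stated $1-2\bar\delta$ is simply carried over from the format of \lemref{lem:low_rank_rbf}. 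Replace your operator-norm argument by this rank-based interlacing and the rest of your proof goes through as written.
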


\begin{lemma}
    \label{lem:exact_ub_VB_rbf}
    Under event ~\eqref{eq:gen_ub_kapprox}, the following inequality holds for sufficiently large $d$:
    \begin{align}
        \bar{V}_{d,N}^{(i)} &\leq \frac{8\sigma^2}{d} \sbr{\frac{\beta_d^{(i)}}{\gamma_d^{(i)}} + 2 \sum_{j=1}^N \frac{\lambda_j\rbr{\hat{\Sigma}_d^{(i)}} }{\sbr{\frac{\gamma_d^{(i)}}{\beta_d^{(i)}} + \lambda_j\rbr{\hat{\Sigma}_d^{(i)}} }^2} }, \\
        \bar{B}_{d,N}^{(i)} &\leq B^2 \sbr{\frac{3 \gamma_d^{(i)}}{2} + \frac{\beta_d^{(i)}}{N} \sum_{j=1}^N \lambda_j\rbr{\hat{\Sigma}_d^{(i)}} + \sqrt{\frac{20}{N}} },
    \end{align}
    where $\bar{V}_{d,N}^{(i)}$ and $\bar{B}_{d,N}^{(i)}$ are defined in \thmref{lem:gen_err}.
\end{lemma}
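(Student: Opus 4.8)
The plan is to mimic the proof of \lemref{lem:exact_ub_VB}; the only genuinely new feature of the RBF case is the presence of the three extra rank-one blocks $\rho_d^{(i)}1^\top$, $1(\rho_d^{(i)})^\top$, and $\zeta_d^{(i)}=h'(\tau_d^{(i)})\psi\psi^\top$ in the linear approximation \eqref{eq:kernel_lin_rbf}. For sufficiently large $d$ one has $\alpha_d^{(i)},\beta_d^{(i)},\gamma_d^{(i)}>0$ (from $h>0$, $h'<0$, and $h'$ increasing in Definition~\ref{def:kernel_class}), so every division below is legitimate; I write $A:=\hat{\Sigma}_d^{(i)}+(\alpha_d^{(i)}/\beta_d^{(i)})11^\top$ for the matrix appearing in the numerator of $\bar{V}_{d,N}^{(i)}$ and $A'$ for the matrix appearing in its denominator.

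For the bias term, observe that $K^{\mathrm{lin}}(\bX^{(i)},\bX^{(i)})-\gamma_d^{(i)}I-\beta_d^{(i)}\hat{\Sigma}_d^{(i)}=\alpha_d^{(i)}11^\top+\rho_d^{(i)}1^\top+1(\rho_d^{(i)})^\top+\zeta_d^{(i)}$ has rank at most $4$. I would take $k=5$ in the infimum defining $\bar{B}_{d,N}^{(i)}$, which produces the residual $2\sqrt{5/N}=\sqrt{20/N}$, and then, for $j>5$, combine event \eqref{eq:gen_ub_kapprox_rbf} with Weyl's inequality to obtain $\lambda_j(K_X^{(i)})\le\lambda_{j-4}(\gamma_d^{(i)}I+\beta_d^{(i)}\hat{\Sigma}_d^{(i)})+\gamma_d^{(i)}/2=\tfrac32\gamma_d^{(i)}+\beta_d^{(i)}\lambda_{j-4}(\hat{\Sigma}_d^{(i)})$; summing over $j$ and dividing by $N$ reproduces the stated bound, exactly as the choice $k=1$ does in \lemref{lem:exact_ub_VB}.

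For the variance term, the numerator is handled as in \lemref{lem:exact_ub_VB}, since $A$ is a rank-one perturbation of $\hat{\Sigma}_d^{(i)}$ and so $\lambda_j(\hat{\Sigma}_d^{(i)})\le\lambda_j(A)\le\lambda_{j-1}(\hat{\Sigma}_d^{(i)})$. For the denominator I would first rewrite $r_d^{(i)}I+A'=(1/\beta_d^{(i)})K^{\mathrm{lin}}(\bX^{(i)},\bX^{(i)})$ and then complete the square with $v:=1+\rho_d^{(i)}/\alpha_d^{(i)}$,
\[
\alpha_d^{(i)}11^\top+\rho_d^{(i)}1^\top+1(\rho_d^{(i)})^\top=\alpha_d^{(i)}vv^\top-\frac{\rho_d^{(i)}(\rho_d^{(i)})^\top}{\alpha_d^{(i)}},
\]
so that, using $h'(\tau_d^{(i)})<0$ and $\alpha_d^{(i)}vv^\top\succeq0$,
\[
K^{\mathrm{lin}}(\bX^{(i)},\bX^{(i)})\succeq\gamma_d^{(i)}I+\beta_d^{(i)}\hat{\Sigma}_d^{(i)}-\varepsilon_d I,\qquad \varepsilon_d:=\frac{\|\rho_d^{(i)}\|^2}{\alpha_d^{(i)}}+|h'(\tau_d^{(i)})|\,\|\psi\|^2 .
\]
Under the hypotheses in force for the RBF class (the control on $\|X_1^{(i)}\|$, the assumption $\gamma_d^{(i)}\ge\underline{c}\,\mathrm{Tr}(\Sigma_d^{(i)})/d$, and $\|\psi\|\le\eta_d$ with $\eta_d\to0$) this makes $\varepsilon_d$ negligible relative to $\gamma_d^{(i)}$; combining it with Weyl's inequality applied to the single rank-one block $\alpha_d^{(i)}vv^\top$ yields that $r_d^{(i)}+\lambda_j(A')$ is comparable to $r_d^{(i)}+\lambda_j(A)$ up to a $1+o(1)$ factor and a bounded shift of the index. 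Feeding the numerator interlacing and this denominator bound into $\bar{V}_{d,N}^{(i)}$ and invoking the unimodality of $x\mapsto x/(r_d^{(i)}+x)^2$ exactly as in \lemref{lem:exact_ub_VB}—using $x/(r_d^{(i)}+x)^2\le1/(4r_d^{(i)})$ to absorb the $O(1)$ indices near the peak into the $\beta_d^{(i)}/\gamma_d^{(i)}$ term and charging each $\lambda_j(\hat{\Sigma}_d^{(i)})$ a bounded number of times—gives the stated bound, the $1+o(1)$ overhead and the bounded index shift being absorbed by the slack already present in the constants.

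The hard part will be precisely this reduction of the denominator. Each of the cross blocks $\rho_d^{(i)}1^\top$ and $1(\rho_d^{(i)})^\top$ has operator norm of order $\|\rho_d^{(i)}\|\sqrt{N}$, which in general does not vanish, so no crude perturbation bound is available; one must pass through the completing-the-square identity to isolate their genuinely small, rank-$\le 2$, negative-semidefinite net contribution $\varepsilon_d I$, verify $\varepsilon_d=o(\gamma_d^{(i)})$ from the RBF hypotheses, and—most delicately—carry out the unimodality and interlacing bookkeeping so that the bounded index shifts introduced by the extra blocks do not degrade the constants $\beta_d^{(i)}/\gamma_d^{(i)}$ and $2$ in the final inequality. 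Everything else is a mechanical repetition of the inner-product argument.
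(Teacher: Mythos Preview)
Your treatment of the bias term is exactly what the paper does: take $k=5$ in the infimum to absorb the extra low-rank block $\alpha_d^{(i)}11^\top+\rho_d^{(i)}1^\top+1(\rho_d^{(i)})^\top+\zeta_d^{(i)}$ (the paper writes ``rank at most $5$'', you write $\le 4$; either way $k=5$ suffices) and then proceed via Weyl and event \eqref{eq:gen_ub_kapprox_rbf} just as in \lemref{lem:exact_ub_VB}.

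For the variance the paper offers only the single sentence ``follows from the same arguments as the proof of \lemref{lem:exact_ub_VB}'', so your completing-the-square route is already more explicit than what is written there; in particular your diagnosis---that $\|\rho_d^{(i)}1^\top\|_{\mathrm{op}}\asymp\|\rho_d^{(i)}\|\sqrt{N}$ need not vanish, so the cross blocks must be absorbed into a positive rank-one term $\alpha_d^{(i)}vv^\top$ leaving only an $O(\|\rho_d^{(i)}\|^2/\alpha_d^{(i)}+|h'(\tau_d^{(i)})|\,\|\psi\|^2)$ remainder---is the right idea. There is, however, a gap in the final absorption step. After you drop $\alpha_d^{(i)}vv^\top$ to lower-bound the denominator, the summand is controlled (up to $1+o(1)$) by $\lambda_j(A)/(r_d^{(i)}+\lambda_j(\hat\Sigma_d^{(i)}))^2$, and you plan to finish via the unimodality of $g(x)=x/(r_d^{(i)}+x)^2$. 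But the bound $g\le 1/(4r_d^{(i)})$ requires numerator and denominator to share the \emph{same} argument; here they do not, and the resulting term $\lambda_{j-1}(\hat\Sigma_d^{(i)})/(r_d^{(i)}+\lambda_j(\hat\Sigma_d^{(i)}))^2$ is not bounded by $g$ at either index when the spectrum has gaps. The most acute case is $j=1$: your displayed lower bound gives only $r_d^{(i)}+\lambda_1(A')\gtrsim r_d^{(i)}+\lambda_1(\hat\Sigma_d^{(i)})=O(1)$, while $\lambda_1(A)\asymp\alpha_d^{(i)} N/\beta_d^{(i)}$ from the $11^\top$ block, so the ratio is of order $N$, not $1/(4r_d^{(i)})$. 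The repair is to \emph{retain} $\alpha_d^{(i)}vv^\top$ for this index---since $\|v\|^2=N+O(\|\rho_d^{(i)}\|\sqrt{N}/\alpha_d^{(i)})\sim N$, one gets $\lambda_1(A')\gtrsim\alpha_d^{(i)} N/\beta_d^{(i)}$ as well, hence a contribution $O(\beta_d^{(i)}/(\alpha_d^{(i)} N))\ll\beta_d^{(i)}/\gamma_d^{(i)}$---but this is a separate estimate, not a consequence of the unimodality trick, and the analogous bookkeeping for the remaining index shifts still needs to be written out before the constant $2$ in the statement can be recovered.
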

    In the above lemma, the upper bound of $V_{d,N}^{(i)}$ follows from the same arguments as the proof of \lemref{lem:exact_ub_VB}.
    The upper bound of $B_{d,N}^{(i)}$ also follows from similar arguments to the proof of \lemref{lem:exact_ub_VB}.
    The only difference is that we need to set $k = 5$ since top $5$ eigenvalue of $K^{\mathrm{lin}}(\bX^{(i)}, \bX^{(i)})^{\top}/d) + \gamma_d^{(i)}$ due to the additional term $\alpha_d^{(i)} 1 1^\top + \rho_d^{(i)} 1^\top  + 1 (\rho_d^{(i)})^\top + \zeta_d^{(i)}$ whose rank is at most $5$ (See Appendix~B in \citet{liu2021kernel} for the detailed discussion). 

    By following the proof strategy of Lemmas~\ref{lem:low_rank}--\ref{lem:N_ll_d}
    with \lemref{lem:exact_ub_VB_rbf}, we obtain Lemmas~\ref{lem:low_rank_rbf}--\ref{lem:N_ll_d_rbf}.

\begin{proof}[Proof of \lemref{lem:low_rank_rbf}]
    Following the same arguments to Example 4.1 and (B.6) in \citet{liang2020just}, we have
    \begin{equation}
        \frac{1}{d} \sum_{j=1}^N \frac{\lambda_j\rbr{\hat{\Sigma}_d^{(i)}} }{\sbr{\frac{\gamma_d^{(i)}}{\beta_d^{(i)}} + \lambda_j\rbr{\hat{\Sigma}_d^{(i)}} }^2} \leq 2\varepsilon \frac{d}{N}
    \end{equation}
    with probability at least $1 - \bar{\delta}$ for sufficiently small $\varepsilon d/N$. 
    Furthermore, as with the arguments in the proof of  Corollary 4.1 in \citet{liang2020just}\footnote{The original proof of \citet{liang2020just} is based on $\chi^2$ concentration. Our proof considers replacing this part with the Bernstein inequality with the boundness assumption of the context.},
    \begin{equation}
        \label{eq:eigen_sum_rbf}
        \frac{\beta_d^{(i)}}{N} \sum_{j=1}^N \lambda_j\rbr{\hat{\Sigma}_d^{(i)}} 
        \leq \beta_d^{(i)} \frac{\mathrm{Tr}(\Sigma_d^{(i)})}{d} + O\rbr{\sqrt{\frac{\ln (1/\bar{\delta})}{N}}}
    \end{equation}
    with probability at least $1 - \bar{\delta}$. 
    
    Here, from Taylor's theorem and the definition of $\gamma_d^{(i)}$, we have
    \begin{align}
        \label{eq:gamma_bound_rbf}
        \gamma_d^{(i)} &= h(0) + \tau_d^{(i)} h'(\tau_d^{(i)}) - h(\tau_d^{(i)}) \\
        &= \tau_d^{(i)} \sbr{h'(\tau_d^{(i)}) - h'(c\tau_d^{(i)})} \\
        &= - \tau_d^{(i)} h'(c\tau_d^{(i)}) \\
        &\leq - 2 h_{\mathrm{min}}' \frac{\mathrm{Tr}(\Sigma_d^{(i)})}{d}
    \end{align}
    for some $c \in (0, 1)$.
    Furthermore, from the definition of $\Sigma_d^{(i)}$, $\varepsilon/2 \leq \mathrm{Tr}(\Sigma_d^{(i)})/d \leq \varepsilon$.
    By aligning the aforementioned inequalities with \lemref{lem:exact_ub_VB_rbf}, we have the desired result for $\bar{V}_{d,N}^{(i)}$ as follows:
    \begin{align}
        \bar{V}_{d,N}^{(i)} &\leq \frac{8\sigma^2}{d} \sbr{\frac{\beta_d^{(i)}}{\gamma_d^{(i)}} + 2 \sum_{j=1}^N \frac{\lambda_j\rbr{\hat{\Sigma}_d^{(i)}} }{\sbr{\frac{\gamma_d^{(i)}}{\beta_d^{(i)}} + \lambda_j\rbr{\hat{\Sigma}_d^{(i)}} }^2} } \\
        &\leq O\rbr{\frac{1}{d \varepsilon}} + \frac{16 \sigma^2}{d} \sum_{j=1}^N \frac{\lambda_j\rbr{\hat{\Sigma}_d^{(i)}} }{\sbr{\frac{\gamma_d^{(i)}}{\beta_d^{(i)}} + \lambda_j\rbr{\hat{\Sigma}_d^{(i)}} }^2} \\
        &\leq O\rbr{\frac{1}{d \varepsilon}} + 32 \sigma^2 \varepsilon \frac{d}{N}.
    \end{align}
    Furthermore,
    \begin{align}
        \bar{B}_{d,N}^{(i)} &\leq B^2 \sbr{\frac{3 \gamma_d^{(i)}}{2} + \frac{\beta_d^{(i)}}{N} \sum_{j=1}^N \lambda_j\rbr{\hat{\Sigma}_d^{(i)}} + \sqrt{\frac{20}{N}} } \\
        &\leq B^2 \rbr{ -3 h_{\mathrm{min}}' \varepsilon + \beta_d^{(i)} \varepsilon} + O\rbr{\sqrt{\frac{\ln (1/\bar{\delta})}{N}}} + B^2 \sqrt{\frac{20}{N}} \\
        &\leq -5 B^2 \varepsilon h_{\mathrm{min}}' + O\rbr{\sqrt{\frac{\ln (1/\bar{\delta})}{N}}},
    \end{align}
    where the last line use $\beta_d^{(i)} = -2 h'(\tau_d^{(i)}) \leq -2 h_{\mathrm{min}}'$.
\end{proof}

\begin{proof}[Proof of \lemref{lem:approx_low_rank_rbf}]
    Following the same arguments as Example 4.2 in \citet{liang2020just}, we have
    \begin{equation}
        \sum_{j=1}^N \frac{\lambda_j\rbr{\hat{\Sigma}_d^{(i)}} }{\sbr{\frac{\gamma_d^{(i)}}{\beta_d^{(i)}} + \lambda_j\rbr{\hat{\Sigma}_d^{(i)}} }^2} \leq 2 \varepsilon^{-1} \frac{d}{N}
    \end{equation}
    with probability at least $1 - \bar{\delta}$ for sufficiently small $d/N$. 
    Furthermore, from the definition of $\Sigma_d^{(i)}$, we have
    \begin{equation}
        \gamma_d^{(i)} \leq -2 h_{\mathrm{min}}' (d^{-1} + \varepsilon)
    \end{equation}
    By combining the above inequalities with \lemref{lem:exact_ub_VB}, we obtain
    \begin{align}
        \bar{V}_{d,N}^{(i)} &\leq \frac{8\sigma^2}{d} \sbr{\frac{\beta_d^{(i)}}{\gamma_d^{(i)}} + 2 \sum_{j=1}^N \frac{\lambda_j\rbr{\hat{\Sigma}_d^{(i)}} }{\sbr{\frac{\gamma_d^{(i)}}{\beta_d^{(i)}} + \lambda_j\rbr{\hat{\Sigma}_d^{(i)}} }^2} } \\
        &\leq O\rbr{\frac{1}{d \varepsilon}} + \frac{16 \sigma^2}{d} \sum_{j=1}^N \frac{\lambda_j\rbr{\hat{\Sigma}_d^{(i)}} }{\sbr{\frac{\gamma_d^{(i)}}{\beta_d^{(i)}} + \lambda_j\rbr{\hat{\Sigma}_d^{(i)}} }^2} \\
        &\leq O\rbr{\frac{1}{d \varepsilon}} + 32 \sigma^2 \varepsilon^{-1} \frac{d}{N}.
    \end{align}
    Furthermore, from Eq.~\eqref{eq:eigen_sum_rbf}, 
    \begin{align}
        \bar{B}_{d,N}^{(i)} &\leq B^2 \sbr{\frac{3 \gamma_d^{(i)}}{2} + \frac{\beta_d^{(i)}}{N} \sum_{j=1}^N \lambda_j\rbr{\hat{\Sigma}_d^{(i)}} + \sqrt{\frac{20}{N}} } \\
        &\leq B^2 \rbr{ - 3 h_{\mathrm{min}}'(d^{-1} + \varepsilon) + \beta_d^{(i)}(d^{-1} + \varepsilon)} + O\rbr{\sqrt{\frac{\ln (1/\bar{\delta})}{N}}} + B^2 \sqrt{\frac{20}{N}} \\
        &\leq -5 B^2 h_{\mathrm{min}}' \varepsilon + O\rbr{\frac{1}{d} + \sqrt{\frac{\ln (1/\bar{\delta})}{N}}}.
    \end{align}
\end{proof}

\begin{proof}[Proof of \lemref{lem:N_ll_d_rbf}]
    Since $t/(r + t)^2 \leq 1/(4r)$, we have
    \begin{align}
        \frac{\lambda_j\rbr{\hat{\Sigma}_d^{(i)}} }{\sbr{\frac{\gamma_d^{(i)}}{\beta_d^{(i)}} + \lambda_j\rbr{\hat{\Sigma}_d^{(i)}} }^2} \leq \frac{\beta_d^{(i)}}{4\gamma_d^{(i)}} \leq 
 - \frac{h_{\mathrm{min}}'}{2 \varepsilon \underline{c}}.
    \end{align}
    Hence,
    \begin{align}
        \bar{V}_{d,N}^{(i)} &\leq \frac{8\sigma^2}{d} \sbr{\frac{\beta_d^{(i)}}{\gamma_d^{(i)}} + 2 \sum_{j=1}^N \frac{\lambda_j\rbr{\hat{\Sigma}_d^{(i)}} }{\sbr{\frac{\gamma_d^{(i)}}{\beta_d^{(i)}} + \lambda_j\rbr{\hat{\Sigma}_d^{(i)}} }^2} } \\
        &\leq O\rbr{\frac{1}{d \varepsilon}} - \frac{4 h_{\mathrm{min}}' \sigma^2}{ \varepsilon \underline{c}} \rbr{\frac{d}{N}}^{-1}.
    \end{align}
    The upper bound for $B_{d,N}^{(i)}$ follows from the same arguments as the proof of \lemref{lem:low_rank_rbf}.
\end{proof}

\section{Proofs}

\begin{lemma}[Regret upper bound of EtC with fixed $N$]
    \label{lem:reg_upper}
    Suppose that statement (i) in Assumption~\ref{asmp:basic} and $\forall x \in \Omega, K(x, x) \leq 1$ hold.
    Then, the following regret upper bound of EtC holds:
    \begin{equation}
        R(T) \leq 2BNK + 2BT\Pr(\mA^c) 
        + 2 T \sum_{i \in [K]} \Ep[\mone\{\mA\} \Ep_{\bY^{(i)}}[\|f_\ast^{(i)} - \hat{f}^{(i)}\|_{L_i^2}^2 |\bX^{(i)}]^{1/2} ],
    \end{equation}
    where $\mA$ is an arbitrary event such that $\mA \in \sigma(\bX^{(1)}, \ldots, \bX^{(K)})$. 
    Here, $\sigma(\bX^{(1)}, \ldots, \bX^{(K)})$ denotes $\sigma$-algebra generated by random vectors $\bX^{(1)}, \ldots, \bX^{(K)}$.
    Furthermore, the lenient regret upper bound $R_{\Delta}(T) \leq 2BNK$ holds if the following inequality holds: 
    \begin{equation}
        \label{eq:lr_scond}
        2B\Pr(\mA^c) + 2 \sum_{i \in [K]} \Ep[\mone\{\mA\} \Ep_{\bY^{(i)}}[\|f_\ast^{(i)} - \hat{f}^{(i)}\|_{L_i^2}^2 |\bX^{(i)}]^{1/2} ] \leq \Delta.
    \end{equation}
\end{lemma}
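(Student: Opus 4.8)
The plan is to split $R(T)$ into the exploration rounds $t\le T_0=NK$ and the exploitation rounds $t>T_0$ and bound each separately; the lenient-regret bound will fall out of the same per-round estimates. The one tool used everywhere is the pointwise bound on reward functions: by the reproducing property and Cauchy--Schwarz in $\mH$, together with $K(x,x)\le 1$ and $\|f_\ast^{(i)}\|_\mH\le B$, one gets $|f_\ast^{(i)}(x)|=|\langle f_\ast^{(i)},K(x,\cdot)\rangle_\mH|\le B\sqrt{K(x,x)}\le B$ for every $x\in\Omega$ and $i\in[K]$. Hence $r(t)\le 2B$ for every round, so the exploration phase contributes at most $2BT_0=2BNK$ to $R(T)$; and since $r(t)\ge 0$ (the realized optimal-arm value dominates the chosen-arm value pointwise), also $0\le\Phi_\Delta(r(t))\le r(t)\le 2B$ there, which accounts for the $2BNK$ term in the lenient bound as well.

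Next I fix an exploitation round $t>T_0$ and split, using $\mone\{\mA\}+\mone\{\mA^c\}=1$,
\[
r(t)=\Ep\!\left[\mone\{\mA\}\bigl(f_\ast^{(i^\ast(t))}(\Xarmt{i^\ast(t)})-f_\ast^{(\Ich)}(\Xt)\bigr)\right]+\Ep\!\left[\mone\{\mA^c\}\bigl(f_\ast^{(i^\ast(t))}(\Xarmt{i^\ast(t)})-f_\ast^{(\Ich)}(\Xt)\bigr)\right].
\]
On $\mA^c$ the bracket is at most $2B$ by the pointwise bound, so the second term is at most $2B\Pr(\mA^c)$. On $\mA$ I use the greedy rule $\Ich\in\argmax_{i\in[K]}\hat{f}^{(i)}(\Xarmt{i})$ to insert and cancel $\hat{f}^{(i^\ast(t))}(\Xarmt{i^\ast(t)})$ and $\hat{f}^{(\Ich)}(\Xt)$, whose difference is non-positive, giving
\begin{align*}
f_\ast^{(i^\ast(t))}(\Xarmt{i^\ast(t)})-f_\ast^{(\Ich)}(\Xt)
&\le\bigl|f_\ast^{(i^\ast(t))}(\Xarmt{i^\ast(t)})-\hat{f}^{(i^\ast(t))}(\Xarmt{i^\ast(t)})\bigr|+\bigl|f_\ast^{(\Ich)}(\Xt)-\hat{f}^{(\Ich)}(\Xt)\bigr|\\
&\le 2\sum_{i\in[K]}\bigl|f_\ast^{(i)}(\Xarmt{i})-\hat{f}^{(i)}(\Xarmt{i})\bigr|.
\end{align*}

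The remaining work is a conditioning argument on each $\Ep[\mone\{\mA\}\,|f_\ast^{(i)}(\Xarmt{i})-\hat{f}^{(i)}(\Xarmt{i})|]$. Since for $t>T_0$ the context $\Xarmt{i}$ is independent of the entire exploration record $(\bX^{(j)},\bY^{(j)})_{j\in[K]}$ and has law $P_{X_1^{(i)}}$, conditioning on that record turns the inner expectation into $\|f_\ast^{(i)}-\hat{f}^{(i)}\|_{L_i^1}\le\|f_\ast^{(i)}-\hat{f}^{(i)}\|_{L_i^2}$ by Jensen. Then, because $\mA\in\sigma(\bX^{(1)},\dots,\bX^{(K)})$, because $\hat{f}^{(i)}$ depends only on $(\bX^{(i)},\bY^{(i)})$, and because the per-arm exploration rounds are disjoint so that $\bX^{(1)},\dots,\bX^{(K)}$ are mutually independent, the tower property collapses the conditioning onto $\bX^{(i)}$ and a last Jensen step gives $\Ep_{\bY^{(i)}}[\|f_\ast^{(i)}-\hat{f}^{(i)}\|_{L_i^2}\mid\bX^{(i)}]\le\Ep_{\bY^{(i)}}[\|f_\ast^{(i)}-\hat{f}^{(i)}\|_{L_i^2}^2\mid\bX^{(i)}]^{1/2}$. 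Summing the resulting per-round bound $r(t)\le 2B\Pr(\mA^c)+2\sum_{i\in[K]}\Ep[\mone\{\mA\}\Ep_{\bY^{(i)}}[\|f_\ast^{(i)}-\hat{f}^{(i)}\|_{L_i^2}^2\mid\bX^{(i)}]^{1/2}]$ over the at most $T$ exploitation rounds and adding the $2BNK$ from exploration yields the stated bound on $R(T)$.

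Finally, for the lenient regret I reuse these estimates: the exploration rounds again contribute at most $2BNK$, while for each exploitation round the per-round bound just derived is at most $\Delta$ precisely under hypothesis \eqref{eq:lr_scond}, so $\Phi_\Delta(r(t))=\max\{r(t)-\Delta,0\}=0$ there; summing gives $R_\Delta(T)\le 2BNK$. The only genuinely delicate point in the whole argument is the measurability/independence bookkeeping of the conditioning step — verifying that contexts observed after round $T_0$ are independent of everything generated during exploration and that conditioning on the full exploration design reduces to conditioning on the per-arm design $\bX^{(i)}$; once that is set up, every remaining inequality is merely Jensen, the reproducing-kernel pointwise bound, or the definition of the greedy exploitation rule.
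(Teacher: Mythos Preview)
Your proof is correct and follows essentially the same approach as the paper's: decompose into exploration and exploitation, bound exploration by $2BNK$ via the RKHS pointwise bound, use the greedy rule to reduce the exploitation per-round regret to estimation errors, and then apply tower/Jensen to convert the pointwise error into the conditional $L^2$ quantity. The only cosmetic differences are that the paper first bounds by $2\max_i|\cdot|$ before passing to $\sum_i$, and the conditioning collapse $\Ep[\,\cdot\mid(\bX^{(j)})_j]=\Ep[\,\cdot\mid\bX^{(i)}]$ really hinges on $\bY^{(i)}$ being a function of $\bX^{(i)}$ and independent noise (so conditionally independent of the other designs) rather than on mutual independence of the $\bX^{(j)}$'s per se, but your conclusion is correct either way.
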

\begin{proof}
We first decompose the regret $R(T)$ as $R(T) = R_1 + R_2$, where:
\begin{align}
    R_1 &= \sum_{t=1}^{T_0} \Ep \left[ \left( f_*^{(i^\ast(t))}( \Xarmt{i^\ast(t)}) - f_\ast^{(\Ich)}( \Xt)\right) \right], \\
    R_2 &= \sum_{t=T_0 + 1}^{T} \Ep \left[ \left( f_*^{(i^\ast(t))}( \Xarmt{i^\ast(t)}) - f_\ast^{(\Ich)}( \Xt)\right) \right].
\end{align}
The quantities $R_1$ and $R_2$ represent the regret incurred from the exploration and exploitation phase in our EtC algorithm, respectively.
Since $\|f_\ast^{(i)}\|_{\infty} \leq \|f_\ast^{(i)}\|_{\mH} \leq B$ holds\footnote{This is a consequence of Schwarz's inequality and the reproducing property of RKHS. 
Actually, $|f_\ast^{(i)}(x)| = |\langle f_\ast^{(i)}, K(x, \cdot)\rangle_{\mH}| \leq \|f_\ast^{(i)}\|_{\mH} \|K(x, \cdot)\|_{\mH} = \|f_\ast^{(i)}\|_{\mH} K(x, x)^{1/2}$.}, 
we have $R_1 \leq 2BT_0$. 
The remaining interest is the upper bound of $R_2$.
For any $t \in [T] \setminus [T_0]$, we have
\begin{align}
    &f_\ast^{(i^\ast(t))}( \Xarmt{i^\ast(t)}) - f_\ast^{(\Ich)}( \Xt) \\
    \begin{split}
    &= f_\ast^{(i^\ast(t))}( \Xarmt{i^\ast(t)}) - \hat{f}^{(i^\ast(t))}( \Xarmt{i^\ast(t)})\\
    &\qquad+ \hat{f}^{(i^\ast(t))}( \Xarmt{i^\ast(t)}) - \hat{f}^{(\Ich)}( \Xt) \\
    &\qquad + \hat{f}^{(\Ich)}( \Xt) - f_\ast^{(\Ich)}( \Xt)
    \end{split} \\
    &\leq 2 \max_{i \in [K]} | f_\ast^{(i)}( \Xarmt{i}) - \hat{f}^{(i)}( \Xarmt{i}) |,
\end{align}
where the last inequality follows from $\hat{f}^{(i^\ast(t))}( \Xarmt{i^\ast(t)}) - \hat{f}^{(\Ich)}( \Xt) < 0$ due to the arm selection strategy of our EtC algorithm.
Here, from the tower property of the conditional expectation, we have
\begin{align}
    &\Ep_{\bY^{(i)},\Xarmt{i}}[| f_\ast^{(i)}( \Xarmt{i}) - \hat{f}^{(i)}( \Xarmt{i}) | |\bX^{(i)}] \\
    &= \Ep_{\bY^{(i)}}[\Ep_{\Xarmt{i}}[| f_\ast^{(i)}( \Xarmt{i}) - \hat{f}^{(i)}( \Xarmt{i}) | |\bX^{(i)}, \bY^{(i)}] |\bX^{(i)}] \\
    &= \Ep_{\bY^{(i)}}[\Ep_{\Xarmt{i}}[| f_\ast^{(i)}( \Xarmt{i}) - \hat{f}^{(i)}( \Xarmt{i}) |] |\bX^{(i)}] \\
    &\leq \Ep_{\bY^{(i)}}[\Ep_{\Xarmt{i}}[| f_\ast^{(i)}( \Xarmt{i}) - \hat{f}^{(i)}( \Xarmt{i}) |^2] |\bX^{(i)}]^{1/2} \\
    \label{eq:tp_cond_upper}
    &= \Ep_{\bY^{(i)}}[\|f_\ast^{(i)} - \hat{f}^{(i)}\|_{L_i^2}^2 |\bX^{(i)}]^{1/2}
    ,
\end{align}
where the third line follows from the independence between $(\bX^{(i)}, \bY^{(i)})$ and $X^{(i)}(t)$, and the fourth line follows from Jensen's inequality. By using Eq.~\eqref{eq:tp_cond_upper}, we have
\begin{align}
    &\Ep[\mone\{\mA\} | f_\ast^{(i)}( \Xarmt{i}) - \hat{f}^{(i)}( \Xarmt{i}) |] \\
    &= \Ep[[\mone\{\mA\}\Ep_{\bY^{(i)},\Xarmt{i}}[| f_\ast^{(i)}( \Xarmt{i}) - \hat{f}^{(i)}( \Xarmt{i}) | |\bX^{(i)}]] \\
    &\leq \Ep[\mone\{\mA\} \Ep_{\bY^{(i)}}[\|f_\ast^{(i)} - \hat{f}^{(i)}\|_{L_i^2}^2 |\bX^{(i)}]^{1/2} ].
\end{align}
Therefore, the following upper bound of $R_2$ holds:
\begin{align}
    R_2 
    &= \sum_{t=T_0 + 1}^T \Bigl\{ \Ep[\mone\{\mA^c\} (f_\ast^{(i^\ast(t))}( \Xarmt{i^\ast(t)}) - f_\ast^{(\Ich)}( \Xt))]  \\
    &+ \Ep[\mone\{\mA\} (f_\ast^{(i^\ast(t))}( \Xarmt{i^\ast(t)}) - f_\ast^{(\Ich)}( \Xt))] \Bigr\} \\
    &\leq 2B T \Pr(\mA^c) + 2 \sum_{t=T_0 + 1}^T \Ep[ \mone\{\mA\} \max_{i \in [K]} | f_\ast^{(i)}( \Xarmt{i}) - \hat{f}^{(i)}( \Xarmt{i}) |] \\
    &\leq 2BT \Pr(\mA^c) + 2 \sum_{t=T_0 + 1}^T \sum_{i \in [K]} \Ep[ \mone\{\mA\} | f_\ast^{(i)}( \Xarmt{i}) - \hat{f}^{(i)}( \Xarmt{i}) |] \\
    &\leq 2BT \Pr(\mA^c) + 2 \sum_{t=T_0 + 1}^T \sum_{i \in [K]} \Ep[\mone\{\mA\} \Ep_{\bY^{(i)}}[\|f_\ast^{(i)} - \hat{f}^{(i)}\|_{L_i^2}^2 |\bX^{(i)}]^{1/2} ] \\
    &\leq 2BT \Pr(\mA^c) + 2 T \sum_{i \in [K]} \Ep[\mone\{\mA\} \Ep_{\bY^{(i)}}[\|f_\ast^{(i)} - \hat{f}^{(i)}\|_{L_i^2}^2 |\bX^{(i)}]^{1/2} ].
\end{align}
Combining the upper bounds of $R_1$ and $R_2$, we obtain
the desired result. 
Finally, it is easy to confirm the lenient regret upper bound 
by noting that the following inequality holds for $t \in [T]\setminus [T_0]$:
\begin{align}
    &\Ep \left[ f_*^{(i^\ast(t))}( \Xarmt{i^\ast(t)})\right] - \Ep \left[ f_\ast^{(\Ich)}(\Xt)\right] \\
    &\leq 2B\Pr(\mA^c) + 2 \sum_{i \in [K]} \Ep[\mone\{\mA\} \Ep_{\bY^{(i)}}[\|f_\ast^{(i)} - \hat{f}^{(i)}\|_{L_i^2}^2 |\bX^{(i)}]^{1/2} ],
\end{align}
and if RHS $\le \Delta$, then the lenient regret during the exploitation period is zero.
\end{proof}

\subsection{Proof of \thmref{thm:etc_reg_simple}}

This section derives the regret bound of EtC.

\begin{proof}
    Let $V_{d,N}$, $B_{d,N}$, and $\Phi_{d,N}$ be the followings:
    \begin{align}
        V_{d,N} &= 32\sigma^2 \varepsilon \frac{d}{N} + O\rbr{\frac{1}{d\varepsilon^2}}, \\
        B_{d,N} &= B^2 \rbr{\frac{3 h_{\mathrm{max}}'' \varepsilon^2}{4} + h'(0) \varepsilon} + O\rbr{\sqrt{\frac{\ln (1/\bar{\delta})}{N}}},  \\
        \Phi_{d,N} &= O\rbr{\frac{\ln^{4.1} d}{d (\gamma_d^{(i)})^2} + \sqrt{\frac{\ln({N/\delta_d})}{N}}}.
    \end{align}
    Note that $\gamma_d^{(i)} = \Theta(\varepsilon^2) = \Theta(T^{-2\tau_2})$ from the definition of $\Sigma_d^{(i)}$. 
    Here, we set $\delta_d$ as $\delta_d = 16 d^{-1} (\gamma_d^{(i)})^{-2} = \Theta(T^{4\tau_2 - \tau_1})$. Then, from the condition $\tau_2 \in (0, \tau_1/4)$, we can confirm $\delta_d = o(1)$.
    From \thmref{thm:inner-product}, \lemref{lem:low_rank} and $N = \Theta(d)$, 
    the error bound $\Ep_{\bY^{(i)}}[\|f_\ast^{(i)} - \hat{f}^{(i)}\|_{L_i^2}^2 |\bX^{(i)}] \leq V_{d,N} + B_{d,N} + \Phi_{d,N}$ holds for any $i \in [K]$ with probability at least $1 - 4\delta_d K - d^{-2} K$. Furthermore, noting $\varepsilon = \Theta(T^{-\tau_2})$, $d = \Theta(T^{\tau_1})$, and $N = \Theta(T^{\tau_1})$, we have
    \begin{align}
        V_{d,N} &= O\rbr{\varepsilon + \frac{1}{d\varepsilon^2}} = O\rbr{T^{-\tau_2} + T^{2\tau_2 - \tau_1}}, \\
        B_{d,N} &= \tilde{O}\rbr{\varepsilon^2 + \varepsilon + \frac{1}{\sqrt{N}}} = \tilde{O}\rbr{T^{-\tau_2} + T^{-\frac{\tau_1}{2}}}, \\
        \Phi_{d,N} &= \tilde{O}\rbr{\frac{1}{d \varepsilon^4} + \frac{1}{\sqrt{N}}} = \tilde{O}\rbr{T^{4\tau_2 - \tau_1} + T^{-\frac{\tau_1}{2}}}.
    \end{align}
    Therefore, we have
    \begin{align}
        V_{d,N} + B_{d,N} + \Phi_{d,N} &= \tilde{O}\rbr{T^{-\tau_2} + T^{-\frac{\tau_1}{2}} + T^{4\tau_2-\tau_1} + T^{2\tau_2 - \tau_1}} \\
        &= \tilde{O}\rbr{T^{-\min\cbr{\frac{\tau_1}{2}, \tau_2, \tau_1 - 4\tau_2, \tau_1 - 2\tau_2}}} \\
        &= \tilde{O}\rbr{T^{-\overline{\tau}}}.
    \end{align}
    The last equation follows from $\min\cbr{\frac{\tau_1}{2}, \tau_2, \tau_1 - 4\tau_2, \tau_1 - 2\tau_2} = \min\cbr{\frac{\tau_1}{2}, \tau_2, \tau_1 - 4\tau_2} = \overline{\tau}$ since $\tau_1 / 2 < \tau_1 - 2 \tau_2$ holds for all $\tau_2 \in (0, \tau_1/4)$.
    By applying \lemref{lem:reg_upper} with the event $\mA = \{ \forall i \in [K], \Ep_{\bY^{(i)}}[\|f_\ast^{(i)} - \hat{f}^{(i)}\|_{L_i^2}^2 |\bX^{(i)}] \leq V_{d,N} + B_{d,N} + \Phi_{d,N}\}$, 
    we have
    \begin{align}
        R(T) 
        &\leq 2BNK + 2BT (4\delta_d K + d^{-2} K) + 2TK(V_{d,N} + B_{d,N} + \Phi_{d,N})^{1/2} \\
        &= \tilde{O}\rbr{T^{\tau_1} + T^{1 + 4\tau_2 - \tau_1} + T^{1 - 2\tau_1} + T^{1-\frac{1}{2} \overline{\tau}}} \\
        &= \tilde{O}\rbr{T^{\max\cbr{\tau_1, 1 - (\tau_1 - 4\tau_2), 1 - \frac{\overline{\tau}}{2}}}} \\
        &= \tilde{O}\rbr{T^{\max\cbr{\tau_1, 1 - \frac{\overline{\tau}}{2}}}},
    \end{align} 
    where the second line use $T^{1 - 2\tau_1} = O\rbr{T^{1-\frac{\tau_2}{2}}} = O\rbr{T^{1-\frac{\overline{\tau}}{2}}}$ from the definition of $\overline{\tau}$.
\end{proof}

\subsection{Proof of \thmref{thm:etc_reg_simple_rbf}}

\begin{proof}
    Let $V_{d,N}$, $B_{d,N}$, and $\Phi_{d,N}$ be the followings:
    \begin{align}
        V_{d,N} &= 32\sigma^2 \varepsilon \frac{d}{N} + O\rbr{\frac{1}{d\varepsilon}}, \\
        B_{d,N} &= -5 B^2 \varepsilon h_{\mathrm{min}}' + O\rbr{\sqrt{\frac{\ln (1/\bar{\delta})}{N}}},  \\
        \Phi_{d,N} &= O\rbr{\frac{\ln^{4.1} d}{d (\gamma_d^{(i)})^2} + \sqrt{\frac{\ln({N/\delta_d})}{N}}}.
    \end{align}
    Note that $\gamma_d^{(i)} = \Theta(\varepsilon) = \Theta(T^{-\tau_2})$ from the definition of $\Sigma_d^{(i)}$. 
    Here, we set $\delta_d$ as $\delta_d = 16 d^{-1} (\gamma_d^{(i)})^{-2} = \Theta(T^{2\tau_2 - \tau_1})$. Then, from the condition $\tau_2 \in (0, \tau_1/2)$, we can confirm $\delta_d = o(1)$.
    From \thmref{lem:gen_err}, \lemref{lem:low_rank_rbf} and $N = \Theta(d)$, 
    the error bound $\Ep_{\bY^{(i)}}[\|f_\ast^{(i)} - \hat{f}^{(i)}\|_{L_i^2}^2 |\bX^{(i)}] \leq V_{d,N} + B_{d,N} + \Phi_{d,N}$ holds for any $i \in [K]$ with probability at least $1 - 4\delta_d K - d^{-2} K - d^{-c} K$. Furthermore, noting $\varepsilon = \Theta(T^{-\tau_2})$, $d = \Theta(T^{\tau_1})$, and $N = \Theta(T^{\tau_1})$, we have
    \begin{align}
        V_{d,N} &= O\rbr{\varepsilon + \frac{1}{d\varepsilon}} = O\rbr{T^{-\tau_2} + T^{\tau_2 - \tau_1}}, \\
        B_{d,N} &= \tilde{O}\rbr{\varepsilon + \frac{1}{\sqrt{N}}} = \tilde{O}\rbr{T^{-\tau_2} + T^{-\frac{\tau_1}{2}}}, \\
        \Phi_{d,N} &= \tilde{O}\rbr{\frac{1}{d \varepsilon^2} + \frac{1}{\sqrt{N}}} = \tilde{O}\rbr{T^{2\tau_2 - \tau_1} + T^{-\frac{\tau_1}{2}}}.
    \end{align}
    Therefore, we have
    \begin{equation}
        V_{d,N} + B_{d,N} + \Phi_{d,N} = \tilde{O}\rbr{T^{-\tau_2} + T^{-\frac{\tau_1}{2}} + T^{2\tau_2-\tau_1} + T^{\tau_2 - \tau_1}} = \tilde{O}\rbr{T^{-\min\cbr{\frac{\tau_1}{2}, \tau_2, \tau_1 - 2\tau_2}}} = \tilde{O}\rbr{T^{-\overline{\tau}}}.
    \end{equation}
    By applying \lemref{lem:reg_upper} with the event $\mA = \{ \forall i \in [K], \Ep_{\bY^{(i)}}[\|f_\ast^{(i)} - \hat{f}^{(i)}\|_{L_i^2}^2 |\bX^{(i)}] \leq V_{d,N} + B_{d,N} + \Phi_{d,N}\}$, 
    we have
    \begin{align}
        R(T) 
        &\leq 2BNK + 2BT (4\delta_d K + d^{-2} K + d^{-c} K) + 2TK(V_{d,N} + B_{d,N} + \Phi_{d,N})^{1/2} \\
        &= \tilde{O}\rbr{T^{\tau_1} + T^{1 + 2\tau_2 - \tau_1} + T^{1 - 2\tau_1} + T^{1 - c\tau_1}+ T^{1-\frac{1}{2} \overline{\tau}}} \\
        &= \tilde{O}\rbr{T^{\max\cbr{\tau_1, 1 - (\tau_1 - 2\tau_2), 1-2\tau_1, 1- c\tau_1, 1 - \frac{\overline{\tau}}{2}}}} \\
        &= \tilde{O}\rbr{T^{\max\cbr{\tau_1, 1-2\tau_1, 1- c\tau_1, 1 - \frac{\overline{\tau}}{2}}}}.
    \end{align} 
\end{proof}

\subsection{Proof of \thmref{thm:etc_lr_Td}}

This section derives the lenient regret bound of EtC based on the condition for $\varepsilon$ and the choice of $T_0$ shown in Table \ref{tab:cond_eps_T0}.

\begin{table}
    \centering
    \begin{tabular}{c|ccc}
        Case & $\mathrm{I}$ & $\mathrm{II}$ & $\mathrm{III }$\\ \hline\hline
        $\varepsilon$ &$\frac{3 B^2 h_{\mathrm{max}}''}{4}\varepsilon^2 + B^2 \beta \varepsilon < \frac{\Delta}{4K}$ &  $\frac{3 B^2 h_{\mathrm{max}}''}{2}\varepsilon^2 + B^2 \beta \varepsilon < \frac{\Delta}{4K}$ &  $\frac{3 B^2 h_{\mathrm{max}}''}{2}\varepsilon^2 + B^2 \beta \varepsilon < \frac{\Delta}{4K}$\\
        $T_0$ & $T_0 = \left\lceil \frac{256 \sigma^2 K^2}{\Delta}  \varepsilon \right\rceil d$ & $T_0 = \left\lceil \frac{256 \sigma^2 K^2}{\Delta}  \varepsilon \right\rceil d$ & $T_0 = \left \lfloor \frac{\varepsilon^2 h_{\mathrm{min}}'' \Delta}{64\sigma^2 \beta}  \right\rfloor d$
    \end{tabular}
    \caption{Condition of Theorem \ref{thm:etc_lr_Td} on $\varepsilon$ and $T_0$.\label{tab:cond_eps_T0}}
\end{table}

\begin{proof}
It is enough to show that Eq.~\eqref{eq:lr_scond} holds 
with the event $\mA = \{\forall i \in [K], \Ep [\|\hat{f}^{(i)} - f_\ast^{(i)}\|_{L_i^2}^2 \mid \bX^{(i)}] \leq V_{d,N}^{(i)} + B_{d,N}^{(i)} + \Phi_{d,N}^{(i)}\}$, where $V_{d,N}^{(i)}$ and $ B_{d,N}^{(i)}$ are the upper bound of $\bar{V}_{d,N}^{(i)}$ and $\bar{B}_{d,N}^{(i)}$ described in Lemma~\ref{lem:low_rank}--\ref{lem:N_ll_d}.

Now, in each example, we can find the following from the setting of $N$ and the condition of $\varepsilon$:
\begin{itemize}
    \item In Case I (\lemref{lem:low_rank}), $V_{d,N}^{(i)} + B_{d,N}^{(i)} + \Phi_{d,N}^{(i)} = 32\varepsilon \sigma^2 \frac{d}{N} + \frac{3 B^2 h_{\mathrm{max}''}}{4}\varepsilon^2 + B^2 h'(0) \varepsilon + o(1) \leq \frac{\Delta}{8K} + \frac{\Delta}{4K} + o(1)$.
    \item In Case II (\lemref{lem:approx_low_rank}), $V_{d,N}^{(i)} + B_{d,N}^{(i)} + \Phi_{d,N}^{(i)} = 32\sigma^2 \varepsilon^{-1} \frac{d}{N} + \frac{3 B^2 h_{\mathrm{max}''}}{2}\varepsilon^2 + B^2 h'(0) \varepsilon + o(1) \leq \frac{\Delta}{8K} + \frac{\Delta}{4K} + o(1)$.
    \item In Case III (\lemref{lem:N_ll_d}), $V_{d,N}^{(i)} + B_{d,N}^{(i)} + \Phi_{d,N}^{(i)} = 8\sigma^2 \frac{h'(0) }{\varepsilon^2 h_{\mathrm{min}}''} \rbr{\frac{d}{N}}^{-1} + \frac{3 B^2 h_{\mathrm{max}''}}{4}\varepsilon^2 + B^2 h'(0) \varepsilon + o(1) \leq \frac{\Delta}{8K} + \frac{\Delta}{4K} + o(1)$.
\end{itemize}

Furthermore, by setting any $\delta_d$ such that the condition of \thmref{thm:liang_rakhlin} holds (e.g., $\delta_d = \Theta(d^{-1})$), then, 
we obtain $\Pr(\mA^c) = o(1)$. 
Therefore, for any $t \in [T]\setminus [T_0]$, we have
\begin{align}
    &2B\Pr(\mA^c) + 2 \sum_{i \in [K]} \Ep[\mone\{\mA\} \Ep_{\bY^{(i)}}[\|f_\ast^{(i)} - \hat{f}^{(i)}\|_{L_i^2}^2 |\bX^{(i)}]^{1/2} ] \\
    &\leq \frac{3}{4}\Delta + o(1).
\end{align}

Thus, for sufficiently large $T$, the condition \eqref{eq:lr_scond} holds in each example.

\end{proof}

\subsection{Lenient regret for RBF class}
\label{subsec:lr_rbf}

\begin{theorem}[Lenient regret bound for RBF class]
    \label{thm:etc_lr_Td_rbf}
    Fix any $\Delta > 0$. Suppose that Assumption~\ref{asmp:basic} holds and $\varepsilon > 0$ is constant.
    Furthermore, 
    assume that $d = \Theta(T^{\tau})$ holds for some $\tau \in (0, 1)$. Then, the lenient regret for each of the three examples in Example \ref{ex:liang_sec4} is $R_{\Delta}(T) = O(T^\tau)$ if $\varepsilon$ is small enough such that $-5B^2 h_{\mathrm{min}}' \varepsilon < \frac{\Delta}{4K}$ with $h_{\mathrm{min}}' = \min_{a \in [0, 2]} h'(a) < 0$.
    \begin{itemize}
        \item For Case I, we adopt EtC with $T_0 = \left\lceil \frac{256 \sigma^2 K^2}{\Delta}  \varepsilon \right\rceil d$.
        \item For Case II, we adopt EtC with $T_0 = \left\lceil \frac{256 \sigma^2 K^2}{\Delta \varepsilon} \right\rceil d$.
        \item For Case III, we adopt EtC with $T_0 = \left \lfloor \frac{\Delta \varepsilon \underline{c}}{32 h_{\mathrm{min}}' \sigma^2}  \right\rfloor d$.
    \end{itemize}
\end{theorem}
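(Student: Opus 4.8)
The plan is to run the argument exactly parallel to the proof of Theorem~\ref{thm:etc_lr_Td}, with the RBF generalization‑error bound (Theorem~\ref{lem:gen_err}) in place of Theorem~\ref{thm:liang_rakhlin} and the RBF detailed examples (Lemmas~\ref{lem:low_rank_rbf}, \ref{lem:approx_low_rank_rbf}, \ref{lem:N_ll_d_rbf}) in place of Lemmas~\ref{lem:low_rank}--\ref{lem:N_ll_d}. Concretely I would verify the sufficient condition \eqref{eq:lr_scond} of Lemma~\ref{lem:reg_upper}; once it holds, the exploitation‑phase lenient regret is zero and the whole lenient regret is bounded by $2BNK = 2BT_0$, which is $O(d)=O(T^\tau)$ because in each of the three cases the prescribed $T_0$ is a constant (depending on $\Delta,K,\sigma,\varepsilon$ but not on $d,T$) times $d$, so $N=T_0/K=\Theta(d)=\Theta(T^\tau)$.

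For each case I would set $N=T_0/K$ as in the statement and introduce the event $\mA = \{\forall i\in[K]:\ \Ep[\|\hat f^{(i)}-f_\ast^{(i)}\|_{L_i^2}^2\mid\bX^{(i)}]\le V_{d,N}^{(i)}+B_{d,N}^{(i)}+\Phi_{d,N}^{(i)}\}$, where $V_{d,N}^{(i)},B_{d,N}^{(i)}$ are the deterministic upper bounds on $\bar V_{d,N}^{(i)},\bar B_{d,N}^{(i)}$ from the relevant RBF detailed lemma and $\Phi_{d,N}^{(i)}=O(\ln^{4.1}d/(d(\gamma_d^{(i)})^2)+\sqrt{\ln(N/\delta_d)/N})$. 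Since here $\varepsilon$ is a fixed positive constant, $\gamma_d^{(i)}=\Theta(\varepsilon)=\Theta(1)$ (and $\gamma_d^{(i)}\ge\underline c\,\mathrm{Tr}(\Sigma_d^{(i)})/d$ for the RBF class, from the Taylor identity $\gamma_d^{(i)}=-\tau_d^{(i)}h'(c\tau_d^{(i)})$ used in the proof of Lemma~\ref{lem:low_rank_rbf}), hence $\Phi_{d,N}^{(i)}=o(1)$. The RBF detailed lemmas give $B_{d,N}^{(i)}\le -5B^2 h_{\mathrm{min}}'\varepsilon+o(1)<\Delta/(4K)+o(1)$ by the hypothesis on $\varepsilon$, and a variance bound that the specific choice of $T_0$ is tailored to produce: in Case~I, $V_{d,N}^{(i)}\le 32\sigma^2\varepsilon(d/N)+O(1/(d\varepsilon))$ with $d/N\le\Delta/(256\sigma^2K\varepsilon)$; in Case~II, $V_{d,N}^{(i)}\le 32\sigma^2\varepsilon^{-1}(d/N)+O(1/(d\varepsilon))$ with $d/N\le\Delta\varepsilon/(256\sigma^2K)$; and in Case~III, where $d>N$, $V_{d,N}^{(i)}\le O(1/(d\varepsilon))-(4h_{\mathrm{min}}'\sigma^2/(\varepsilon\underline c))(N/d)$ with $N/d$ a small positive constant. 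In each case the $O(1/(d\varepsilon))$ remainder and the $O(\sqrt{\ln(1/\bar\delta)/N})$ part of the bias are $o(1)$, so on $\mA$ we get $V_{d,N}^{(i)}+B_{d,N}^{(i)}+\Phi_{d,N}^{(i)}\le \tfrac{3\Delta}{8K}+o(1)$ uniformly in $i$.

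Next I would bound $\Pr(\mA^c)$. By Theorem~\ref{lem:gen_err} and the RBF detailed lemma, for fixed $i$ the error bound fails with probability at most $2\delta_d+O(d^{-2})+\eta_d+2\bar\delta$; I would fix $\delta_d$ to meet the hypothesis of Theorem~\ref{lem:gen_err}, i.e.\ $\delta_d=\omega(d^{-1}(\gamma_d^{(i)})^{-2})$ and $\delta_d=o(1)$ — since $\gamma_d^{(i)}=\Theta(1)$ this only needs $\delta_d=\omega(d^{-1})$, e.g.\ $\delta_d=d^{-1}\ln d$ — take $\bar\delta=d^{-1}$, and note $\eta_d=o(1)$ by the context‑regularity condition inherited from Theorem~\ref{thm:etc_reg_simple_rbf} (which controls $\|\psi\|$). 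A union bound over the $K=O(1)$ arms gives $\Pr(\mA^c)=o(1)$, so $2B\Pr(\mA^c)=o(1)$. The left‑hand side of \eqref{eq:lr_scond} is then at most $o(1)+2K(\tfrac{3\Delta}{8K}+o(1))^{1/2}$; since \eqref{eq:lr_scond} carries a square root, closing the inequality requires $\varepsilon$ to be taken small enough as a function of $\Delta$ and $K$ so that the displayed bias and variance constants are correspondingly small — this is precisely the role of ``$\varepsilon$ sufficiently small as a function of $\Delta$'' — after which the bound is $<\Delta$ for all large $T$ and \eqref{eq:lr_scond} holds, so Lemma~\ref{lem:reg_upper} yields $R_\Delta(T)\le 2BNK=O(T_0)=O(d)=O(T^\tau)$.

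I expect Case~III to be the main obstacle. There $d>N$, so the effective variance $\bar V_{d,N}^{(i)}$ grows in $N/d$ rather than in $d/N$, which forces $T_0$ to be a small positive constant times $d$; this must be reconciled both with $T_0=NK$ (so the constant lies below $1/K$) and with the requirement $\gamma_d^{(i)}\ge\underline c\,\mathrm{Tr}(\Sigma_d^{(i)})/d$ needed to apply Lemma~\ref{lem:N_ll_d_rbf}. A secondary point, absent from the no‑regret theorems where $\varepsilon\to 0$, is that here $\varepsilon$ is fixed, so one must check explicitly that $\gamma_d^{(i)}$ stays bounded away from $0$ (to kill $\Phi_{d,N}^{(i)}$) and that the extra $\eta_d$‑type failure probability coming from the fluctuation of $\|X_j^{(i)}\|^2/d$ about its mean is $o(1)$; both follow from the RBF‑class structure and the context‑regularity assumption, but they are where the RBF argument genuinely departs from the inner‑product one.
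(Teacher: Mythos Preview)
Your proposal is correct and follows essentially the same route as the paper: define the event $\mA$, invoke Theorem~\ref{lem:gen_err} together with the RBF detailed lemmas (Lemmas~\ref{lem:low_rank_rbf}--\ref{lem:N_ll_d_rbf}) to bound $V_{d,N}^{(i)}+B_{d,N}^{(i)}+\Phi_{d,N}^{(i)}$ by $\tfrac{\Delta}{8K}+\tfrac{\Delta}{4K}+o(1)$ in each case, show $\Pr(\mA^c)=o(1)$, and then conclude via condition~\eqref{eq:lr_scond} of Lemma~\ref{lem:reg_upper}. If anything you are more careful than the paper in two places: you explicitly note the square root in~\eqref{eq:lr_scond} (the paper's final display $\le \tfrac{3}{4}\Delta+o(1)$ silently drops it), and you pick $\delta_d=d^{-1}\ln d$ to genuinely satisfy $\delta_d=\omega(d^{-1})$ rather than the paper's borderline $\delta_d=\Theta(d^{-1})$; neither changes the structure of the argument.
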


\begin{proof}
It is enough to show that Eq.~\eqref{eq:lr_scond} holds 
with the event $\mA = \{\forall i \in [K], \Ep [\|\hat{f}^{(i)} - f_\ast^{(i)}\|_{L_i^2}^2 \mid \bX^{(i)}] \leq V_{d,N}^{(i)} + B_{d,N}^{(i)} + \Phi_{d,N}^{(i)}\}$, where $V_{d,N}^{(i)}$ and $ B_{d,N}^{(i)}$ are the upper bound of $\bar{V}_{d,N}^{(i)}$ and $\bar{B}_{d,N}^{(i)}$ described in Lemma~\ref{lem:low_rank_rbf}--\ref{lem:N_ll_d_rbf}.

Now, in each example, we can find the following from the setting of $N$ and the condition of $\varepsilon$:
\begin{itemize}
    \item In Case I (\lemref{lem:low_rank_rbf}), $V_{d,N}^{(i)} + B_{d,N}^{(i)} + \Phi_{d,N}^{(i)} = 32\varepsilon \sigma^2 \frac{d}{N} -5 B^2 \varepsilon h_{\mathrm{min}}' + o(1) \leq \frac{\Delta}{8K} + \frac{\Delta}{4K} + o(1)$.
    \item In Case II (\lemref{lem:approx_low_rank_rbf}), $V_{d,N}^{(i)} + B_{d,N}^{(i)} + \Phi_{d,N}^{(i)} = 32\sigma^2 \varepsilon^{-1} \frac{d}{N} -5 B^2 \varepsilon h_{\mathrm{min}}' + o(1) \leq \frac{\Delta}{8K} + \frac{\Delta}{4K} + o(1)$.
    \item In Case III (\lemref{lem:N_ll_d_rbf}), $V_{d,N}^{(i)} + B_{d,N}^{(i)} + \Phi_{d,N}^{(i)} = - \frac{4 h_{\mathrm{min}}' \sigma^2}{ \varepsilon \underline{c}} \rbr{\frac{d}{N}}^{-1} -5 B^2 \varepsilon h_{\mathrm{min}}' + o(1) \leq \frac{\Delta}{8K} + \frac{\Delta}{4K} + o(1)$.
\end{itemize}

Furthermore, by setting any $\delta_d$ such that the condition of \thmref{lem:gen_err} holds (e.g., $\delta_d = \Theta(d^{-1})$), then, 
we obtain $\Pr(\mA^c) = o(1)$. 
Therefore, for any $t \in [T]\setminus [T_0]$, we have
\begin{align}
    &2B\Pr(\mA^c) + 2 \sum_{i \in [K]} \Ep[\mone\{\mA\} \Ep_{\bY^{(i)}}[\|f_\ast^{(i)} - \hat{f}^{(i)}\|_{L_i^2}^2 |\bX^{(i)}]^{1/2} ] \\
    &\leq \frac{3}{4}\Delta + o(1).
\end{align}

Thus, for sufficiently large $T$, the condition \eqref{eq:lr_scond} holds in each example.

\end{proof}

\subsection{Proofs for Theorem \ref{lem:gen_err}}

We obtain the following statement:
\begin{proposition} \label{prop:kernel_linear_approx}
    Suppose that there exists $\eta_d \searrow 0$ such that $\|\psi\|_{\infty} \leq \eta_d$ holds. Then, for sufficiently small $\delta  > 0$ and large $d \in \N$, we have the following with probability at least $1 - \delta - O(d^{-2}) - \eta_d$:
    \begin{align}
        \|K(\bfXi,\bfXi) - K^{\mathrm{lin}}(\bfXi,\bfXi)\|_{\mathrm{op}} \leq d^{-1/2} (\delta^{-1/2} +  \log^{-0.51} d).
    \end{align}
\end{proposition}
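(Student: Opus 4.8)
The plan is to linearise the kernel Gram matrix by a third-order Taylor expansion of the radial profile $h$, carried out entrywise around the deterministic scalar $\tau_d^{(i)} = 2\,\mathrm{Tr}(\Sigma_d^{(i)})/d$, and then to recognise $K^{\mathrm{lin}}(\bfXi,\bfXi)$ in \eqref{eq:kernel_lin_rbf} as the collection of the zeroth-, first- and second-order terms of that expansion. This is the kernel-linearisation argument of \citet{el2010spectrum}, reused in \citet{liang2020just}; the only substitution is that the Gaussian tail estimates used there are replaced throughout by Bernstein/Hanson--Wright bounds, which are available because Assumption~\ref{asmp:basic}(iii) makes the whitened coordinates of $Z^{(i)}$ bounded.

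Concretely, I would first set $a_j := \|X_j^{(i)}\|^2/d$ and $c_{jj'} := \langle X_j^{(i)},X_{j'}^{(i)}\rangle/d$, so that for $j\neq j'$ the argument of $h$ is $\tau_d^{(i)}+u_{jj'}$, where $u_{jj'} := \psi_j+\psi_{j'}-2c_{jj'}$ and $\psi_j = a_j - \tau_d^{(i)}/2$ as in Definition~\ref{def:parameters}, while on the diagonal the argument is $0$. With $M_3 := \sup_{a\in[0,2]}|h'''(a)|<\infty$ (finite since $h\in C^3$ on the compact interval $[0,2]$), Taylor's theorem gives $h(\tau_d^{(i)}+u_{jj'}) = h(\tau_d^{(i)}) + h'(\tau_d^{(i)})u_{jj'} + \tfrac12 h''(\tau_d^{(i)})u_{jj'}^2 + R_{jj'}$ with $|R_{jj'}|\le\tfrac{M_3}{6}|u_{jj'}|^3$. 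Expanding $u_{jj'}$ and $u_{jj'}^2$ and regrouping: the constant term gives $h(\tau_d^{(i)})$, the pieces linear in the $\psi$'s assemble $\rho_d^{(i)}1_d^\top+1_d(\rho_d^{(i)})^\top$, the piece linear in $c_{jj'}$ assembles $\beta_d^{(i)}\bfXi(\bfXi)^\top/d$, the piece $\psi_j\psi_{j'}$ is accounted for by $\zeta_d^{(i)}$ up to a rank-$\le 2$ slack proportional to $\psi\psi^\top$, and $\E[c_{jj'}^2]=\mathrm{Tr}((\Sigma_d^{(i)})^2)/d^2$ is exactly the correction hidden inside $\alpha_d^{(i)}$. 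What survives is an off-diagonal error composed of (a) the rank-$\le 2$ $\psi\psi^\top$ slack, (b) the cross term $-2h''(\tau_d^{(i)})(\psi_j+\psi_{j'})c_{jj'}$, (c) the centered term $2h''(\tau_d^{(i)})(c_{jj'}^2-\mathrm{Tr}((\Sigma_d^{(i)})^2)/d^2)$, and (d) the cubic remainder $R_{jj'}$; plus a diagonal error $h(0)-K^{\mathrm{lin}}_{jj}$, which is an explicit scalar of order $O(\mathrm{Tr}((\Sigma_d^{(i)})^2)/d^2+\psi_j^2)$.

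I would then restrict to the good event, of probability at least $1-\delta-O(d^{-2})-\eta_d$, on which: (i) $\|\psi\|\le\eta_d$, combining the hypothesis with Bernstein concentration of the bounded quadratic form $\|X_1^{(i)}\|^2=(Z^{(i)})^\top\Sigma_d^{(i)}Z^{(i)}$ about $\mathrm{Tr}(\Sigma_d^{(i)})$ and a union bound over the $N$ rows; (ii) $\max_{j\neq j'}|c_{jj'}|\le C\sqrt{\log d}/\sqrt d$, by Bernstein for bounded products and a union bound over the $\binom{N}{2}$ pairs (this supplies the $O(d^{-2})$); (iii) $\|\bfXi(\bfXi)^\top/d\|_{\mathrm{op}}=O(1)$ (hence $\mathrm{Tr}((\bfXi(\bfXi)^\top/d)^2)=O(d)$) and $\|\psi\psi^\top\|_{\mathrm{op}}=\|\psi\|_2^2\le\eta_d^2$, using $c_L\le d/N\le c_U$ and $\|\Sigma_d^{(i)}\|_{\mathrm{op}}\le1$; and (iv) the control of the centered Hadamard-square matrix $[c_{jj'}^2-\E c_{jj'}^2]_{j\neq j'}$ needed below, where a Markov bound on its Frobenius norm supplies the $\delta$ and the sharper estimate of \citet{el2010spectrum,liang2020just} supplies the $\log^{-0.51}d$. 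On this event the bookkeeping is mechanical: the diagonal error has operator norm equal to its largest entry, $O(d^{-1}+\eta_d^2)$; the rank-$\le 2$ slack has operator norm $O(\|\psi\|_2^2)=O(\eta_d^2)$; the cross term equals $-2h''(\tau_d^{(i)})(\mathrm{diag}(\psi)M+M\mathrm{diag}(\psi))$ with $M$ the off-diagonal part of $\bfXi(\bfXi)^\top/d$, hence has operator norm $\le 4|h''(\tau_d^{(i)})|\,\|\psi\|_\infty\|M\|_{\mathrm{op}}=O(\eta_d)$; the remainder matrix has operator norm at most its Frobenius norm, which is controlled using $\sum_{j\neq j'}c_{jj'}^2=O(d)$ and $\max|c_{jj'}|=O(\sqrt{\log d/d})$; and the centered Hadamard-square term is bounded by $d^{-1/2}(\delta^{-1/2}+\log^{-0.51}d)$ by (iv). Summing these contributions and using that $\eta_d\searrow0$ yields the stated bound.

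The genuinely delicate point is (c)/(iv): the matrix $[c_{jj'}^2-\E c_{jj'}^2]_{j\neq j'}$ is $N\times N$ with $N\asymp d$ and entries of size only $O(\log d/d)$, so a naive Frobenius bound yields operator norm $O(\log d)$ — useless. One must exploit that it is essentially the Hadamard square of the off-diagonal part of $\bfXi(\bfXi)^\top/d$, whose operator norm is $O(1)$, and invoke the concentration-of-measure argument of \citet{el2010spectrum} (reproduced in Appendix~B of \citet{liang2020just}) to bring the operator norm down to $d^{-1/2}(\delta^{-1/2}+\log^{-0.51}d)$; this is exactly where boundedness in Assumption~\ref{asmp:basic}(iii) substitutes for Gaussianity, and where $\|\psi\|\le\eta_d$ is needed to keep the $\psi$-dependent error pieces below the target rate. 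Everything else — the Taylor bookkeeping of the second paragraph and the elementary operator-norm estimates of the third — is routine once this estimate is granted; the remaining effort is to verify carefully that none of the polylogarithmic factors accumulated along the way exceeds what the statement permits.
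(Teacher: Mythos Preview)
Your plan is essentially the paper's own: both linearise the RBF kernel by Taylor expansion around $\tau_d^{(i)}$ and defer to the decomposition $K-K^{\mathrm{lin}}=E_1+E_2+E_3$ of \citet{el2010spectrum}, with $E_1$ the diagonal first-order error, $E_2$ the off-diagonal second-order error, and $E_3$ the cubic remainder. The paper simply cites the bounds $\|E_1\|_{\mathrm{op}}\le 4d^{-1/2}\log^{0.51}d$ and $\|E_3\|_{\mathrm{op}}=O(d^{-1/2}\log^2 d)$ from that source, and then handles $E_2$ via Markov on $\mathrm{tr}(E_2^4)$.

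Two points where your bookkeeping does not quite match the paper. First, the $d^{-1/2}\log^{0.51}d$ piece comes from the \emph{diagonal} term $E_1$, while the $d^{-1/2}\delta^{-1/2}$ piece comes from $E_2$; you have them both emanating from the centered Hadamard-square in (iv). More importantly, the $E_2$ control is not a Markov bound on the Frobenius norm---as you yourself note, Frobenius gives only $O(\log d)$---but rather Markov on the \emph{fourth trace moment}: $\Pr(\|E_2\|_{\mathrm{op}}\ge d^{-1/2}\delta^{-1/2})\le\Pr(\mathrm{tr}(E_2^4)\ge d^{-2}\delta^{-2})\le d^2\delta^2\,\Ep[\mathrm{tr}(E_2^4)]$, and El~Karoui's moment computation gives $\Ep[\mathrm{tr}(E_2^4)]\le Cd^{-2}+O(\eta_d)$.

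Second, and related: you separate out the $\psi$-dependent cross term (b) and bound it deterministically by $O(\eta_d)$ in operator norm, then say ``$\eta_d\searrow 0$ yields the stated bound''. But the statement places $\eta_d$ only in the \emph{failure probability}, not in the operator-norm bound, and nothing forces $\eta_d=O(d^{-1/2})$. The paper's route avoids this: it keeps the $\psi$-dependent pieces inside $E_2$, so that the $O(\eta_d)$ contribution to $\Ep[\mathrm{tr}(E_2^4)]$ becomes, after Markov, an $O(\eta_d)$ contribution to the failure probability rather than to the bound itself. Your decomposition is cleaner to read, but to recover the stated form you would need to reroute the cross term through the same trace-moment argument rather than bounding it pointwise.
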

\begin{proof}
    Similarly to the proof of Proposition A.2 in \cite{liang2020just}, we refer to the proof of \cite{el2010spectrum}.
    Specifically, from the proof of Theorem 2.2 in \cite{el2010spectrum}, the approximation error is decomposed into three terms:
    \begin{align}
        K(\bfXi,\bfXi) - K^{\mathrm{lin}}(\bfXi,\bfXi) = E_1 + E_2 + E_3,
    \end{align}
    where $E_1$ is a on-diagonal first-order error, $E_2$ is an off-diagonal second-order error, and $E_3$ is an off-diagonal third order error.
    
    About the first-order term $E_1$, since it is identical to Proposition A.2 in \cite{liang2020just}, we obtain the following with sufficiently large $d$ and probability at least $1 - O(d^{-2})$:
    \begin{align}
        \|E_1\|_{\mathrm{op}} \leq 4 d^{-1/2} \log^{0.51}d.
    \end{align}

About the third-order term $E_3$, the page 27 in \cite{el2010spectrum} states that
\begin{align}
    \|E_3\|_{\mathrm{op}} = O(d^{-1/2} \log^2 d)
\end{align}
holds with probability at least $1 - O(d^{-2})$.

About the second-order term $E_2$, the proof of Proposition A.2 of \cite{liang2020just} and the proof of Theorem 2.2 of \cite{el2010spectrum} imply that
\begin{align}
    \Pr(\|E_2\|_{\mathrm{op}} \geq d^{-1/2} \delta^{-0.5} ) &\leq \Pr(\mathrm{tr}(E_2^4) \geq d^{-2} \delta^{-2} )\\
    & \leq \Ep[\mathrm{tr}(E_2^4)]d^{2} \delta^{2} \\
    & \leq C \delta^2 + \eta_d \\
    & \lesssim \delta + \eta_d.
\end{align}
The second inequality follows the Chevyshev inequality, and the third inequality follows the bound $\mathrm{tr}(E_2^4) \leq ( C d^{-2} + \eta_d$) from the proof of page 25 in \cite{el2010spectrum}.
Namely, in the RBF case, off-diagonal terms of the kernel Gram matrix appear in addition to the on-diagonal term $O(d^{-1/2})$, and \citet{el2010spectrum} shows that it has an order $O(\max_i |\psi_i|)$.
Here, we use $\max_i |\psi_i| = O (\|\psi\|_{\infty}) = O(\eta_d)$ and obtain the result.
\end{proof}

\begin{proof}[Proof of Theorem \ref{lem:gen_err}]
In this proof, we partially update the proof of Theorem 1 in \citet{liang2020just}.
Similarly to Theorem 1 in \cite{liang2020just}, we have the variance term $V_{d,N}^{(i)} $, the bias term $B_{d,N}^{(i)} $, and the residuals.
Here, the bias term of Theorem \ref{lem:gen_err} is identical to that of Theorem 1 of \cite{liang2020just}, since it does not depend on an explicit form of kernel functions.
Also, by Proposition \ref{prop:kernel_linear_approx}, the residual is guaranteed to be negligible.
Hence, we only discuss the variance term.
In preparation, using the terms in Definition \ref{def:parameters}, we remind a linearly approximated version of the kernel Gram matrices as \eqref{eq:kernel_lin_rbf} and its variant
\begin{align}
    K^{\mathrm{lin}}(\bfXi,x_j^{(i)}) := \alpha_d^{(i)} 1_d +  d^{-1}\beta_d^{(i)} \bfXi (x_j^{(i)})^\top.
\end{align}

Next, we study the difference of the matrices $\|K(\bfXi,\bfXi) - K^{\mathrm{lin}}(\bfXi,\bfXi)\|$. 
In the proof of Theorem 2 in \cite{liang2020just}, the variance term is asymptotically equivalent to $\Ep[\|K^{\mathrm{lin}}(\bfXi,\bfXi)^{-1} K^{\mathrm{lin}}(\bfXi,x_j^{(i)})\|^2]$ up to constants, by showing $\|K^{\mathrm{lin}}(\bfXi,\bfXi)^{-1} K^{\mathrm{lin}}(\bfXi,\bfXi)\|$ is bounded.
To show this equivalence by the proof, we need to derive a lower bound of the smallest eigenvalue of $K(\bfXi,\bfXi)$.
To the aim, let $\lambda_{\min}(A)$ be the smallest eigenvalue of a symmetric matrix $A$ and we have
\begin{align}
    \lambda_{\min}(K(\bfXi,\bfXi)) &= \lambda_{\min}(K^{\mathrm{lin}}(\bfXi,\bfXi) + K(\bfXi,\bfXi) - K^{\mathrm{lin}}(\bfXi,\bfXi)) \\
    &\geq \lambda_{\min}(K^{\mathrm{lin}}(\bfXi,\bfXi)) - \|K(\bfXi,\bfXi) - K^{\mathrm{lin}}(\bfXi,\bfXi)\| \\
    & \geq \lambda_{\min}(K^{\mathrm{lin}}(\bfXi,\bfXi)) - d^{-1/2} (\delta^{-1/2} +  \log^{-0.51} d) \\
    & \geq \gamma_d^{(i)} - 2 |h'(\tau_d^{(i)}) |\|\psi\| - 2 |h''(\tau_d^{(i)}) |\|\psi \circ \psi\| - d^{-1/2} (\delta^{-1/2} +  \log^{-0.51} d) \\
    & \geq \gamma_d^{(i)} - 2 |h'(\tau_d^{(i)}) |\eta_d^{(i)} - 2 |h''(\tau_d^{(i)}) |(\eta_d^{(i)})^2 - d^{-1/2} (\delta^{-1/2} +  \log^{-0.51} d) \\
    & \geq \gamma_d^{(i)} - o_P(1).
\end{align}
The second inequality follows Proposition \ref{prop:kernel_linear_approx}, and the third inequality follows the form of $K^{\mathrm{lin}}(\bfXi,\bfXi)$ in \eqref{eq:kernel_lin_rbf}.
The fourth inequality follows $\|\psi \circ \psi\| \leq \|\psi \circ \psi\|_1 = \|\psi\|^2 \leq (\eta_d^{(i)})^2$.
Hence, the smallest eigenvalues is lower bounded by $\gamma_d^{(i)}$ with sufficiently large $d$, hence we can state $\|K^{\mathrm{lin}}(\bfXi,\bfXi)^{-1} K^{\mathrm{lin}}(\bfXi,\bfXi)\|^2$ is bounded by the proof of \cite{liang2020just}.
Consequently, we can state that the variance is asymptotically bounded by  $\Ep[\|K^{\mathrm{lin}}(\bfXi,\bfXi)^{-1} K^{\mathrm{lin}}(\bfXi,x_j^{(i)})\|^2]$ up to constants.

The form is rewritten as
\begin{align}
    &\Ep[\|K^{\mathrm{lin}}(\bfXi,\bfXi)^{-1} K^{\mathrm{lin}}(\bfXi,x_j^{(i)})\|^2]\\
    &= \Ep \Bigl[ \left(\gamma_d^{(i)} I + \alpha_d^{(i)} 1_d 1_d^\top + d^{-1} \beta_d^{(i)} \bfXi (\bfXi)^\top + \rho_d^{(i)} 1_d^\top  + 1_d (\rho_d^{(i)})^\top + \zeta_d^{(i)} \right)^{-1}\\
    & \quad \times \left( d^{-1} \beta_d^{(i)} \bfXi x_j^{(i)} \right)\left( d^{-1} \beta_d^{(i)} \bfXi x_j^{(i)}\right)^\top \\
    & \quad \times \left(\gamma_d^{(i)} I + \alpha_d^{(i)} 1_d 1_d^\top + d^{-1} \beta_d^{(i)} \bfXi (\bfXi)^\top + \rho_d^{(i)} 1_d^\top  + 1_d (\rho_d^{(i)})^\top + \zeta_d^{(i)} \right)^{-1} \Bigr] \\
    & \leq \frac{\beta_d^{(i)}}{d} \sum_{j} \frac{\lambda_j(\beta_d^{(i)} d^{-1} \bfXi (\bfXi)^\top + \alpha_d^{(i)} 1_d 1_d^\top)}{[\gamma_d^{(i)} + \lambda_j (\alpha_d^{(i)} 1_d 1_d^\top + d^{-1} \beta_d^{(i)} \bfXi (\bfXi)^\top + \rho_d^{(i)} 1_d^\top  + 1_d (\rho_d^{(i)})^\top + \zeta_d^{(i)} )]^2}.
\end{align}
Then, we obtain the statement.
\end{proof}

\section{Additional Remarks of Existing Theory of Kernel Bandits}
\label{sec:summary_kb}
In this section, we summarize the relation between the existing regret analysis of kernel-based contextual bandit algorithms and ours. 
While we believe that our problem setting described in \secref{sec_prelim} is one of the standard settings as used in \citep{neu2024adversarial}, some existing works
consider the different setting about the underlying functions and contexts from ours.
For example, \citet{krause2011contextual} assumes that the underlying function is defined on the product of the arm and context domain, and the common context is generated (or fixed beforehand) for all arms at each time step. 
\citet{valko2013finite} considers a similar setting to ours. However, they assume that the underlying function is common for each arm instead of allowing an arbitrary sequence of context. To describe these different settings consistently, we focus on the setting where the underlying function $f_{\ast}^{(i)}$ is common for each arm; namely, we assume that there exists some function $f_{\ast}$
such that $\forall \in [K], f_{\ast}^{(i)} = f_{\ast}$ in this section hereafter. 
Furthermore, we also assume that the observation noise $\xi(t)$ is $\sigma$-sub-Gaussian random variable as with the setting of \citep{krause2011contextual,valko2013finite}, which is more restricted than our finite variance assumption.

\subsection{Information gain and regret}
Existing theoretical analysis of kernel bandit algorithms relies on the following kernel-dependent 
complexity parameter $\gamma_T$:
\begin{equation}
    \gamma_T = \frac{1}{2} \sup_{\bX \in \mX^{T}} \ln \det (\bI_T + \tau^{-1} K(\bX, \bX)).
\end{equation}
Here, $\tau > 0$ is any fixed positive constant. The quantity $\frac{1}{2} \ln \det (\bI_T + \tau^{-1} K(\bX, \bX))$ is the mutual information gain from the training input $\bX$ under Gaussian process modeling of $f$ with variance parameter $\tau$. For this reason, 
$\gamma_T$ is called maximum information gain (MIG)~\citep{srinivas2012information}.
The MIG is often used to quantify the regret of kernel bandits. For example, CGP-UCB~\citep{krause2011contextual} and Sup-kernel-UCB~\citep{valko2013finite} are known to suffer from $O(\gamma_T\sqrt{T})$ and $O\rbr{\sqrt{\gamma_T T}}$ cumulative regret, respectively. 
In several commonly used kernels, existing works show the upper bound of MIG.
For example, if the underlying kernel $K$ is the RBF class with 
fixed lengthscale parameter $\ell > 0$ (namely, $K(x, x') = \exp(-\|x - x'\|_2^2/\ell)$), the MIG have been shown to increase with $O(\ln^{d+1} T)$~\citep{srinivas2012information,vakili2021information} under $d = \Theta(1)$.
On the other hand, our work focuses on a regime in which $\ell = \Theta(d)$ (Definition~\ref{def:kernel_class}) and $d, T \rightarrow \infty$. 
Regarding the joint dependence of $\ell$, $d$, and $T$, \citet{berkenkamp2019no} shows 
$\gamma_T = O\rbr{\sbr{\ln T^{d+1}(\ln T)}^{d+1}/\ell^{d}}$ (see, Eq.~(24) in \citet{berkenkamp2019no})\footnote{Proposition~2 in \citet{berkenkamp2019no} claims $\gamma_T = O\rbr{(\ln^{d+1} T)/\ell^d}$ for $K(x, x') = \exp(-\|x - x'\|_2^2/\ell)$. However, note that this result is simplified by supposing $d = \Theta(1)$.}. From their result, we obtain $\gamma_T = O\rbr{\ln^{d+1} T}$, which increases super-linearly under any $d = \omega(1)$.
Therefore, in our assumptions, existing $O(\gamma_T\sqrt{T})$ and $O\rbr{\sqrt{\gamma_T T}}$ cumulative regret guarantees become vacuous even if we increase the lengthscale parameter $\ell$ with $d$.

\bibliographystyle{plainnat}
\bibliography{siwazaki_main}

\end{document}